\newtheorem{theorem}{Theorem}
\newtheorem{proposition}{Proposition}
\newtheorem{lemma}{Lemma}
\newtheorem{corollary}{Corollary}
\definecolor{dominant}{RGB}{200, 50, 50}  
\definecolor{good}{RGB}{50, 150, 50}      
\definecolor{expweight}{RGB}{180, 100, 40}  
\newcommand{\dom}[1]{\textcolor{dominant}{\mathbf{#1}}}  
\newcommand{\good}[1]{\textcolor{good}{#1}}              
\newcommand{\expw}[1]{\textcolor{expweight}{#1}}         
\newcommand{\bphi}{\bm{\phi}}
\newcommand{\bgamma}{\bm{\gamma}}
\newcommand{\bzeta}{\zeta}
\newcommand{\balpha}{\bm{\alpha}}
\newcommand{\blambda}{\bm{\lambda}}
\newcommand{\bpi}{\bm{\pi}}
\newcommand{\bPhi}{\bm{\Phi}}
\newcommand{\bx}{\bm{x}}
\newcommand{\bJ}{\bm{J}}
\newcommand{\R}{\mathbb{R}}
\newcommand{\bp}{\bm{p}}
\newcommand{\bs}{\bm{s}}
\newcommand{\fwda}{{\scriptscriptstyle\rightarrow}}  
\newcommand{\bwda}{{\scriptscriptstyle\leftarrow}}  
\newcommand{\bda}{{\scriptscriptstyle\leftrightarrow}}  
\newcommand{\by}{\bm{y}}
\newcommand{\btheta}{\bm{{\theta}}}
\newcommand{\bSigma}{\bm{\Sigma}}
\newcommand{\bmeta}{\bm{\eta}}
\newtheorem{remark}{Remark}
\newcommand{\bu}{\bm{u}}
\newcommand{\bmu}{\bm{\mu}}
\newcommand{\EL}{\mathrm{EL}}
\newcommand{\dt}{\mathrm{dt}}
\newcommand{\dd}{\mathrm{d}}
\newcommand{\dtheta}{\mathrm{d}_{\btheta}}
\newcommand{\defn}{:=}
\title{Generalizing Equilibrium Propagation to Lagrangian systems with arbitrary boundary conditions \\
\& equivalence with Hamiltonian Echo Learning}
\author[1,2]{Guillaume Pourcel}
\author[2]{Debabrota Basu\footnote{Equal authorship, listed in alphabetical order.}}
\author[3]{Maxence Ernoult$^{\ast, \blacklozenge}$}
\author[4,5]{Aditya Gilra$^{\ast}$}
\affil[1]{University of Groningen, Netherlands, \texttt{g.a.pourcel@rug.nl}} 
\affil[2]{Univ. Lille, Inria,   CNRS, Centrale Lille, UMR 9189 - CRIStAL, Lille, France, \texttt{debabrota.basu@inria.fr}} 
\affil[3]{\texttt{ernoult.m@gmail.com}}
\affil[4]{Centrum Wiskunde \& Informatica, Netherlands, \texttt{aditya.gilra@cwi.nl}} 
\affil[5]{The University of Sheffield, UK} 
\date{}
\begin{document}

\maketitle
\doparttoc
\faketableofcontents

\begin{abstract}

Equilibrium Propagation (EP) is a learning algorithm that applies to
Energy-Based Models (EBMs) on static inputs. 
It estimates loss gradients by contrasting two steady states of the same EBM, rather than resorting to explicit adjoint dynamics. EP originally appealed as a plausible learning theory for biological substrates and has more recently attracted interest for its amenability to analog hardware. Extending EP to time-varying inputs and outputs is a challenging problem,
 as the variational description must apply to the entire system trajectory rather than just its steady state.
 While the use of the action of a Lagrangian system as an energy function appears as a natural choice -- which we herein refer to as \emph{Lagrangian Equilibrium Propagation} (LEP) -- careful consideration of boundary conditions was largely overlooked in prior studies although it becomes essential. It is also unclear how applying LEP to Lagrangian systems theoretically relates to applying \emph{Hamiltonian Echo Learning} (HEL) algorithms -- \emph{i.e.} Hamiltonian Echo Backpropagation (HEB) 
 and Recurrent Hamiltonian Echo Learning (RHEL) -- to Hamiltonian systems.

In this work, we thoroughly revisit LEP and demonstrate that different learning algorithms can be obtained depending on the boundary conditions of the system, many of which are impractical to simulate -- \emph{e.g.} with a prohibitive memory or computational cost, or requiring explicit Jacobian computation. 
We also show that HEL algorithms, which are much easier to simulate, can be explicitly cast as a special case of LEP where the initial conditions can be picked arbitrarily. Building upon this connection enables the extension of LEP to a broader class of systems with dissipation terms. By filtering out intractable instantiations of LEP and building an explicit mapping between HEL and LEP algorithms, this work facilitates the simulation of self-learning Lagrangian systems as well as extensions of LEP to broader classes of physical systems. 

\end{abstract}

\section{Introduction}

\paragraph{The search for an alternative to backpropagation.}
Historically, feedforward networks alongside backpropagation have accidentally dominated the deep learning landscape over the last decade as the result of a ``hardware lottery''~\citep{hookerHardwareLottery2020}: algorithms fitting the best available hardware win.
Thanks to fine-grained CMOS-based compute primitives along with the development of hardware-agnostic compilation flows, digital hardware (\emph{e.g.} CPUs, GPUs, TPUs~\cite{jouppi2017datacenter}) provides the flexibility to execute any feedforward computational graph, including the exact implementation of backpropagation with the least amount of engineering. However this comes at the cost of digital overhead, complex memory hierarchies, and resulting data movement. In the short run, this motivates the search for ``IO--aware'' algorithms~\cite{dao2022flashattention} to mitigate High-Bandwidth Memory (HBM) accesses, quantization algorithms to further reduce the memory cost and computational cost of \emph{verbatim} backpropagation for on-device applications~\cite{lin2022device}, and many other approaches going beyond the scope of this paper. Yet, none of these approaches truly leverage the underlying low-power transistor physics. Instead, transistor circuits remain abstracted away into implementing huge boolean functions in a stateless, unidirectional and deterministic fashion, which entails a significant energy consumption \citep{aifer2025solving}.
In the longer run, a radically different approach is the search for higher-level \emph{analog} compute primitives, in particular, primitives for alternative learning and inference algorithms
grounded in the analog physics 
of the underlying hardware~\citep{jaegerFormalTheoryComputing2023, laydevantHardwareSoftware2024}.

An important direction of research to achieve this goal is 
the development of learning algorithms that unify inference and learning within a \emph{single} physical circuit~\citep{smolensky1986information, spall1992multivariate,fiete2007model,scellierEquilibriumPropagationBridging2017,gilraPredicting2017,ren2022scaling,hinton2022forward,lopez-pastorSelfLearningMachinesBased2023,dillavou2024machine}.
This challenge, which we herein motivate for alternative hardware design, historically originated from neurosciences:
biological neural networks face similar constraints, as ``non-local'' algorithms such as backpropagation 
are widely considered biologically implausible for training neural networks
~\citep{backprop, lillicrap2020backpropagation}. For instance, the strict implementation of backpropagation in biological systems would require a dedicated side network sharing parameters from the inference circuit to propagate error derivatives backward through 
the system, a problem coined weight transport~\citep{lillicrap2016random, akrout2019deep}. The search for 
backpropagation alternatives therefore holds promise 
for both providing insights into how the brain might learn~\cite{richards2019deep,pogodin2023synaptic} and designing 
energy-efficient analog hardware~\cite{momeni2024training}.


\paragraph{Equilibrium Propagation and its limitations.} \emph{Equilibrium Propagation} (EP)~\cite{scellierEquilibriumPropagationBridging2017} is a learning algorithm using a single circuit for inference and gradient computation and yielding an unbiased, variance-free gradient estimation -- which is in stark contrast with alternative approaches relying on the use of noise injection~\cite{smolensky1986information, spall1992multivariate,fiete2007model,ren2022scaling}. A fundamental requirement of EP is that the models that are used should be energy-based. Energy-based Models (EBMs) are models whose prediction is implicitly given as the minimum of some energy function. Therefore, EP falls under the umbrella of implicit learning algorithms such as implicit differentiation (ID)~\cite{bell2008algorithmic} which train implicit models~\cite{bai2019deep} to have steady states mapping static input--output pairs. EP is endowed with strong theoretical guarantees~\cite{scellier2019equivalence,ernoult2019updates} as it can be shown to be equivalent to a variant of ID called Recurrent Backpropagation~\cite{almeida1989backpropagation,pineda1989recurrent}.
While EP has been predominantly explored on Deep Hopfield Networks~\cite{rosenblatt1960perceptual,hopfield1982neural, scellierEquilibriumPropagationBridging2017,ernoult2019updates,laborieux2021scaling,laborieux2022holomorphic,scellier2023energy,nest2024towards}, the application of EP to resistive networks~\citep{kendall2020training,scellier2024fast} has ushered in an exciting direction of research for learning algorithms amenable to analog hardware, with projected energy savings of four orders of magnitude~\cite{yi2023activity}. 
Beyond the single-circuit property, EP naturally yields \emph{local} learning rules whenever the energy is sum-separable~\citep{scellier2023energy}, and can be made agnostic to the underlying physics~\citep{scellierAgnosticPhysicsDrivenDeep2022}. Hopfield-inspired models further give rise to local Hebbian learning rules~\citep{scellierEquilibriumPropagationBridging2017}. These properties resonate with neuroscience, where the same neural circuitry appears to be involved in both inference and learning~\citep{songInferringNeuralActivity2024, aceitunoTargetLearningRather2024}.

Yet, a major conundrum is to extend EP to \emph{time-varying inputs and outputs}. One straightforward approach would be to consider well-crafted EBMs which adiabatically evolve with incoming inputs -- \emph{i.e.} at each time step, the system settles to equilibrium under the influence of the current input and of the steady state reached under the previous input. Such EBMs would formally fall under the umbrella of Feedforward-tied EBMs~\citep{nest2024towards}, which read as feedforward composition of EBM blocks and are reminiscent of fast associative memory models~\cite{ba2016using}. However, this approach is tied to a very specific class of models, would be costly to simulate (\emph{i.e.} computing a steady state for each incoming input) and would be memory intensive (\emph{i.e.} it would require storing the whole sequence of steady states and traversing them backwards for EP-based gradient estimation). 
A more general approach to extend EP to the temporal realm is to instead consider dissipation-free systems and pick their \emph{action} as an energy function~\citep{scellierDeepLearningTheory2021, kendall2021gradient}, which we herein refer to as \emph{Lagrangian-based EP} (LEP).
In the LEP setup, the energy minimizer is no longer a steady state alone but \emph{the whole physical trajectory}. Crucially, both~\citep{scellierDeepLearningTheory2021} and~\citep{kendall2021gradient} implicitly assumed boundary-value-problem conditions---\emph{i.e.} vanishing variations at both endpoints---yet neither study provided a practical algorithm nor implementation, raising the question of how feasible this assumption actually is. 
More broadly, existing LEP proposals remain theoretical and did not lead to any practical algorithmic prescriptions, which we diagnose as due to the need to \emph{carefully handle boundary conditions} arising in the underlying variational problem. 
This limitation raises our first key question:
\begin{center}
    \textit{Can EP be generalised to design efficient and practically-implementable\\ learning algorithms for time-varying inputs and outputs?}
\end{center}



\paragraph{Hamiltonian-based approaches.} 
In parallel to EP research, learning algorithms grounded in \emph{reversible} 
Hamiltonian dynamics have emerged as another promising direction of research. 
One such algorithm, Hamiltonian Echo Backpropagation (HEB,~\citep{lopez-pastorSelfLearningMachinesBased2023}), was developed 
with theoretical physics tools to train the initial conditions of physical systems 
governed by field equations for static input-output mappings. 
More recently, Recurrent Hamiltonian Echo Learning (RHEL) was introduced as a generalization of HEB 
to time-varying inputs and outputs~\citep{rhel}. 
Like EP, these Hamiltonian-based approaches, which we herein label as \textit{Hamiltonian Echo Learning (HEL)} algorithms, enable a single physical system 
to perform both inference and learning whilst maintaining the theoretical 
equivalence to BPTT.
Interestingly, HEL methods were also independently found to yield local Hebbian learning rules \citep{dauphinRecurrentHamiltonianEcho2025}, and to lend themselves to be agnostic to the underlying physics~\citep{rhel}.
Since HEL algorithms originate from a different formalism than that of LEP,
this motivates our second key question:
\begin{center}
\textit{How do HEL algorithms relate to LEP?}
\end{center}

In this paper,
we address both questions through a theoretical analysis that reveals the connection between these approaches. Our contributions are organized as follows:

\begin{itemize}
    \item We revisit 
          \emph{Lagrangian Equilibrium Propagation} (LEP), which extends the
          variational formulation of EP to temporal trajectories (Section~\ref{subsec:lagrangian-ep}). Our formulation generalizes previous studies \citep{scellierDeepLearningTheory2021,kendall2021gradient} by carefully analyzing the effect of different boundary conditions, explicitly treating both the boundary-value assumption of prior work (CBPVP, Section~\ref{subsec:cfvp}) and the initial-value alternative (CIVP, Section~\ref{subsec:civp}).
    \item We show that the boundary-value formulation (CBPVP) assumed by prior work eliminates boundary residuals from the learning rule but requires an expensive non-causal iterative solver whose cost dominates the overall complexity (Section~\ref{subsec:instantiation}). We then show that the natural causal alternative (CIVP), which restores forward Euler-Lagrange integration, introduces intractable boundary residual terms. Neither formulation leads to a practical algorithm on its own.
    \item We demonstrate that \emph{RHEL can be derived as a special case of LEP} by constructing an associated reversible Lagrangian system with carefully chosen boundary conditions (PFVP) that eliminate the problematic residual terms while preserving causal forward integration---yielding a \emph{first practical implementation of LEP}. Further, we establish the mathematical equivalence of RHEL and LEP through the Legendre transformation (Section~\ref{sec:rhel_is_ep}). We empirically validate this equivalence with a numerical analysis comparing the gradient estimates obtained by LEP and RHEL (Section~\ref{subsec:empirical-validation}).
    \item Finally, we directly leverage the connection between RHEL and LEP to come up with a variant of LEP that applies to dissipative Lagrangians which we call \emph{Dissipative LEP} (Section~\ref{subsec:dissipative_lep}). Provided that the sign of the dissipation term in the dynamics of the system can be arbitrarily controlled (\emph{i.e.} sinking or pumping energy into the system), we empirically show that gradients can be correctly estimated.
\end{itemize}




\section{Preliminaries and problem formulation}

\subsection{The learning problem: supervised learning with time-varying input}
\label{subsec:learning_problem}
We consider the supervised learning problem, where the goal is to predict a target trajectory $\by(t) \in \mathbb{R}^{d_{\by}} $ given an input trajectory $\bx(t) \in \mathbb{R}^{d_{\bx}}$ over a continuous time interval $t \in [0,T]$. The model is parameterised by $\btheta \in \mathbb{R}^{d_{\btheta}}$ and produces predictions through a continuous state trajectory ${\bs}_t(\btheta) \in \mathbb{R}^{d_{\bs}}$ that evolves over time according to the system dynamics. In the context of continuous time systems, the state-trajectory is typically defined as the solution of an Ordinary Differential Equation (ODE). 


The learning objective is to minimize a cost functional $C[\bs(\btheta, \bx), \by]$ that measures the discrepancy between the produced trajectory and the target. Formally,
\begin{equation*}
 C[\bs(\btheta,\bx), \by] \defn \int_0^T c(\bs_t(\btheta, \bx_t), \by_t) \dt\,,
\end{equation*}
where $c(\cdot, \cdot): \mathbb{R}^{d_{\bs}} \times \mathbb{R}^{d_{\by}} \rightarrow \mathbb{R}$ is an instantaneous cost function that evaluates the prediction error at time $t$ and $\bs(\btheta, \bx) \defn \{\bs_t({\btheta},\bx_t) : t \in [0,T]\} $ represents the entire trajectory. Commonly, $c$ takes the form of an $\ell_2$ loss function, 
$c(\bs_t, \by_t) \defn \frac{1}{2}\|\bs_t^{\text{out}} - \by_t\|_2^2$, 
where $\bs_t^{\text{out}} \in \mathbb{R}^{d_{\by}}$ denotes an 
appropriately selected subset of state variables. More generally, $c$ can be 
any differentiable function that quantifies the instantaneous 
prediction error. 

The parameters ${\btheta}$ are optimised to minimise the cost functional $C[\bs(\btheta, \bx), \by]$. 
One popular approach to solve this minimisation problem is to use gradient descent-type optimisation algorithms.
Modern machine learning owes much of its success to the generality and scalability of gradient-based optimization.
This requires computing the gradient of the learning objective with respect to the parameters ${\btheta}$.  
While several methods have been proposed to compute this gradient, most rely on explicit backward passes through computational graphs~\citep{backprop,lecun2015deep}, making them unsuitable for analog hardware implementations or plausible explanations for biological learning.

This limitation has motivated the development of alternative learning paradigms. Among the existing approaches, the \textit{Equilibrium Propagation} (EP,~\citep{scellierEquilibriumPropagationBridging2017}) framework stands out as a particularly promising one for designing a single system that can perform inference and learning.

\subsection{A primer on Lagrangian and Hamiltonian models}
\label{subsec:lagrangian_hamiltonian}

In this paper, the learning algorithms considered are constraining the kind of trajectories that can be used. In particular, we will only consider state trajectories $\bs_t(\btheta)$ that arise from Lagrangian and Hamiltonian dynamics. Both Hamiltonian and Lagrangian dynamics provide frameworks for formulating specific dynamical systems using a scalar-valued function: the Lagrangian or the Hamiltonian, defined as follows:
\begin{itemize}
    \item The Lagrangian $\mathcal{L}(\bs, \dot{\bs}, \bx, \btheta): \mathbb{R}^{d_{\bs}} \times \mathbb{R}^{d_{\bs}} \times \mathbb{R}^{d_{\bx}} \times \mathbb{R}^{d_{\btheta}} \rightarrow \mathbb{R}$ is a function of the state $\bs$, its time derivative $\dot{\bs}$ (velocity), the input $\bx$, and parameters $\btheta$. The dynamics are then defined by the Euler-Lagrange equations:
    \begin{equation*}
        d_t \partial_{\dot{\bs}} \mathcal{L} - \partial_{\bs} \mathcal{L} = 0\,.
    \end{equation*}
    
    \item The Hamiltonian $H(\bs, \bp, \bx, \btheta): \mathbb{R}^{d_{\bs}} \times \mathbb{R}^{d_{\bs}} \times \mathbb{R}^{d_{\bx}} \times \mathbb{R}^{d_{\btheta}} \rightarrow \mathbb{R}$ is a function of the position $\bs$, momentum $\bp$, the input $\bx$, and parameters $\btheta$. The dynamics are defined by Hamilton's equations:
    \begin{equation*}
        \begin{pmatrix} d_t \bs \\ d_t \bp \end{pmatrix} = \bJ \begin{pmatrix} \partial_{\bs} H \\ \partial_{\bp} H \end{pmatrix}\,,
    \end{equation*}
    where $\bJ = \begin{bmatrix} \bm{0} & \bm{I} \\ -\bm{I} & \bm{0} \end{bmatrix}$ is the canonical symplectic matrix.
\end{itemize}



\paragraph{Toy example: Driven coupled harmonic oscillators (3 masses).}
A simple physical system that can be expressed in both Lagrangian and Hamiltonian form
is a set of three coupled harmonic oscillators, depicted in Figure \ref{fig:coupled_oscillators}.
Let $\bs=(s_1,s_2,s_3)^\top$ be the displacements and $\bp=(p_1,p_2,p_3)^\top$ the momenta,
with mass vector $\bm{m}=(m_1,m_2,m_3)^\top$ where $m_i>0$, per-mass spring constants $k_i\ge0$,
and pairwise spring couplings $k_{ij}=k_{ji}\ge0$.
An external input $x(t)$ acts on the first mass (the output is $y(t)=s_3(t)$).
The learnable parameters are $\btheta = (m_1, m_2, m_3, k_1, k_2, k_3, k_{12}, k_{13}, k_{23})^\top$.

The system is described by the Hamiltonian
\begin{equation*}
H(\bs,\bp,x,\btheta)=\frac{1}{2}(\bm{m}^{-1} \odot \bp)^\top \bp
+\frac{1}{2}\sum_{i=1}^3 k_i s_i^2
+\frac{1}{2}\sum_{i<j} k_{ij}(s_j-s_i)^2
+s_1\,x\,,
\end{equation*}
and equivalently by the Lagrangian
\begin{equation*}
\mathcal{L}(\bs,\dot{\bs},x,\btheta)=\frac{1}{2} (\bm{m} \odot \dot{\bs})^\top \dot{\bs}
-\frac{1}{2}\sum_{i=1}^3 k_i s_i^2
-\frac{1}{2}\sum_{i<j} k_{ij}(s_j-s_i)^2
-s_1\,x\,.
\end{equation*}
Both formulations lead to the same second-order dynamics:
\begin{equation*}
\bm{m} \odot \ddot{\bs}+K \bs=-x\,e_1\,,
\end{equation*}
where $\odot$ denotes element-wise multiplication (Hadamard product), the stiffness matrix $K$ has
$K_{ii}=k_i+\sum_{j\ne i}k_{ij}$ and $K_{ij}=-k_{ij}$ for $i\ne j$,
and $e_1=(1,0,0)^\top$ is the first canonical basis vector selecting the first mass (the driven coordinate).

\begin{figure}[t!]
\centering
\includegraphics[width=0.8\textwidth,trim={0 1.6cm 4.5cm 0},clip]{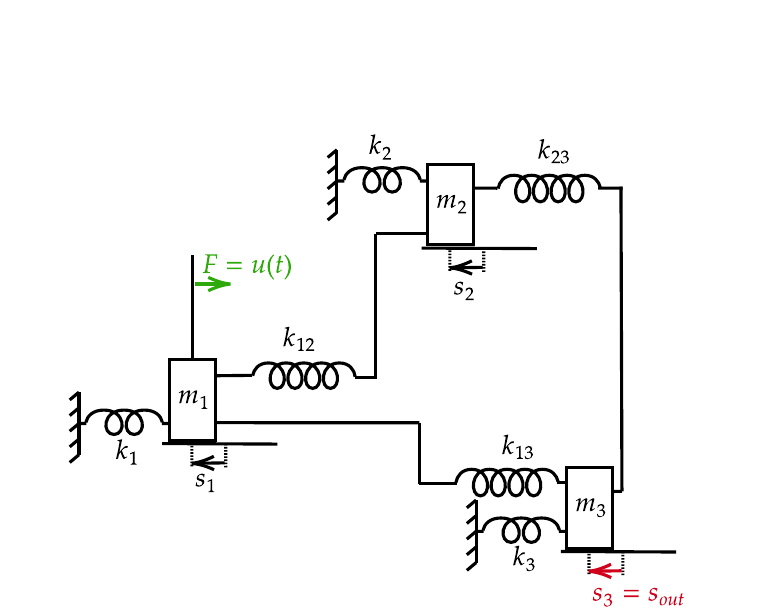}
\caption{Driven coupled harmonic oscillators: A system of three masses connected by springs with an external input $x(t)$ acting on the first mass and output $s_{out}(t) = s_3(t)$ measured from the third mass. The system dynamics $M\ddot{\bs} + K\bs = -x(t)e_1$ can be equivalently described through either a Hamiltonian $H(\bs,\bp,t)$ or Lagrangian $L(\bs,\dot{\bs},t)$ formulation, as detailed in the text above.}
\label{fig:coupled_oscillators}
\end{figure}

\paragraph{Machine learning examples.}

Lagrangian and Hamiltonian formulations are widely used in physics, and correspond to a broad class of physical systems. Recently, they have been applied to machine learning and neuroscience. In machine learning, they have been used to design RNNs with desirable vanishing or exploding gradient properties (UniCORNN,~\citep{ruschUnICORNNRecurrentModel2021}), and to design efficient modern State Space Model (SSM) architectures (LinOSS,~\cite{ruschOscillatoryStateSpaceModels2025}) -- see Table \ref{tab:ml_hamiltonians} for their Lagrangian and Hamiltonian formulations and dynamics. 

More generally, this research aligns with the renewed interest in RNNs as computationally efficient alternatives to Transformers, where state-based dynamical systems eliminate the quadratic cost of attention while maintaining comparable performance on long-range sequence tasks~\citep{orvietoResurrectingRecurrentNeural2023}. In neuroscience, it was proposed that Recurrent Hamiltonian Echo Learning (RHEL) could be implemented in a biologically plausible way via a Hamiltonian inspired by Hopfield energy functions~\citep{dauphinRecurrentHamiltonianEcho2025}.

\begin{table}[h]
\centering
\setlength{\tabcolsep}{5pt}
\renewcommand{\arraystretch}{2.2}
\resizebox{\textwidth}{!}{
\begin{tabular}{|l|c|c|c|c|}
\hline
\textbf{Model} & \textbf{Hamiltonian} ($H$) & \textbf{Lagrangian} ($L$) & \textbf{Dynamics} & \textbf{Constraint} \\[0.2cm]
\hline
\textbf{UniCORNN} \citep{ruschUnICORNNRecurrentModel2021} & 
$\begin{array}{@{}l@{}}\tfrac{1}{2}\|\bp\|^2 + \tfrac{\alpha}{2}\|\bs\|^2 \\ + \,\mathbf{1}^\top \bm{W}^{-1} \log\!\bigl(\cosh(\bm{W}\bs {+} \bm{B}\bx {+} \bm{b})\bigr)\end{array}$ 
& 
$\begin{array}{@{}l@{}}\tfrac{1}{2}\|\dot{\bs}\|^2 - \tfrac{\alpha}{2}\|\bs\|^2 \\ - \,\mathbf{1}^\top \bm{W}^{-1} \log\!\bigl(\cosh(\bm{W}\bs {+} \bm{B}\bx {+} \bm{b})\bigr)\end{array}$
& 
$\ddot{\bs} = \tanh(\bm{W}\bs {+} \bm{B}\bx {+} \bm{b}) - \alpha \bs$
& 
$\bm{W}$ diagonal
\\[0.2cm]
\hline
\textbf{LinOSS} \citep{ruschOscillatoryStateSpaceModels2025} & 
$\begin{array}{@{}l@{}}\tfrac{1}{2}\|\bp\|^2 + \tfrac{1}{2}\bs^\top \bm{W} \bs \\ - \,\bs^\top \bm{B} \bx\end{array}$ 
& 
$\begin{array}{@{}l@{}}\tfrac{1}{2}\|\dot{\bs}\|^2 - \tfrac{1}{2}\bs^\top \bm{W} \bs \\ + \,\bs^\top \bm{B} \bx\end{array}$
& 
$\ddot{\bs} = -\bm{W}\bs + \bm{B}\bx$
& 
$\bm{W}$ symmetric
\\[0.2cm]
\hline
\textbf{Hopfield} \citep{dauphinRecurrentHamiltonianEcho2025} & 
$\begin{array}{@{}l@{}}\tfrac{1}{2}\bp^\top \mathrm{diag}(\bm{\tau})^{-1} \bp {+} \bm{b}^\top \rho(\bs) \\ + \tfrac{1}{2}\rho(\bs)^\top \bm{W} \rho(\bs) {+} \rho(\bs)^\top \bm{B} \rho(\bx)\end{array}$ 
& 
$\begin{array}{@{}l@{}}\tfrac{1}{2}\dot{\bs}^\top \mathrm{diag}(\bm{\tau}) \dot{\bs} {-} \bm{b}^\top \rho(\bs)\\ - \tfrac{1}{2}\rho(\bs)^\top \bm{W} \rho(\bs) {-} \rho(\bs)^\top \bm{B} \rho(\bx) \end{array}$
& 
$\begin{array}{@{}l@{}}\mathrm{diag}(\bm{\tau})\ddot{\bs} = \\ \;-\rho'(\bs) \odot (\bm{W}\rho(\bs) {+} \bm{b} {+} \bm{B}\rho(\bx))\end{array}$
& 
$\bm{W}$ symmetric
\\[0.2cm]
\hline
\end{tabular}}
\caption{Machine learning models with Hamiltonian and Lagrangian formulations.}
\label{tab:ml_hamiltonians}
\end{table}

\subsection{Connecting Lagrangian and Hamiltonian Formulations via the Legendre Transform}

Hamiltonian and Lagrangian formalisms offer complementary perspectives on the same underlying dynamics. Each formalism possesses distinct mathematical structure that favors different proof techniques: the Hamiltonian framework, with its symplectic geometry and phase-space representation, naturally accommodates tools such as Green's functions~\citep{lopez-pastorSelfLearningMachinesBased2023} and adjoint methods~\citep{rhel}. These techniques proved instrumental in deriving HEL. Conversely, the Lagrangian framework foregrounds the variational structure of trajectories, which makes it particularly amenable to Equilibrium Propagation.

The Legendre transform provides a bridge between these two representations and allows the results established in one formalism to be translated into the other.
\begin{proposition}[Legendre transform]
\label{prop:Legendre_transform}
Let $(\bs_t,\dot{\bs}_t)\in \mathbb{R}^{d_{\bs}} \times \mathbb{R}^{d_{\bs}}$ and $(\bs_t,\bp_t)\in \mathbb{R}^{d_{\bs}} \times \mathbb{R}^{d_{\bs}}$ denote tuples of position--velocity
and position--momentum variables, respectively.  
The Legendre transform establishes a pointwise-in-time, locally invertible mapping between the Lagrangian and Hamiltonian representations, with $L, H \in C^2$:

\paragraph{(a) Forward transform ($L\!\rightarrow\!H$).}
\begin{equation*}
\bp_t = \partial_{\dot{\bs}} L(\bs_t,\dot{\bs}_t),
\qquad
H(\bs_t,\bp_t) = \bp_t^\top \dot{\bs}_t - L(\bs_t,\dot{\bs}_t),
\end{equation*}
which is well-defined whenever $\det(\partial^2_{\dot{\bs}, \dot{\bs}}L)\neq 0$.

\paragraph{(b) Backward transform ($H\!\rightarrow\!L$).}
\begin{equation*}
\dot{\bs}_t = \partial_{\bp} H(\bs_t,\bp_t),
\qquad
L(\bs_t,\dot{\bs}_t) = \bp_t^\top \dot{\bs}_t - H(\bs_t,\bp_t),
\end{equation*}
which is well-defined whenever $\det(\partial^2_{\bp, \bp}H)\neq 0$.

\paragraph{}  
Since the Hessians satisfy $\partial^2_{\bp, \bp}H = (\partial^2_{\dot{\bs}, \dot{\bs}}L)^{-1}$, the well-definiteness conditions are equivalent.
\end{proposition}

\textit{Note (Regularity and uniqueness of solutions).}
Since $L\in C^2$ and $\det(\partial^2_{\dot{\bs},\dot{\bs}}L)\neq 0$, the Euler--Lagrange equation can be rewritten as a first-order ODE whose right-hand side is locally Lipschitz, which guarantees uniqueness of solutions to initial value problems.
We invoke this uniqueness property without further comment in the sequel; see Remark~\ref{rmk:regularity} in Appendix for a detailed verification on the models of Table~\ref{tab:ml_hamiltonians}.




\section{Equilibrium Propagation: From static to time-varying input}
\label{sec:ep}
In this paper, we refer to the EP framework
as a general recipe   to design learning algorithms, where the model to be trained admits a variational description~\cite{scellierDeepLearningTheory2021}. 
The core mechanics underpinning EP are fundamentally \emph{contrastive}: EP proceeds by solving two related variational problems:
\begin{itemize}
    \item the \emph{free} problem, which defines model inference, \emph{i.e.} the ``forward pass'' of the model to be trained,
    \item the \emph{nudged} problem, which is a perturbation of the free problem with infinitesimally lower prediction error controlled by some nudging parameter. 
\end{itemize}


Therefore, EP mechanics perform two relaxations to equilibrium, \emph{e.g.} two ``forward passes'', to estimate gradients
without requiring explicit backward passes through the computational graph.

\subsection{
EP: Variational principle in \emph{vector} space}
\label{subsec:energy-ep}
In the original formulation of EP, the nudged problem 
is defined \emph{via} an augmented energy function that linearly combines an energy function with the learning cost function:
\begin{align*}
    E_\beta({\bs}, {\btheta}, \bx_0, \by_0) \defn E({\bs}, {\btheta}, \bx_0) + \beta C({\bs}, \by_0)\,.
\end{align*}
Here, $E({\bs}, \btheta, \bx_0)$ is the energy function, i.e. a scalar-valued function that takes as input a state vector ${\bs} \in \mathbb{R}^{d_{\bs}}$, a learnable parameter vector $\btheta$, and 
a \textbf{static} input $\bx_0 \in \mathbb{R}^{d_{\bx}}$. The cost function $C({\bs}, \by_0)$ in this setup takes as input a \textbf{static} output target $\by_0 \in \mathbb{R}^{d_y}$ and the static state vector. The nudging parameter $\beta \in \mathbb{R}$ controls the influence of the cost on the augmented energy. This augmented energy defines a vector-valued implicit function $(\btheta, \beta) \mapsto {\bs}^\beta(\btheta)$\footnote{For notational simplicity, we omit the explicit dependence of the implicit function $(\btheta, \beta) \mapsto {\bs}^\beta(\btheta)$ on $\bx_0$ and $\by_0$, as we consider the gradient computation for a fixed input-target pair.} through the nudged variational problem. Specifically, it is minimised at
\begin{align*}
    \partial_{{\bs}} E_\beta({\bs}, \btheta, \bx_0, \by_0) = \mathbf{0}\,.
\end{align*}
The model used for the machine learning task is the implicit function $\btheta \mapsto {\bs}^{0}(\btheta)$ defined by the free variational problem $\partial_{{\bs}} E({\bs}, \btheta, \bx_0) = \mathbf{0}$, and the learning objective is to minimize the cost $C({\bs}^{0}(\btheta), \by_0)$ by finding the gradient $\dd_{\btheta} C({\bs}^{0}(\btheta), \by_0)$. The fundamental result of EP
states that this gradient 
can be computed using \citep{scellierDeepLearningTheory2021}
\begin{align}
    \dd_{\btheta} C({\bs}^{0}(\btheta), \by_0) = \lim_{\beta \to 0} \frac{1}{\beta}
    \left[ \partial_{\btheta} E_{\beta}({\bs}^{\beta}(\btheta), \btheta, \bx_0) - 
    \partial_{\btheta} E_0({\bs}^{0}(\btheta), \btheta, \bx_0) \right]\,.
    \label{eq:energy_ep_lr}
\end{align}
This suggests a two-phase procedure for gradient estimation via a finite difference method, illustrated in 
Figure~\ref{fig:ep-learning}A: 
\begin{enumerate}
    \item \textbf{Free phase:} Compute the output value of the implicit function ${\bs}^{0}(\btheta)$ (black cross $\boldsymbol{\times}$) by finding a minimum of the energy function $E({\bs}, \btheta, \bx_0)$ (black curve).
    \item \textbf{Nudged phase:} Compute the output value of the implicit function ${\bs}^{\beta}(\btheta)$ ({\color{blue} blue dot}) for a small positive value of $\beta$ by finding a slightly perturbed minimum of the augmented energy $E_\beta({\bs}, \btheta, \bx_0, \by_0)$ ({\color{blue} blue curve}).
\end{enumerate}
Note that multiple nudged phases with opposite nudging strength ($\pm\beta$) may be needed to reduce the bias of EP-based gradient estimation~\cite{laborieux2021scaling}. 
In practice, these implicit function values 
can be found with a properly chosen root finding algorithm. As done in many past works~\cite{scellierEquilibriumPropagationBridging2017,meulemans2022least}, we pick gradient descent dynamics over the energy function as an example here. 
Simple fixed-point iteration~\cite{laborieux2021scaling,laborieux2022holomorphic,scellier2023energy} or coordinate descent~\cite{scellier2024fast} may also be used depending on the models in use. 
In the free phase ($\beta = 0$), the system evolves according to (Figure~\ref{fig:ep-learning}B, black curve):
\begin{align}
    d_t {\bs}_t = -\partial_{{\bs}} E({\bs}_t, \btheta, \bx_0)\,, \label{eq:energy-ep-gd-free}
\end{align} 
until convergence to ${\bs}^{0}(\btheta)$, i.e., $\lim_{t \to \infty} {\bs}_t = {\bs}^{0}(\btheta)$. This temporal evolution is shown as 
the black curve in Figure~\ref{fig:ep-learning}B. In the nudged phase ($\beta > 0$), starting from the free equilibrium, 
the system follows (Figure~\ref{fig:ep-learning}B, {\color{blue}blue dotted curve}):
\begin{align}
    d_t {\bs}_t = -\partial_{{\bs}} E({\bs}_t, \btheta, \bx_0) - 
    \beta \partial_{{\bs}} C({\bs}_t, \by_0)\,,
    \label{eq:energy-ep-gd-nudged}
\end{align}
until convergence to ${\bs}^{\beta}(\btheta)$, i.e., $\lim_{t \to \infty} {\bs}_t = {\bs}^{\beta}(\btheta)$. The corresponding dynamical 
trajectory is depicted as the blue dotted curve in Figure~\ref{fig:ep-learning}B. Importantly, the gradient descent dynamics in Equation~\eqref{eq:energy-ep-gd-free} and~\eqref{eq:energy-ep-gd-nudged} are \emph{neither physical\footnote{i.e., the physical system does not need to implement gradient-descent dynamics explicitly; it only has to find a minimum of the energy landscape.}}, nor explicitly trained to match a target trajectory. As mentioned earlier, they serve as a computational tool to reach the 
solution of the variational problem. Because of these dynamics, the solutions of these variational problems are often called ``equilibrium states'' or ``fixed points''. The model corresponds to the free equilibrium, while the contrast between 
the nudged and free equilibria provides the necessary information to compute gradients through Equation~\eqref{eq:energy_ep_lr}.

\begin{figure}[t!]
    \centering
\includegraphics[width=1.0\textwidth]{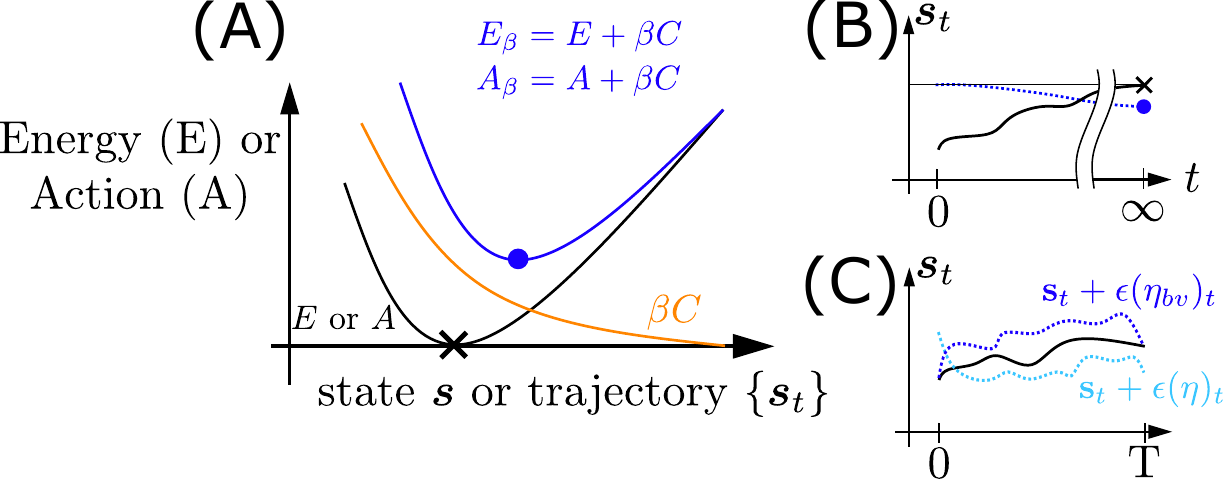}
    \caption{{\bfseries (A) EP trains variational systems.}  
    EP can train models that admit a variational 
        description, whether as a state vector ${\bs}$ or entire state trajectories (function of time) 
        $\{\bs_t\}$ that extremize a scalar function $E$ (or functional $A$), 
        represented by the black cross. To train these models to minimize a loss 
        $C$ ({\color{orange}orange curve}), one computes an extremum of the augmented energy 
        $E_\beta$ (or action $A_\beta$) represented by the blue curve. 
        These two variational objects enable efficient gradient computation, 
        leading to an improved energy (action) after a gradient update (dotted line). \textbf{Important caveat:} for the functional version, the trajectory is a stationary point (not necessarily a minimum) of the action; it is a true minimum only when considering \emph{boundary-vanishing variations} (see Panel C).
        For general variations, additional boundary terms appear in the gradient computation -- adapted from~\citep{ernoultRethinkingBiologicallyInspired2020}. {\bfseries (B) EP.} The free phase (black curve) and nudged phase (dotted {\color{blue}blue curve}) consist of 
        gradient descent on the energy and augmented energy, respectively. 
        These phases run sequentially, the nudged phase starts from the end-state of the free phase, and the learning rule uses only the 
        states at convergence. The trained model corresponds to the free 
        equilibrium state ({\color{black}black cross}). {\bfseries (C) LEP.} The free phase (black curve) corresponds to a trajectory satisfying the Euler-Lagrange 
        equations, while the nudged phase follows the Euler-Lagrange equation associated with the augmented action (dotted curve). For boundary-vanishing variation $\bs_t + \epsilon (\bmeta_{bv})_t$ ({\color{blue}dotted blue curve}), the gradient estimator is easy to compute because the boundary residual vanishes, but the boundary conditions are non-causal (both endpoints are constrained). On the contrary, for causal boundary conditions $\bs_t + \epsilon \bmeta_t$ ({\color{cyan}dotted cyan curve}), trajectories can be efficiently computed via forward integration, but the gradient estimator has boundary residuals that can be hard to compute.}
    \label{fig:ep-learning}
\end{figure}

\paragraph{Limitations.} The fact that we are only training the fixed point of the system highlights a major limitation of EP. It can only be used to train static input-output mappings (from $\bx_0$ to $\by_0$). More precisely, the equilibrium 
state defined by Equation~\eqref{eq:energy-ep-gd-free} 
represents a \emph{time-independent} configuration that encodes an implicit function ${\btheta} \mapsto {\bs}^0({\btheta})$ with static vector input $\bx_0$ and \emph{static} vector output $\by_0$. This fundamental constraint arises because energy function $E({\bs}, \btheta, \bx_0)$ is applied only to instantaneous states rather than temporal trajectories. 

A challenge lies in extending the variational principle underlying the framework of EP from vector spaces (where a single state ${\bs}$ is described variationally) to \emph{functional spaces}, where entire trajectories  $\{\bs_t : t \in [0,T]\}$ are described by a variational principle. Such an extension requires moving from energy functions defined on state vectors to an energy-like quantity defined on complete trajectories.

\subsection{\emph{Lagrangian EP}: variational principle in \emph{functional} space}
\label{subsec:lagrangian-ep}
Now, we generalise EP to describe entire trajectories through 
a variational problem, enabling us to train dynamical systems that map 
time-varying inputs to time-varying outputs. We refer to this extension as 
\emph{Lagrangian EP} (LEP). 
To achieve this extension, we revisit
the concept of augmented energy 
$E_\beta$ to an \emph{augmented action functional} $A_\beta$ that integrates 
over a time-varying ``energy-like'' 
quantity called the \emph{Lagrangian} $L_0$ \citep{scellierDeepLearningTheory2021}:
\begin{align}
    \label{eqdef:L_beta}
    \underbrace{A_\beta[\bs, \btheta, \bx, \by]}_{\text{augmented action }} &\defn \int_0^T \underbrace{(L_{0}(\bs_t, \dot{\bs}_t, \btheta, \bx_t)
    + \beta~c(\bs_t, \by_t))}_{L_{\beta}(\bs_t, \dot{\bs}_t, \btheta, \bx_t, \by_t)} \dt \\
    &= \underbrace{\int_0^T L_{{0}}(\bs_t, \dot{\bs}_t, \btheta, \bx_t) \dt}_{\defn \text{ action }A[\bs, \btheta, \bx]}\,~ \notag
     +~~ \beta \underbrace{\int_0^T c(\bs_t, \by_t)\dt}_{\defn \text{ cost }C[\bs,\by]}\,.
\end{align}
Here $A[\bs,\btheta, \bx]$ is a functional that serves as the temporal counterpart
of the 
energy function $E$, operating on entire trajectories\footnote{Note that we don't have to write $\dot{\bs}$ as input of the action $A$, because it can be derived from $\bs$ via the time derivative transformation}. It integrates the 
Lagrangian $L_{0}(\bs_t, \dot{\bs}_t, \btheta, \bx_t)$ over time, where 
the Lagrangian takes as input the state $\bs_t$, its temporal derivative (velocity) $\dot{\bs}_t$, and the time-varying input $\bx_t$. 

The augmented action functional $A_\beta$ is the temporal analogue of $E_\beta$. 
It integrates the augmented Lagrangian $L_{\beta}$ that extends the Lagrangian by including an additional nudging term 
$\beta c(\bs_t, \by_t)$. The augmented action functional $A_\beta[\bs]$ maps 
a trajectory $\bs \defn \{\bs_t : t \in [0,T]\}$ to a scalar value, 
generalizing the scalar-valued energy functions of EP to 
functional-valued quantities that capture temporal dynamics. For notational simplicity, we omit the dependence on inputs 
$\bx$ and targets $\by$ (or their time-indexed versions $\bx_t$ and $\by_t$) 
whenever the context is clear. 

\paragraph{Variational formulation and functional derivatives.}
The action functional enables us to define the variational problems that 
generalize EP to the temporal domain. Following standard variational calculus~\citep{olverCalculusVariations2022}, we define the \emph{functional derivative} (or \emph{variational derivative}) $\delta_{\bs} A_\beta$ through the directional derivative with respect to trajectory variations $\bmeta \defn \{\bmeta_t : t \in [0,T]\}$:
\begin{align*}
    \delta_{\bs} A_\beta \cdot \bmeta \defn \left. d_\epsilon \right|_{\epsilon=0} A_\beta[\bs + \epsilon \bmeta]\,,
\end{align*}
where $\delta_{\bs} A_\beta$ denotes the functional gradient with respect to the trajectory and $\cdot$ denotes the standard $L^2$ inner product on function space, i.e., $\bmeta \cdot \bmeta' := \int_0^T (\bmeta_t)^\top (\bmeta_t') \, \dt$.
With this notation, the \emph{nudged variational problem} is
\begin{align*}
    \delta_{\bs} A_\beta = 0 \quad \Leftrightarrow \quad \delta_{\bs} A_\beta \cdot \bmeta = 0 \text{ for all smooth variations } \bmeta\ \quad \text{s.t.} \quad \bmeta_0=\bmeta_T=0\,.
\end{align*}
In particular, for $\beta=0$, the \emph{free variational 
problem} is defined as $\delta_{\mathbf{s}} A_0 = 0$, corresponding to the 
system's natural dynamics without nudging. Unlike EP, where the variational problems are typically solved 
through gradient descent dynamics, these functional variational problems can 
be solved more directly using the Euler-Lagrange equations. The corresponding Euler-Lagrange expression is defined as
\begin{align*}
    \EL(t, \btheta,\beta) &:= \partial_{\bs} L_{\beta}(\bs_t, \dot{\bs}_t,\btheta) 
    - d_t\partial_{\dot{\bs}} L_{\beta}(\bs_t, \dot{\bs}_t, \btheta) \\
    &= \partial_{\bs} L_{0}(\bs_t, \dot{\bs}_t, \btheta) 
    - d_t\partial_{\dot{\bs}} L_0(\bs_t, \dot{\bs}_t, \btheta) 
    + \beta \partial_{\bs}c(\bs_t)\,.
\end{align*}
The following classic result, namely the principle of stationary action~\citep{olverCalculusVariations2022}, generalized to arbitrary boundary conditions, establishes the fundamental connection between the variational 
formulation and the Euler-Lagrange equation.
\begin{lemma}[Euler-Lagrange solutions and the action functional~\citep{olverCalculusVariations2022}]
\label{lemma:euler-lagrange}
Let $\mathbf{s}^\beta({\btheta}) \defn \{\mathbf{s}_t^\beta({\btheta}) : t \in [0,T]\}$ be a trajectory solution of the Euler-Lagrange equation $\EL(t, {\btheta}, \beta) = 0$ for 
all $t \in [0,T]$, and let $\bmeta \defn \{\bmeta_t : t \in [0,T]\}$ be any smooth variation. Then (see proof in Appendix~\ref{appx:sec:proof-generalized-lemma}):
\begin{enumerate}
    \item \textbf{Boundary-vanishing variations:} For any variation $\bmeta_{bv}$ that vanishes at the boundaries, i.e., $(\bmeta_{bv})_0 = (\bmeta_{bv})_T = \mathbf{0}$, $\mathbf{s}^\beta({\btheta})$ 
is a critical point of the action functional $A_\beta[\mathbf{s}]$:
\begin{align*}
    \delta_{\bs} A_\beta \cdot \bmeta_{bv} = \left. d_\epsilon \right|_{\epsilon=0} A_\beta[\mathbf{s}^\beta + \epsilon \bmeta_{bv}] = 0\,.
\end{align*}
    \item \textbf{General formula for arbitrary variations:} For an arbitrary variation $\bmeta$ (not necessarily vanishing at the boundaries), the directional derivative of the action is given by:
\begin{align}
    \delta_{\bs} A_\beta \cdot \bmeta = \left. d_\epsilon \right|_{\epsilon=0} A_\beta[\mathbf{s}^\beta + \epsilon \bmeta] = \left[\bmeta_t^\top \partial_{\dot{\bs}} L_\beta(\bs_t^\beta, \dot{\bs}_t^\beta, \btheta)\right]_0^T\,.
    \label{eq:generalized-variation}
\end{align}
When $\bmeta_0 \neq \mathbf{0}$ or $\bmeta_T \neq \mathbf{0}$, $\mathbf{s}^\beta({\btheta})$ is not generally a critical point. The boundary terms must be handled separately depending on the specific boundary conditions imposed on the problem.
\end{enumerate}
\textit{Note (Parametric perturbations).}\label{remark:generalized-variation}
A similar result holds when the linear perturbation $\epsilon \bmeta$ is replaced by a general smooth parametric perturbation $\bmeta(\epsilon)$ with $\bmeta(0) = \mathbf{0}$: the variation $\bmeta$ in Eq.~\eqref{eq:generalized-variation} is simply replaced by $\left.\partial_\epsilon\right|_{\epsilon=0} \bmeta(\epsilon)$ (see proof in Appendix~\ref{appx:sec:proof-generalized-lemma}). This generalization is the result that will be used to prove our central result, Theorem~\ref{thm:general-ep}, where we evaluate $\beta$ and $\btheta$ perturbations of the trajectory $\bs^\beta(\btheta)$.
\end{lemma}

This principle establishes that Euler-Lagrange solutions correspond to critical points of the action functional for boundary-vanishing variations (Case 1). This variational property enables extending EP to temporal domains: instead of computing gradients through explicit differentiation, we can approximate them by contrasting two EL trajectories -- the free trajectory $\bs^0(\btheta)$ and the $\beta$-nudged trajectory $\bs^\beta(\btheta)$ -- analogous to the two phases in EP (Section~\ref{sec:ep}).

However, for arbitrary variations (Case 2), the nudged trajectory must satisfy the same boundary conditions as the free trajectory at both $t=0$ and $t=T$. This defines a two-point boundary value problem that cannot be solved by forward integration from initial conditions. We call boundary conditions that only constrain the initial state \emph{causal}, since they allow forward-in-time computation; conversely, boundary conditions that constrain both endpoints are \emph{non-causal}. Previous work~\citep{scellierDeepLearningTheory2021, kendall2021gradient} implicitly assumed non-causal boundary conditions, leaving this difficulty of satisfying them unaddressed. Relaxing the boundary conditions to causal ones permits efficient trajectory computation, but may introduce additional terms in the gradient formula -- see Theorem~\ref{thm:general-ep}.

To understand this tradeoff between causal trajectory computation and tractable gradient formulas, we derive LEP for \emph{arbitrary boundary conditions}. Theorem~\ref{thm:general-ep} provides our primary result: it explicitly characterizes both the learning rule and the boundary terms that arise for any choice of boundary conditions.

\begin{theorem}[LEP for arbitrary boundary conditions]\label{thm:general-ep}
    Let $t \mapsto \bs_t^\beta(\btheta)$ denote the solution to the 
    Euler-Lagrange equation $\EL(t, \btheta, \beta) = 0$ with arbitrary 
    boundary conditions. The gradient of the objective functional with 
    respect to $\btheta$ is given by (with all $\beta$-derivatives evaluated at $\beta=0$):
    \begin{align}
        d_{\btheta} C[\bs^0({\btheta})] 
        &= d_\beta \left(\int_0^T \partial_{\btheta} L_{\beta}\left( \bs_t^{\beta}, \dot{\bs}_t^{\beta}, \btheta\right) \dt \right)  
        \label{eq:lagrangian-ep-lr}\\
        &+ \underbrace{
            \left[ 
                \left(\partial_{\btheta} \bs_t^{0}\right)^\top  d_\beta\partial_{\dot{\bs}} L_{\beta}\left(\bs_t^{\beta}, \dot{\bs}_t^{\beta}, \btheta\right) 
                - \left(d_{\btheta}\partial_{\dot{\bs}} L_{0}\left(\bs_t^{0}, \dot{\bs}_t^{0}, \btheta\right)\right)^\top \partial_\beta \bs_t^{\beta}
            \right]_0^T
        }_{\text{boundary residuals}}\,.
        \label{eq:boundary_residuals}
    \end{align}
\end{theorem}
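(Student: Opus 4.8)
The plan is to mirror the contrastive logic of Energy-based EP, but lifted to function space: where the static proof rests on the equality of mixed partial derivatives of the augmented energy $E_\beta$ evaluated at its minimizer, the functional proof will rest on the equality of mixed partials of the augmented \emph{action} evaluated along its Euler--Lagrange solution. Concretely, I would introduce the reduced action $\mathcal{A}(\btheta,\beta) \defn A_\beta[\bs^\beta(\btheta),\btheta]$, i.e. the augmented action functional restricted to the solution trajectory $\bs^\beta(\btheta)$ of $\EL(\btheta,\beta)=0$, so that $\mathcal{A}$ is an ordinary scalar function of the two real arguments $\btheta$ and $\beta$ to which Clairaut's/Schwarz's theorem applies.

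The first substantive step is to compute the two first-order derivatives of $\mathcal{A}$ by an envelope-type argument. Differentiating under the integral sign and applying the chain rule produces, besides the explicit partial derivative of $L_\beta$, a term contracting the trajectory variation $\partial_\btheta \bs_t^\beta$ (resp. $\partial_\beta \bs_t^\beta$) against $\partial_{\bs} L_\beta$ and $\partial_{\dot{\bs}} L_\beta$. Integrating the velocity contribution by parts in time turns $\partial_{\dot{\bs}} L_\beta$ into $-\dd_t \partial_{\dot{\bs}} L_\beta$ plus an endpoint term, so the bulk integrand becomes exactly $\EL(\btheta,\beta)$, which vanishes by hypothesis. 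This leaves
\[
d_\btheta \mathcal{A} = \int_0^T \partial_\btheta L_\beta \,\dt + \big[(\partial_{\dot{\bs}} L_\beta)^\top \partial_\btheta \bs_t^\beta\big]_0^T,
\]
and, using $\partial_\beta L_\beta = c$, the analogous identity $d_\beta \mathcal{A} = C[\bs^\beta] + \big[(\partial_{\dot{\bs}} L_\beta)^\top \partial_\beta \bs_t^\beta\big]_0^T$.

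Next I would equate the mixed partials $d_\beta d_\btheta \mathcal{A} = d_\btheta d_\beta \mathcal{A}$. Differentiating the first identity in $\beta$, the second in $\btheta$, equating them, setting $\beta=0$, and isolating the target quantity yields
\[
d_\btheta C[\bs^0(\btheta)] = d_\beta\!\left(\int_0^T \partial_\btheta L_\beta \,\dt\right) + d_\beta\big[(\partial_{\dot{\bs}} L_\beta)^\top \partial_\btheta \bs_t^\beta\big]_0^T - d_\btheta \big[(\partial_{\dot{\bs}} L_0)^\top \partial_\beta \bs_t^\beta\big]_0^T.
\]
It then remains to simplify the two boundary contributions. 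Expanding each total derivative by the product rule produces four terms, two of which carry the common factor $(\partial_{\dot{\bs}} L_0)^\top$ multiplying the second mixed derivatives $\partial_\beta \partial_\btheta \bs_t^\beta$ and $\partial_\btheta \partial_\beta \bs_t^\beta$; a \emph{second} application of equality of mixed partials, now to the trajectory map $\bs_t^\beta(\btheta)$, shows these two are equal and hence cancel. The surviving terms are exactly the stated residuals $\big[(\partial_\btheta \bs_t^0)^\top d_\beta \partial_{\dot{\bs}} L_\beta - (d_\btheta \partial_{\dot{\bs}} L_0)^\top \partial_\beta \bs_t^\beta\big]_0^T$ (the first obtained by symmetry of the scalar contraction), completing the derivation.

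I expect the main obstacle to be the careful symbolic bookkeeping of the endpoint terms. Unlike the static case, integration by parts in time is unavoidable, and for arbitrary boundary conditions the residuals do not vanish, so the argument must track them throughout rather than discard them — and it is precisely the cancellation of the $(\partial_{\dot{\bs}} L_0)^\top$ pieces that reduces the raw expression to the clean residual form. A secondary technical point is justifying the two operations that power the proof: interchanging differentiation with the time integral, and the two appeals to equality of mixed partials. Both require $\bs_t^\beta(\btheta)$ and $L_\beta$ to be jointly $C^2$ in $(t,\btheta,\beta)$ on $[0,T]$; under such regularity the cancellation is exact and the formula of Theorem~\ref{thm:general-ep} follows.
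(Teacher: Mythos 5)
Your proposal is correct and follows essentially the same route as the paper: the paper likewise equates the mixed $(\btheta,\beta)$ derivatives of the reduced action $A_\beta[\bs^\beta(\btheta),\btheta]$ via Schwarz's theorem, uses an integration-by-parts/envelope argument (its Proposition~1) to reduce the implicit-dependence variations to endpoint terms since $\EL=0$, and then cancels the $\partial_{\beta\btheta}\bs=\partial_{\btheta\beta}\bs$ contributions in the product-rule expansion of the boundary terms. The regularity caveats you flag (interchange of differentiation and integration, joint $C^2$ smoothness) are implicit but not discussed in the paper's proof.
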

Note that we have omitted the explicit $\btheta$ dependence in the state 
trajectories $\bs_t^\beta(\btheta)$ and their derivatives 
$\dot{\bs}_t^\beta(\btheta)$ for notational simplicity. We adopt this convention throughout the remainder of this work.


\paragraph{Gradient formula interpretation.} The first term on the right-hand side of 
\eqref{eq:lagrangian-ep-lr} directly generalizes the EP 
learning rule (Eq.~\ref{eq:energy_ep_lr}): instead of computing differences between energy function parameters
derivatives at two fixed points, we integrate differences between 
Lagrangian parameter derivatives over entire trajectories. This integration reflects 
the fact that we are now training the complete temporal evolution rather 
than an equilibrium state.

The second term, which we call \emph{boundary residuals}, represents a 
fundamental difficulty that arises from extending EP to temporal domains. 
These terms emerge from the integration by parts required in the 
derivation of Theorem~\ref{thm:general-ep} (see proof in Appendix~\ref{appx:sec:proof-general-ep}) and depend on the boundary conditions imposed on the trajectories 
$\bs^\beta(\btheta)$. The fact that we have not yet specified these 
boundary conditions is why we refer to our theorem as a ``generalization to arbitrary boundary conditions''. As we explore in the following sections, different choices of boundary conditions yield different learning algorithms.

\paragraph{Implementation procedure.} Focusing on the first term suggests 
a two-phase procedure analogous to EP, as illustrated in 
Figure~\ref{fig:ep-learning}:
\begin{enumerate}
    \item \textbf{Free phase}: Compute the trajectory $\bs^{0}(\btheta)$ 
    (black cross in Fig~\ref{fig:ep-learning}A) that is a stationary point of the action 
    functional $A_0$ (black curve in Fig~\ref{fig:ep-learning}A) by solving 
    the associated Euler-Lagrange equation $\text{EL}(\btheta,0) = 0$ over 
    the time interval $[0,T]$. The temporal evolution is highlighted by the 
    black curve in Figure~\ref{fig:ep-learning}C.
    \item \textbf{Nudged phase}: Compute the trajectory $\bs^{\beta}({\btheta})$ 
    (blue dot in Fig~\ref{fig:ep-learning}A) for a small positive value of 
    $\beta$ by solving the perturbed Euler-Lagrange equation 
    $\text{EL}(\btheta,\beta) = 0$, corresponding to the minimum of the 
    augmented action $A_\beta$ (blue curve in Fig~\ref{fig:ep-learning}A). 
    The corresponding dynamics are shown as the dotted blue curve in 
    Figure~\ref{fig:ep-learning}C.
    \item \textbf{Learning rule}: Estimate the gradient using the finite 
    difference approximation of the first term in \eqref{eq:lagrangian-ep-lr}, 
combined with appropriate handling of the boundary residuals (see below in Section \ref{subsec:instantiation}).
\end{enumerate}
\paragraph{Computational challenges.}
Unlike standard EP, Lagrangian EP faces two computational challenges controlled by the choice of boundary conditions:
\begin{enumerate}
    \item \textbf{Boundary residuals in the learning rule.} The boundary residuals in Eq.~\eqref{eq:boundary_residuals} involve $\btheta$-derivatives like $\partial_{\btheta} \bs^0_T$ that would require differentiating through the ODE solver -- defeating the purpose of this work.
    \item \textbf{Non-causal boundary conditions.} Even when boundary residuals vanish, as previous work assumed~\citep{scellierDeepLearningTheory2021, kendall2021gradient}, computing the nudged trajectory $\bs^\beta(\btheta)$ presents its own difficulties. For boundary residuals to vanish, the nudged trajectory must satisfy the same boundary conditions as the free trajectory (boundary-vanishing variations). This means one must find a trajectory that both satisfies the Euler-Lagrange equations \emph{and} matches prescribed values at \emph{both} endpoints -- a non-causal boundary value problem (see Section~\ref{subsec:cfvp}).
\end{enumerate}
These challenges motivate the search for boundary conditions that are {\bfseries both causal and free of boundary residuals}, as we explore in Section~\ref{subsec:instantiation}.


\subsection{Instantiations of LEP}
\label{subsec:instantiation}

\begin{figure}[t!]
    \centering
    \includegraphics[width=1.0\textwidth]{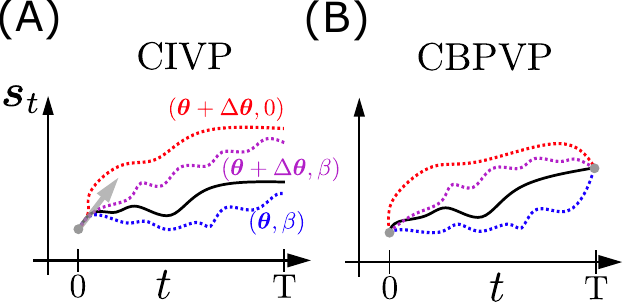}
\caption{\textbf{Different boundary condition formulations for LEP.} 
    The two panels use a consistent color scheme: \textcolor{black}{black curves} represent the free trajectory $\bs^0(\btheta)$ used for inference, and \textcolor{blue}{blue dotted curves} show the $\beta$-nudged trajectories $\bs^\beta(\btheta)$ used for learning. Boundary conditions are depicted in \textcolor{gray}{grey}, with dots for positions and arrows for velocities. To illustrate how boundary conditions constrain the entire family of trajectories, we also display $\btheta$-perturbed trajectories $\bs^0(\btheta + \Delta\btheta)$ (\textcolor{red}{red dotted curve}) and combined perturbations $\bs^\beta(\btheta + \Delta\btheta)$ (\textcolor{violet}{purple dotted curve}).
    \textbf{(A) Constant Initial Value Problem (CIVP).} All trajectories share 
    the same initial conditions $(\bs_0, \dot{\bs}_0)=(\boldsymbol{\alpha}_0, \boldsymbol{\gamma}_0)$, depicted as a \textcolor{gray}{grey dot} for the initial position $\boldsymbol{\alpha}_0$ and a \textcolor{gray}{grey arrow} for the initial velocity $\boldsymbol{\gamma}_0$,
    but evolve differently due to parameters or nudging perturbations.
    \textbf{(B) Constant Boundary Position Value Problem (CBPVP).} All trajectories 
    satisfy boundary conditions requiring fixed positions at $t=0$ and $t=T$, $(\bs_0, \bs_T)=(\boldsymbol{\alpha}_0, \boldsymbol{\alpha}_T)$, depicted as \textcolor{gray}{grey dots}, but their dynamics differ due to parameters or nudging perturbation.}
    \label{fig:perturbations}
\end{figure}

In this section, we demonstrate how to instantiate LEP
by constructing the function 
$t \mapsto \bs_t^\beta(\boldsymbol{{\btheta}})$ through different boundary 
specifications. We first consider the Constant Boundary Position Value Problem (CBPVP), which corresponds to the boundary-value-problem assumption made by~\citep{scellierDeepLearningTheory2021} and~\citep{kendall2021gradient}. We then consider the Constant Initial Value Problem (CIVP) as a natural causal alternative. As we show, each resolves one of the two computational challenges identified above, but not both.
Importantly, boundary conditions must be specified for an entire \emph{family} of trajectories---those corresponding to different values of $\btheta$ and $\beta$. Figure~\ref{fig:perturbations} illustrates how different types of boundary conditions constrain these families: some fix both endpoints, others fix the initial state across all trajectories, and so on.

\subsubsection{Constant Boundary Position Value Problem (CBPVP) on position}
\label{subsec:cfvp}
The boundary-value-problem assumption made by~\citep{scellierDeepLearningTheory2021} and~\citep{kendall2021gradient} corresponds to the \emph{Constant Boundary Position Value Problem}, where 
trajectories are constrained by conditions at both temporal boundaries:
\begin{align*}
    \forall t \in [0,T], \quad t \mapsto  \bs_{\bda,t}^\beta(\boldsymbol{{\btheta}}, (\boldsymbol{\alpha}_0, \boldsymbol{\alpha}_T)) 
    \text{ satisfies:} \quad
    \begin{cases}
        \text{EL}(t, \boldsymbol{{\btheta}}, \beta) = 0 \\
        \bs_{\bda,0}^\beta(\boldsymbol{{\btheta}}) = \boldsymbol{\alpha}_0 \\
        \bs_{\bda,T}^\beta(\boldsymbol{{\btheta}}) = \boldsymbol{\alpha}_T
    \end{cases}
\end{align*}
where $\boldsymbol{\alpha}_0$ and $\boldsymbol{\alpha}_T$ now represent the fixed 
positions at the initial and final times, respectively. This formulation is 
depicted in Figure~\ref{fig:perturbations}B, where all trajectories connect 
the same boundary points but follow different internal dynamics. Applying Theorem~\ref{thm:general-ep} to this boundary condition choice yields 
a direct instantiation of the general gradient formula with significant 
simplification due to the elimination of boundary residual terms.

\begin{corollary}[Gradient estimator for CBPVP]
\label{corollary:CBPVP_gradient}
The gradient of the objective functional for 
$\bs_{\bda}^\beta(\boldsymbol{{\btheta}}, (\boldsymbol{\alpha}_0, \boldsymbol{\alpha}_T))$ is given by:
\begin{align}
    d_{\boldsymbol{{\btheta}}} C[\bs_{\bda}^0(\boldsymbol{{\btheta}}, (\boldsymbol{\alpha}_0, \boldsymbol{\alpha}_T))] 
    &= \lim_{\beta \to 0} \frac{1}{\beta} \Delta^{\text{CBPVP}}(\beta)\,,
    \label{eq:CBPVP_gradient}
\end{align}
where the finite difference gradient estimator simplifies to:
\begin{align*}
    \Delta^{\text{CBPVP}}(\beta) &:= \int_0^T \Big[\partial_{\boldsymbol{{\btheta}}} L_{\beta}(\bs_{\bda,t}^{\beta}, \dot{\bs}_{\bda,t}^{\beta}, \btheta) 
    - \partial_{\boldsymbol{{\btheta}}} L_{0}(\bs_{\bda,t}^{0}, \dot{\bs}_{\bda,t}^0, \btheta)\Big] \dt\,.
\end{align*}
\end{corollary}

\paragraph{No boundary residuals, but non-causal boundary conditions.} 
The CBPVP formulation resolves the boundary residual challenge: both endpoints are fixed independently of $\btheta$ and $\beta$, causing all residual terms to vanish. This yields a simple gradient estimator that only requires integrating differences between Lagrangian derivatives over the two trajectories (Eq.~\eqref{eq:CBPVP_gradient}). However, given only the two endpoint conditions $\boldsymbol{\alpha}_0$ and $\boldsymbol{\alpha}_T$, the Euler-Lagrange equation cannot be solved by forward integration from an initial condition. Instead, one must solve a two-point boundary value problem---finding a trajectory that satisfies both the Euler-Lagrange equations \emph{and} the prescribed endpoint constraints.

As an alternative to Euler-Lagrange forward integration, one can exploit the variational characterization to solve this two-point boundary value problem: by Lemma~\ref{lemma:euler-lagrange}, $\bs_{\bda}^\beta$ is equivalently the minimizer of the action subject to boundary constraints:
\begin{align*}
    \bs_{\bda}^\beta(\boldsymbol{{\btheta}}, (\boldsymbol{\alpha}_0, \boldsymbol{\alpha}_T)) 
    = \arg\min_{\bs} A_\beta[\bs] \quad 
    \text{subject to} \quad \bs_{\bda,0}^\beta = \boldsymbol{\alpha}_0, \; \bs_{\bda,T}^\beta = \boldsymbol{\alpha}_T\,.
\end{align*}
This optimization can be solved via gradient descent (or other root finding algorithm) on the action functional, which takes the form of a partial differential equation~\citep{olverCalculusVariations2022}:
\begin{align*}
    d_\tau  \bs_{\bda} = -\delta_{\bs} A_\beta
    = -\text{EL}(t, \boldsymbol{{\btheta}}, \beta)
    \qquad\text{subject to} \quad \bs_{\bda,0} = \boldsymbol{\alpha}_0, \; \bs_{\bda,T} = \boldsymbol{\alpha}_T\,,
\end{align*}
where $\tau$ is an artificial optimization time and $\delta_{\bs} A_\beta$ is the functional gradient. In practice, the physical time $t \in [0,T]$ is discretized into $N$ bins, turning the trajectory into a vector of size $N \times d_s$. The system then evolves iteratively in $\tau$ -- analogous to the root-finding algorithms used in standard EP, but applied to this much larger state space -- until the trajectory converges to a critical point where $\text{EL}(t, \btheta, \beta) = 0$. As we quantify in Table~\ref{tab:complexity}, this iterative solver dominates the overall cost at $\mathcal{O}(K N d_s^2)$, where $K$ grows with $N$ and $d_s$.

CBPVP eliminates boundary residuals but at the cost of non-causal trajectory computation, making it less appealing than LEP instantiations that would require simple forward passes through an ODE.

\begin{remark}[Unconstrained action minimization]
If one is willing to accept iterative optimization---rather than forward integration via Euler-Lagrange equations---then boundary conditions need not be imposed at all. Minimizing the action functional \emph{without} boundary constraints yields a variational formulation analogous to standard EP, where boundary residuals vanish entirely in Theorem~\ref{thm:general-ep}. However, this approach inherits the same non-causal drawbacks as CBPVP and is in fact more expensive, since the full trajectory \emph{including} its endpoints becomes part of the optimization variables. We elaborate on this observation in Appendix~\ref{app:unconstrained-action}.
\end{remark}

\subsubsection{Constant Initial Value Problem (CIVP)}
\label{subsec:civp}
A natural attempt to restore causality is the \emph{Constant Initial Value Problem} (CIVP), where trajectories are constructed through straightforward forward integration:
\begin{align*}
    \forall t \in [0,T] \quad t \mapsto \bs_{\fwda,t}^\beta(\boldsymbol{{\btheta}}, (\boldsymbol{\alpha}_0, \boldsymbol{\gamma}_0)) 
    \text{ satisfies:} \quad
    \begin{cases}
        \text{EL}(t, \boldsymbol{{\btheta}}, \beta) = 0 \\
        \bs_{\fwda,0}^\beta(\boldsymbol{{\btheta}}) = \boldsymbol{\alpha}_0 \\
        \dot{\bs}_{\fwda,0}^\beta(\boldsymbol{{\btheta}}) = \boldsymbol{\gamma}_0
    \end{cases}
\end{align*}
where $\boldsymbol{\alpha}_0 \in \mathbb{R}^d$ and $\boldsymbol{\gamma}_0 \in \mathbb{R}^d$
are the initial position and velocity conditions at $t=0$, respectively. This formulation 
defines a family of trajectories that all originate from the same initial state 
but evolve according to different dynamics due to parameter or nudging perturbations, 
as illustrated in Figure~\ref{fig:perturbations}A. Unlike CBPVP, the Euler-Lagrange equation can be directly integrated forward from the initial conditions---the trajectory computation is therefore causal and efficient at $\mathcal{O}(N d_s^2)$. Applying Theorem~\ref{thm:general-ep} to this boundary condition choice yields 
a direct instantiation of the general gradient formula with some simplification due to the fixed initial conditions.

\begin{corollary}[Gradient estimator for CIVP]
\label{corollary:civp_gradient}
The gradient of the objective functional for 
$\bs_{\fwda}^0(\boldsymbol{{\btheta}}, (\boldsymbol{\alpha}_0, \boldsymbol{\gamma}_0))$ is given by:
\begin{align*}
    d_{\boldsymbol{{\btheta}}} C[\bs_{\fwda}^0(\boldsymbol{{\btheta}}, (\boldsymbol{\alpha}_0, \boldsymbol{\gamma}_0))]
    &= \lim_{\beta \to 0} \Delta^{\text{CIVP}}(\beta)\,,
\end{align*}
where
\begin{align}
    \Delta^{\text{CIVP}}(\beta) &:= \frac{1}{\beta} \Bigg[ 
    \int_0^T \Big[\partial_{\boldsymbol{{\btheta}}} L_{\beta}(\bs_{\fwda,t}^{\beta}, \dot{\bs}_{\fwda,t}^{\beta}, \btheta) 
    - \partial_{\boldsymbol{{\btheta}}} L_{ 0}(\bs_{\fwda,t}^{0}, \dot{\bs}_{\fwda,t}^0, \btheta)\Big] \dt \nonumber \\
    &\quad + \underbrace{\left(\partial_{\boldsymbol{{\btheta}}} \bs_{\fwda,T}^{0}\right)^{\top}}_{\text{costly residual}}
    \Big(\partial_{\dot{\bs}} L_{\beta}(\bs_{\fwda,T}^{\beta}, \dot{\bs}_{\fwda,T}^{\beta}, \btheta) 
    - \partial_{\dot{\bs}} L_{ 0}(\bs_{\fwda,T}^{0}, \dot{\bs}_{\fwda,T}^{0}, \btheta)\Big) \nonumber \\
    &\quad - \underbrace{\left(d_{\boldsymbol{{\btheta}}}\partial_{\dot{\bs}} L_{0}(\bs_{\fwda,T}^{0}, \dot{\bs}_{\fwda,T}^{0}, \btheta)\right)^{\top}}_{\text{costly residual}}  
    \left(\bs_{\fwda,T}^{\beta} - \bs_{\fwda,T}^{0}\right) \Bigg]\,.
    \label{eq:civp_estimator}
\end{align}
\end{corollary}

\paragraph{Causal boundary conditions, but intractable boundary residuals.} 
While CIVP restores causal forward integration, it suffers from 
significant computational limitations due to the boundary residual terms in 
Eq.~\eqref{eq:civp_estimator}. In particular, the remaining residuals at time $T$ involve derivatives of the 
trajectory with respect to parameters ($\partial_{\boldsymbol{{\btheta}}} \bs_{\fwda,T}^{0}$) 
and mixed derivatives of the Lagrangian 
($d_{\boldsymbol{{\btheta}}}\partial_{\dot{\bs}} L_{0}$), 
which cannot be efficiently computed using finite differences due to the 
high dimensionality of the parameter space (see Section~\ref{app:complexity:civp} for a detailed complexity analysis showing these terms require $\mathcal{O}(N d_s^3)$ time and $\mathcal{O}(N d_s)$ memory). The only simplification occurs 
at $t=0$, where the boundary residuals vanish due to the fixed initial conditions, 
but this is insufficient to yield a practical learning algorithm.




\subsubsection{Towards a practical implementation of LEP}
\paragraph{Designing efficient algorithms.} Table~\ref{tab:complexity} quantifies the trade-off between CBPVP and CIVP in terms of computational complexity, where $N$ denotes the number of discrete time steps, $d_s$ the state dimension, $d_\theta$ the number of learnable parameters, and $K$ the number of iterations required for the 
boundary value problem 
solver convergence. For CBPVP, gradient computation is efficient at $\mathcal{O}(N d_\theta)$ with only $\mathcal{O}(d_\theta)$ memory, but the iterative BVP solver dominates at $\mathcal{O}(K N d_s^2)$ time, where $K$ can be expected to be a growing quantity of $N$ and $d_s$. For CIVP, trajectory computation is efficient at $\mathcal{O}(N d_s^2)$, but evaluating the boundary residuals requires a complexity of $\mathcal{O}(N d_s^3)$ and storing intermediate states, incurring $\mathcal{O}(N d_s)$ memory---when done using backpropagation through time (see Appendix~\ref{app:complexity} for details).

This motivates the search for boundary conditions that are both causal and free of boundary residuals. In the following sections, we demonstrate that the Parametric Final Value Problem (PFVP) formulation, which underlies the RHEL algorithm, achieves both properties for time-reversible systems—attaining efficient $\mathcal{O}(N d_s^2)$ dynamics and $\mathcal{O}(N d_\theta)$ gradient computation without the bottlenecks of either CIVP or CBPVP.

\begin{table}[h]
\centering
\renewcommand{\arraystretch}{1.6}
\resizebox{\textwidth}{!}{
\begin{tabular}{@{}l cc cc c c l@{}}
\toprule
& \multicolumn{2}{c}{\textbf{Time Complexity}} & \multicolumn{2}{c}{\textbf{Memory}} & & & \\
\cmidrule(lr){2-3} \cmidrule(lr){4-5}
\textbf{Method} & Dynamics & Gradient & Dynamics & Gradient & \textbf{Forward-only} & \textbf{Streaming} & \textbf{Bottleneck} \\
\midrule
CIVP & $\mathcal{O}(N d_s^2)$ & $\mathcal{O}(N d_s^3)$ & $\mathcal{O}(d_s)$ & $\dom{\mathcal{O}(N d_s)}$ & \textcolor{dominant}{\textsf{x}} & \textcolor{good}{\checkmark} & BPTT memory \\
CBPVP & $\dom{\mathcal{O}(K N d_s^2)}$ & $\mathcal{O}(N d_\theta)$ & $\dom{\mathcal{O}(N d_s)}$ & $\mathcal{O}(d_\theta)$ & \textcolor{good}{\checkmark} & \textcolor{dominant}{\textsf{x}} & BVP iterations \\
PFVP/RHEL & $\good{\mathcal{O}(N d_s^2)}$ & $\good{\mathcal{O}(N d_\theta)}$ & $\good{\mathcal{O}(d_s)}$ & $\good{\mathcal{O}(d_\theta)}$ & \textcolor{good}{\checkmark} & \textcolor{good}{\checkmark} & None \\
\bottomrule
\end{tabular}}
\caption{Computational complexity comparison. \textcolor{dominant}{\textbf{Red}} indicates the dominant cost that makes the method impractical. \textcolor{good}{Green} indicates efficient scaling. See Appendix~\ref{app:complexity} for detailed derivation.}
\label{tab:complexity}
\end{table}

\paragraph{Designing easy-to-implement algorithms.}
Beyond computational efficiency in time and memory, a central appeal of LEP (and EP) is that, under certain conditions, it can be \emph{forward-only}.

An algorithm is \emph{forward-only} if it only requires running the same physical system forward in time---no separate backward pass through a computational graph is needed.
In practice, gradient computation reuses the same dynamical system as inference, requiring only two forward passes: a free phase and a nudged phase.

As summarized in Table~\ref{tab:complexity}, CIVP is \emph{not} forward-only: it requires an explicit backward pass through the stored computational graph to evaluate the boundary residual terms of the gradient estimator.
CBPVP is forward-only, since both phases run the same iterative boundary-value-problem solver and no separate backward circuit is needed, but at the cost of an expensive iterative procedure.
As we show in Section~\ref{sec:rhel_is_ep}, PFVP/RHEL satisfies the forward-only property while avoiding this overhead: both the free and echo phases consist of pure forward integration, with no iterative solver required (see Appendix~\ref{app:complexity} for a detailed comparison).

In LEP, a further refinement of the forward-only property matters: \emph{streaming}. An algorithm is streaming if it can process temporal data sequentially from $t=0$ to $t=T$ without requiring access to the entire time horizon at once. As shown in Table~\ref{tab:complexity}, causal boundary conditions (CIVP and PFVP/RHEL) naturally enable streaming, while CBPVP's non-causal boundary conditions, despite being forward-only, require all $N$ time steps to be processed simultaneously, precluding streaming operation.

\vspace{0.5em}





\section{Recurrent Hamiltonian Echo Learning}
\label{sec:rhel}
Recurrent Hamiltonian Echo Learning (RHEL) presents a fundamentally different 
approach to temporal credit assignment compared to the variational formulations 
discussed in the previous section. Unlike 
EP methods that 
rely on variational principles and careful specification of boundary conditions, 
RHEL operates directly on the dynamics of Hamiltonian physical systems without 
requiring an underlying action functional or boundary value problem formulation.

\subsection{Hamiltonian system formulation}

In RHEL, the system to be trained is described by a Hamiltonian function 
$H(\bPhi_t, {\btheta}, \bx_t)$, where $\bPhi_t({\btheta}) \in \mathbb{R}^{2d}$ represents 
the complete state of the system at time $t$. This state vector is composed of 
both position and momentum coordinates:
\begin{equation*}
\bPhi_t := \begin{pmatrix} \bs_t \\ \bp_t \end{pmatrix} \in \mathbb{R}^{2d}\,,
\end{equation*}
where $\bs_t \in \mathbb{R}^d$ represents the position coordinates and 
$\bp_t \in \mathbb{R}^d$ represents the momentum coordinates.

The evolution of the system follows Hamilton's equations of motion:
\begin{equation*}
d_t \bPhi_t = \bJ \cdot \partial_{\bPhi} H(\bPhi_t, {\btheta}, \bx_t)\,,
\end{equation*}
where $\bJ$ is the canonical symplectic matrix:
\begin{equation*}
\bJ := \begin{bmatrix} 
\bm{0} & \bm{I} \\ 
-\bm{I} & \bm{0} 
\end{bmatrix} \in \mathbb{R}^{2d \times 2d}\,.
\end{equation*}

A crucial requirement for RHEL is that the Hamiltonian must be time-reversible, 
meaning it satisfies:
\begin{equation*}
H(\bSigma_z \bPhi_t, {\btheta}, \bx_t) = H(\bPhi_t, {\btheta}, \bx_t)\,,
\end{equation*}
where $\bSigma_z$ is the momentum-flipping operator:
\begin{equation*}
\bSigma_z := \begin{bmatrix} 
\bm{I} & \bm{0} \\ 
\bm{0} & -\bm{I} 
\end{bmatrix}\,.
\end{equation*}
This time-reversibility property ensures that the system can exactly retrace 
its trajectory when the momentum is reversed, which is fundamental to the 
echo mechanism.

\subsection{Two-phase learning procedure}

RHEL implements a two-phase learning procedure that leverages the time-reversible 
nature of Hamiltonian systems. Notably, this procedure does not require solving 
variational problems or specifying complex boundary conditions.

\textbf{Forward phase:} The first phase computes the natural evolution of the 
system from initial conditions. For $t \in [0, T]$, the trajectory 
$t \mapsto \bPhi_t({\btheta}, (\balpha_0, \bmu_0)^\top)$ satisfies:
\begin{equation*}
\begin{cases}
\partial_t \bPhi_t = \bJ ~ \partial_{\bPhi} H(\bPhi_t, {\btheta}, \bx_t) \\
\bPhi_0 = \begin{pmatrix} \balpha_0 \\ \bmu_0 \end{pmatrix}
\end{cases}
\end{equation*}
This phase corresponds to the system's natural dynamics without any learning 
signal and produces the model's prediction.

\textbf{Echo phase:} The second phase begins by flipping the momentum of the 
final state and then evolving the system backward in time with a small nudging 
perturbation. For $t \in [0, T]$, the echo trajectory 
$t \mapsto \bPhi^e_t({\btheta}, \bSigma_z \bPhi_T({\btheta}))$ satisfies:
\begin{equation}
\begin{cases}
\partial_t \bPhi^e_t = \bJ ~ \partial_{\bPhi} H(\bPhi^e_t, {\btheta}, \bx_{T-t})
- \beta \bJ ~ \partial_{\bPhi} c(\bPhi^e_t, \by_{T-t}) \\
\bPhi^e_0 = \bSigma_z \bPhi_T({\btheta})
\end{cases}
\label{def:hes_echo}
\end{equation}
where $\beta > 0$ is a small nudging parameter.

The key insight is that without the perturbation term ($\beta = 0$), the system 
would exactly retrace its forward trajectory due to time-reversibility, returning 
to the initial state $\bPhi_0$. However, the nudging perturbation breaks this 
symmetry, and the resulting deviation encodes gradient information.

Contrary to the Lagrangian formulation, where we defined a function 
$t \mapsto \bs_t({\btheta}, \beta)$ through a unified boundary value problem, 
RHEL operates with two distinct trajectories. We refer to this pair as a 
\emph{Hamiltonian Echo System} (HES): $t \mapsto(\bPhi_t({\btheta}, (\balpha_0, \bmu_0)^\top), \bPhi^e_t({\btheta}, \bSigma_z \bPhi_T({\btheta})))$. We also note that RHEL  is also valid in the more general case where the cost function also depends on the momentum of the system (see Equation~\eqref{def:hes_echo}). 

\subsection{Gradient computation}

The fundamental result of RHEL shows that gradients can be computed through 
finite differences between the perturbed and unperturbed Hamiltonian evaluations:

\begin{theorem}[Gradient estimator from RHEL with parametrized initial state~\cite{rhel}] 
\label{thm:rhel}
The gradient of the objective functional is given by:\footnote{We present the unidirectional formulation; the bidirectional version (centered differences) provides $O(\beta^2)$ accuracy. See Appendix~\ref{app:bidirectional-nudging}.}
\begin{align*}
\dtheta C[\bPhi({\btheta}, (\balpha_0(\btheta), \bmu_0(\btheta))^\top)] = 
\lim_{\beta \to 0} \Delta^{\text{RHEL}}(\beta, \balpha_0(\btheta), \bmu_0(\btheta))\,,
\end{align*}
where the finite difference gradient estimator is:
\begin{align}
\label{eq:Delta^RHEL}
\Delta^{\text{RHEL}}(\beta, \balpha_0(\btheta), \bmu_0(\btheta)) := \frac{1}{\beta} \Bigg[-\int_0^T 
&\left[ \partial_{\btheta} H(\bPhi^e_t, {\btheta}, \bx_{T-t}) 
- \partial_{\btheta} H(\bPhi_t, {\btheta}, \bx_t) \right] \dt \notag \\
&+ \left(\partial_{\btheta} \begin{pmatrix} \balpha_0 \\ \bmu_0 \end{pmatrix} \right)^\top \bSigma_x \left(\begin{pmatrix} \bs^e_T \\ \bp^e_T \end{pmatrix} - \begin{pmatrix} \balpha_0 \\ -\bmu_0 \end{pmatrix}\right)\Bigg]\,,
\end{align}
where $\bPhi^e_t$ is the echo trajectory at time $t$, and $\bPhi_t$ represents the forward trajectory 
evaluated at time $t$. We also used the helper matrix $\bSigma_x$ defined as:
\begin{align*}
    \bSigma_x = \begin{pmatrix} \mathbf{0} & \mathbf{I} \\ \mathbf{I} & \mathbf{0} \end{pmatrix}\,.
\end{align*}

When the initial conditions $\begin{pmatrix} \balpha_0 \\ \bmu_0 \end{pmatrix}$ are 
independent of the parameters ${\btheta}$ (i.e., $\partial_{\btheta} \begin{pmatrix} \balpha_0 \\ \bmu_0 \end{pmatrix} = 0$), 
the boundary term vanishes and the estimator reduces to the integral term only.
\end{theorem}

\begin{proof}[Proof sketch]
This result follows from Theorem 3.1 in \citep{rhel}. The detailed derivation, showing how to recover this result from \citep{rhel}, is provided in Appendix~\ref{app:proof-rhel-parametrized}.
\end{proof}

\subsection{Contrast with Variational Approaches}

RHEL was originally derived without requiring a variational principle. Instead, it relies on establishing a direct mapping between the system dynamics and adjoint methods~\citep{rhel}. The central requirement in this approach is finding the correct mapping, which requires insight or good intuition about the structure of the problem. Attempts to generalize RHEL to the broader class of port-Hamiltonian systems~\citep{schaftPortHamiltonianSystemsTheory2014} using this mapping strategy have shown that the original mapping does not straightforwardly extend to such systems~\citep[Appendix A.3.1]{rhel}.

The key insight, already exploited by RHEL, is that time-reversibility of Hamiltonian dynamics combined with a specific choice of boundary conditions can resolve the boundary residual problem identified in Section~\ref{subsec:civp}. Specifically, the initial condition of the echo phase is defined as the momentum-flipped final state of the forward phase, allowing the system to approximately retrace its trajectory in reverse. We call this construction the \emph{bouncing-backward kick} (formalized in Proposition~\ref{prop:solution-pfvp-reversibility}). Since Lagrangian systems also exhibit time-reversibility, the same construction carries over naturally to the LEP framework, where the kick acts on velocity rather than momentum. In the following section, we demonstrate that RHEL emerges as a special case of LEP.

Interestingly, LEP offers a more systematic derivation. Rather than relying on guesses about the correct mapping to adjoint methods, LEP starts from variational principles and lets the mathematical structure dictate the learning algorithm. This generality enables extensions that would be difficult to derive from the RHEL perspective alone. In particular, while the direct mapping approach struggled to handle dissipative systems such as port-Hamiltonians, the variational perspective naturally accommodates dissipation, as we demonstrate in Section~\ref{sec:dissipative}.



\section{RHEL is a particular case of the Lagrangian EP} 
\label{sec:rhel_is_ep}

In this section, we demonstrate that RHEL can be recast as a particular 
instance of LEP when the system exhibits time-reversibility and the nudged 
trajectories are defined through a \emph{Parametric Final Value Problem} (PFVP). 
This connection reveals the fundamental relationship between these seemingly 
different approaches to temporal credit assignment.

\subsection{Instantiation of the Lagrangian EP as a PFVP}

\subsubsection{Definition of the Parametric Final Value Problem (PFVP)}
\label{subsubsec:pfvp-definition}

We now introduce a novel boundary condition formulation that enables tractable trajectory generation while eliminating problematic boundary residuals. The key idea is to define \emph{parametric} final boundary conditions $\balpha_T(\btheta)$ and $\bgamma_T(\btheta)$ that depend on the parameters $\btheta$. This defines the \emph{Parametric Final Value Problem} (PFVP):
\begin{align}
    \forall t \in [0,T] \quad t \mapsto \bs_{\bwda,t}^\beta(\btheta, (\balpha_T(\btheta), \bgamma_T(\btheta))) \text{ satisfies:}\quad
    \begin{cases}
        \EL_{r}(t, \btheta, \beta) = 0 \\
        \bs_{\bwda,T}^\beta(\btheta) = \balpha_T(\btheta) \\
        \dot{\bs}_{\bwda,T}^\beta(\btheta) = \bgamma_T(\btheta)
    \end{cases}\,,
    \label{def:PFVP-general}
\end{align}
where $\text{EL}_r(t, \btheta, \beta)$ denotes the time-indexed Euler-Lagrange equation with reversible Lagrangian $L_r$:
\begin{align*}
    \EL_{r}(t, \btheta, \beta) &:= \partial_{\bs} L_{\beta}(\bs_t, \dot{\bs}_t,\btheta, \bx_t, \by_t) 
    - d_t\partial_{\dot{\bs}} L_{\beta}(\bs_t, \dot{\bs}_t, \btheta, \bx_t, \by_t)\,.
\end{align*}
A reversible Lagrangian satisfies the time-symmetry condition:
\begin{align*}
    L_r(\bs_t, \dot{\bs}_t, \btheta, \bx_t, \by_t) = L_r(\bs_t, -\dot{\bs}_t, \btheta, \bx_t, \by_t)\,.
\end{align*}
This ensures that solutions of the associated Euler-Lagrange equations are time-reversible: forward evolution followed by momentum reversal exactly retraces the original trajectory.

In our instantiation, the parametric boundary conditions $\balpha_T(\btheta)$ and $\bgamma_T(\btheta)$ are defined with a Constant Initial Value Problem (CIVP) with $\beta=0$ (that will then be used for practically running the free phase, see Section~\ref{subsubsec:pfvp-computation}). Specifically, they correspond to the final position and velocity of this CIVP:
\begin{equation}
\left\{
\begin{aligned}
    \balpha_T(\btheta) &:= \bs_{\fwda,T}^0(\btheta, (\balpha_0, \bgamma_0)) \\
    \bgamma_T(\btheta) &:= \dot{\bs}_{\fwda,T}^0(\btheta, (\balpha_0, \bgamma_0))
\end{aligned}
\right.\,,
\label{def:PFVP-boundary-from-CIVP}
\end{equation}
where $\bs_{\fwda,T}^0(\btheta, (\balpha_0, \bgamma_0))$ and $\dot{\bs}_{\fwda,T}^0(\btheta, (\balpha_0, \bgamma_0))$ are the final position and velocity from the CIVP solution without nudging (see Section~\ref{subsec:civp}). This choice ensures that the free trajectory ($\beta=0$) satisfies both the CIVP initial conditions and the PFVP final conditions simultaneously (see Figure~\ref{fig:rhel}A).

\subsubsection{Practical Computation of the PFVP}
\label{subsubsec:pfvp-computation}

Final value problems are generally difficult to solve, as one must find initial conditions that produce prescribed final states—typically requiring iterative root-finding or constrained optimization (see Section~\ref{subsec:cfvp}). However, the PFVP formulation admits efficient computation by converting both phases into Initial Value Problems (IVPs).

\paragraph{Free phase.} By construction, the free trajectory is obtained directly from the CIVP. The FVP $\bs_{\bwda,t}^0(\btheta, (\balpha_T(\btheta), \bgamma_T(\btheta)))$ is equivalent to the CIVP $\bs_{\fwda,t}^0(\btheta, (\balpha_0, \bgamma_0))$ with constant initial conditions $\balpha_0$ and $\bgamma_0$ (See Proposition~\ref{prop:equivalence_IVP_EVP} for details). This trajectory can be computed via standard forward integration from $t=0$ to $t=T$.

\paragraph{Nudged phase: the bouncing-backward kick.} For the nudged trajectory ($\beta \neq 0$), we exploit the time-reversibility of the system to convert the PFVP into an Initial Value Problem (IVP). The key insight is that applying a \emph{velocity kick}---reversing the velocity at the final boundary---allows us to integrate the \emph{same} \footnote{
    Both phases integrate the \emph{same} Euler\textendash Lagrange equations, unlike the adjoint state method~\citep{chenNeuralOrdinaryDifferential2018a}, which often uses time-reversibility but integrates a different ODE to recompute activations during the backward pass, on top of integrating the adjoint equations themselves.
    } dynamical system forward in time rather than solving a final value problem. We call this the \emph{bouncing-backward kick}: the system ``bounces'' off the final state of the free phase and retraces its path backward in physical time, using only forward integration. In the Lagrangian formulation, the kick acts on the \emph{velocity} ($\bgamma_T \to -\bgamma_T$); in the equivalent Hamiltonian formulation (RHEL), it acts on the \emph{momentum} ($\bp \to -\bp$, the $\Sigma_z$ flip).

\begin{proposition}[Bouncing-backward kick: PFVP-to-IVP reduction] 
\label{prop:solution-pfvp-reversibility}
The solution of the time-reversible PFVP~\eqref{def:PFVP-general} with boundary conditions $\balpha_T(\btheta)$ and $\bgamma_T(\btheta)$ satisfies:
\begin{align*}
    \forall t \in [0,T] \qquad\bs_{\bwda,t}^\beta(\btheta, \left(\balpha_T(\btheta), \bgamma_T(\btheta))\right) &= \bs_{\fwda,t'}^\beta\left(\btheta, \left(\balpha_T(\btheta), -\bgamma_T(\btheta)\right)\right) \quad \text{with } t' = T-t\,,
\end{align*}
where $t' \mapsto \bs_{\fwda,t'}^\beta\left(\btheta, \left(\balpha_T(\btheta), -\bgamma_T(\btheta)\right)\right)$ is the solution of the IVP with velocity-reversed initial conditions, integrated forward in time $t'$ from $0$ to $T$ (where $t' = T-t$ relates the integration time $t'$ to the time $t$):
\begin{align*}
    \forall t' \in [0,T] \quad t' \mapsto \bs_{\fwda,t'}^\beta(\btheta, \left(\balpha_T(\btheta), -\bgamma_T(\btheta)\right)) \text{ satisfies:}\quad
    \begin{cases}
        \EL_{r}(t', \btheta, \beta) = 0 \\
        \bs_{\fwda,0}^\beta(\btheta) = \balpha_T(\btheta) \\
        \dot{\bs}_{\fwda,0}^\beta(\btheta) = -\bgamma_T(\btheta)
    \end{cases}
\end{align*}
\end{proposition}

The proposition states that the PFVP solution $\bs_{\bwda,t}^\beta$ at physical time $t$ equals the IVP solution $\bs_{T-t}^\beta$ at integration time $T-t$. Crucially, the Euler-Lagrange equation $\EL_r(T-t, \btheta, \beta)$ is evaluated with the input $\bx_{T-t}$ and target $\by_{T-t}$ corresponding to physical time $T-t$, meaning the input and target sequences are played backward during integration.

In practice, this gives a simple algorithm for the nudged phase: (1) start from the final state of the free phase with reversed velocity $(\balpha_T(\btheta), -\bgamma_T(\btheta))$, and (2) introduce a new integration time variable $t' = T-t$ and integrate the IVP \emph{forward in time} $t'$ from $t'=0$ to $t'=T$ (corresponding to physical time $t$ going backward from $T$ to $0$) while feeding the inputs and targets in reverse temporal order. The resulting IVP trajectory $\bs_{t'}^\beta$, yields the desired PFVP solution $\bs_{\bwda,t}^\beta$ (see Figure~\ref{fig:rhel}B).

\begin{figure}[t!]
    \centering
    \includegraphics[width=1.0\textwidth]{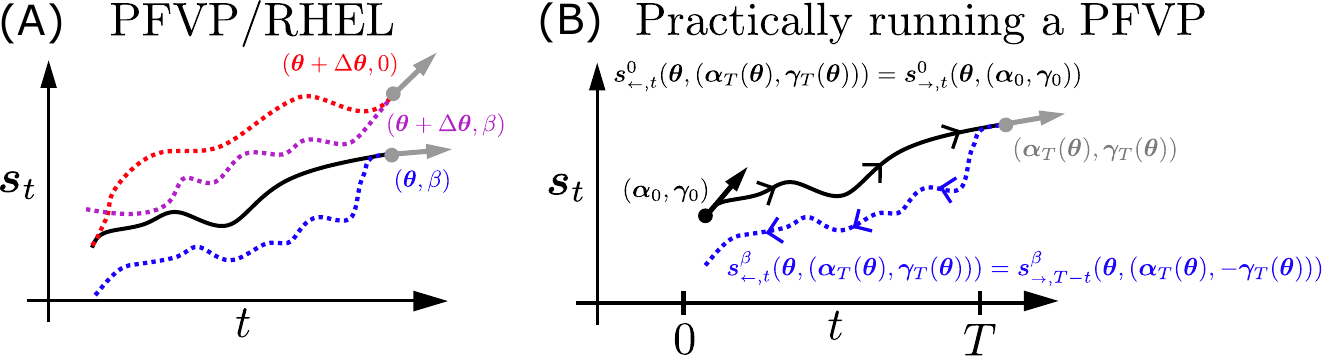}
    \caption{\textbf{Parametric Final Value Problem (PFVP) for LEP.} 
    The two panels use a consistent color scheme: \textcolor{black}{black curves} represent the free trajectory $\bs^0(\btheta)$ used for inference, and \textcolor{blue}{blue dotted curves} show the $\beta$-nudged trajectories $\bs^\beta(\btheta)$ used for learning. The boundary conditions (parametric final value problem) are depicted in \textcolor{gray}{grey}, with a dot for the position $\balpha_T(\btheta)=\bs_{\fwda,T}^0(\btheta)$ and arrow $\bgamma_T(\btheta)=\dot{\bs}_{\fwda,T}^0(\btheta)$. To illustrate how boundary conditions constrain the entire family of trajectories, we also display $\btheta$-perturbed trajectories $\bs^0(\btheta + \Delta\btheta)$ (\textcolor{red}{red dotted curve}) and combined perturbations $\bs^\beta(\btheta + \Delta\btheta)$ (\textcolor{violet}{purple dotted curve}).
    \textbf{(A)} We observe the effect of the \emph{parametric} final value condition: only trajectories that share the same $\btheta$ (\textcolor{blue}{blue} and \textcolor{black}{black} vs \textcolor{red}{red} and \textcolor{violet}{purple}, respectively) satisfy the same position (\textcolor{gray}{grey dot}) and velocity final value conditions (\textcolor{gray}{grey arrow}). 
    \textbf{(B)} The arrows on the curves (\textcolor{blue}{blue} and \textcolor{black}{black}) indicate the direction of integration of their respective IVPs. Although Final Value Problems (FVPs) are generally difficult to solve, both the free phase (\textcolor{black}{black curve}) and the nudged phase (\textcolor{blue}{blue curve}) can be efficiently computed by reformulating them as Initial Value Problems (IVPs). For the free phase, the FVP $\bs_{\bwda,t}^0(\btheta, (\balpha_T, \bgamma_T))$ is equivalent to the Constant Initial Value Problem (CIVP) $\bs_{\fwda,t}^0(\btheta, (\balpha_0, \bgamma_0))$ with initial conditions $\balpha_0$ and $\bgamma_0$ shown in \textcolor{black}{black (dot and arrow, respectively)}. For the nudged phase, we exploit time-reversibility (Proposition~\ref{prop:solution-pfvp-reversibility}): the PFVP $\bs_{\bwda,t}^\beta(\btheta, (\balpha_T, \bgamma_T))$ becomes the Parametric Initial Value Problem (PIVP) ${\bs}^\beta_{t'}(\btheta, (\balpha_T, -\bgamma_T))$ starting from the momentum-reversed final conditions $(\balpha_T, -\bgamma_T)$. This PIVP is then integrated \emph{forward} in integration time $t'=T-t$ from $0$ to $T$ (corresponding to $t$ going backward from $T$ to $0$), as illustrated by the \textcolor{blue}{blue arrows}. This PFVP formulation, expressed through Lagrangian mechanics, corresponds exactly to the Hamiltonian formulation of RHEL after applying the forward Legendre transform (see Theorem~\ref{thm:lep-rhel-equivalence}).}
    \label{fig:rhel}
\end{figure}

\subsection{Boundary Residual Cancellation in PFVP}
Applying Theorem~\ref{thm:general-ep} to this parametric boundary condition 
choice yields a remarkable instantiation of the general gradient formula 
where both the boundary conditions and the time-reversibility cause the boundary residuals to partially cancel.

\begin{theorem}[PFVP Boundary Residual Cancellation]
\label{thm:pfvp-cancellation}
Recall that the parametric boundary conditions $\balpha_T(\btheta) := \bs_{\fwda,T}^0(\btheta, (\balpha_0, \bgamma_0))$ and $\bgamma_T(\btheta) := \dot{\bs}_{\fwda,T}^0(\btheta, (\balpha_0, \bgamma_0))$ are defined as the final position and velocity of the free-phase CIVP (Equation~\eqref{def:PFVP-boundary-from-CIVP}). The boundary residuals in 
Theorem~\ref{thm:general-ep} vanish at $t=T$ and reduce to easy-to-compute terms at $t=0$ for the PFVP 
formulation $\bs_{\bwda}^\beta(\btheta, (\balpha_T, \bgamma_T))$. 
The gradient of the objective functional is given by:
\begin{align*}
    d_{\btheta} C[\bs_{\bwda}^0(\btheta, (\balpha_T, \bgamma_T))]
    = \lim_{\beta \to 0} \Delta^{\text{PFVP}}(\beta, \balpha_0(\btheta),\bgamma_0(\btheta))\,,
\end{align*}
where the PFVP gradient estimator simplifies to:
\begin{align*}
    \Delta^{\text{PFVP}}(\beta, \balpha_0(\btheta),\bgamma_0(\btheta)) := \frac{1}{\beta} \Bigg[
    \int_0^T &\left(\partial_{\btheta} L_{\beta}
    \left(\bs_{\bwda,t}^{\beta}, \dot{\bs}_{\bwda,t}^{\beta}, \btheta\right) 
    - \partial_{\btheta} L_{0}
    \left(\bs_{\bwda,t}^{0}, \dot{\bs}_{\bwda,t}^{0}, \btheta\right)\right) \dt \\
    &+ \left(d_{\btheta\dot{\bs}}L_{0}
    (\balpha_0(\btheta), \bgamma_0(\btheta), \btheta) \right)^\top 
    ~ \left(\bs_{\bwda,0}^\beta - \balpha_0(\btheta)\right) \\ 
    &- \left(\partial_{\btheta} \balpha_0\right)^\top \left( \partial_{\dot{\bs}} L_{0}\left(\bs_{\bwda,0}^\beta , \dot{\bs}_{\bwda,0}^\beta,\btheta\right)- \partial_{\dot{\bs}} L_{0}\left(\balpha_0(\btheta) , \bgamma_0(\btheta),\btheta\right)\right) \Bigg]\,.
\end{align*}
\textit{Note:} When the initial conditions $\balpha_0$ and $\bgamma_0$ are 
independent of $\btheta$ (i.e., $\partial_{\btheta} \balpha_0 = 0$), 
the boundary residual simplifies to a single term:
$\left(\partial_{\btheta\dot{\bs}}L_{0}(\balpha_0, \bgamma_0, \btheta) \right)^\top 
\left(\bs_{\bwda,0}^\beta - \balpha_0\right)$.
\end{theorem}

\textbf{Computational advantages.} The PFVP formulation resolves both computational challenges identified earlier. Unlike CIVP, it avoids intractable boundary residuals that would require backpropagation-like computations. Unlike CBPVP, it uses causal boundary conditions—trajectories are computed via simple forward integration rather than iterative solvers, enabling efficient streaming computation.

Table~\ref{tab:complexity} confirms these advantages quantitatively. PFVP achieves efficient trajectory generation at $\mathcal{O}(N d_s^2)$ time with only $\mathcal{O}(d_s)$ memory, matching CIVP's forward integration cost. Simultaneously, its gradient computation scales as $\mathcal{O}(N d_\theta)$ with $\mathcal{O}(d_\theta)$ memory, matching CBPVP's efficient gradient estimation.

\textbf{Comparison with previous work.} Recently, \citep{massarEquilibriumPropagationLearning2025} proposed a Lagrangian EP formulation; however, their work considers only \emph{fixed} boundary conditions (such as our CBPVP). The central novelty of our PFVP is making the final boundary \emph{parametric}: the terminal constraints $\bs_{\bwda,T}^\beta(\btheta)=\balpha_T$ and $\dot{\bs}_{\bwda,T}^\beta(\btheta)=\bgamma_T$ depend on $\btheta$ through the free-phase CIVP (Equation~\eqref{def:PFVP-boundary-from-CIVP}). 

Fixing $\balpha_T$ and $\bgamma_T$ independently of $\btheta$ would make the system less expressive and make the initial state depend on $\btheta$, forcing the initial conditions to change at every training step. To run an inference with the input in the forward direction, one would need to recompute it after each training step. 


\subsection{Hamiltonian-Lagrangian Equivalence via Legendre Transform}

We now establish the precise mathematical relationship between the PFVP 
formulation of LEP and RHEL. We first introduce the Legendre transform and its condition of well-definiteness to define a bijection between two pairs of variables. We use this to show the equivalence between LEP and RHEL.

This transform is important for our work because it allows us to map solutions of the Euler-Lagrange equations bijectively to solutions of the Hamiltonian equations.

\begin{theorem}[LEP-RHEL Equivalence via Legendre Transform]
\label{thm:lep-rhel-equivalence}
The time-local Legendre transform (Proposition~\ref{prop:Legendre_transform}), applied pointwise along trajectories, creates an equivalence between LEP and RHEL at the level of trajectories (1) and gradient estimators (2).

\textbf{(1) Trajectory Equivalence.} The PFVP formulation of LEP and the HES formulation of RHEL establish a bijection between solutions of Euler-Lagrange and Hamiltonian equations:
\begin{equation*}
t \mapsto \bs_{\bwda,t}^\beta(\btheta, (\balpha_T, \bgamma_T)) \quad \longleftrightarrow \quad t \mapsto \left(\bPhi_t(\btheta, (\balpha_0, \bmu_0)^\top), \bPhi_t^e(\btheta, \Sigma_z \bPhi_T(\btheta))\right)\,,
\end{equation*}
where the Legendre transformation induces the invertible relation between $(\balpha_0(\btheta), \bgamma_0(\btheta))$ and $\begin{pmatrix} \balpha_0(\btheta) \\ \bmu_0(\btheta) \end{pmatrix}$:
\begin{align}
    \begin{pmatrix} \balpha_0(\btheta) \\ \bmu_0(\btheta) \end{pmatrix} = \begin{pmatrix} \balpha_0(\btheta) \\ \partial_{\dot{\bs}} L_0(\balpha_0(\btheta), \bgamma_0(\btheta), \btheta) \end{pmatrix}
    \quad \text{and} \quad
    \begin{pmatrix} \balpha_0(\btheta) \\ \bgamma_0(\btheta) \end{pmatrix} = \begin{pmatrix} \balpha_0(\btheta) \\ \partial_{\bp} H_0(\balpha_0(\btheta), \bmu_0(\btheta), \btheta) \end{pmatrix}\,,
    \label{eq:mapping-ini}
\end{align}
where $\balpha_0, \bgamma_0$ are the Lagrangian initial conditions (position and velocity at $t=0$), and $\begin{pmatrix} \balpha_0 \\ \bmu_0 \end{pmatrix}$ are the Hamiltonian initial conditions (position and momentum at $t=0$), related via the bijective mapping of Equation~\eqref{eq:mapping-ini}.

\textbf{(2) Gradient Equivalence.} Under the respective Legendre transforms, the gradient estimators are identical:
\begin{equation*}
\Delta^{\text{PFVP}}(\beta, \balpha_0(\btheta), \bgamma_0(\btheta)) = \Delta^{\text{RHEL}}(\beta, \balpha_0(\btheta), \bmu_0(\btheta))\,.
\end{equation*}


\makeatletter
\newcommand{\biggg}{\bBigg@{3.5}}
\newcommand{\Biggg}{\bBigg@{4.5}}
\makeatother

{\small
\begin{center}
\textbf{LEP (Lagrangian)}
\[
\begin{aligned}
\Delta^{\text{PFVP}}(\beta, \balpha_0, \bgamma_0)
&= \frac{1}{\beta}\Bigg[\textcolor{blue}{\int_0^T \big(\partial_{\btheta} L_\beta(\bs_{\bwda,t}^\beta, \dot{\bs}_{\bwda,t}^\beta, \btheta) - \partial_{\btheta} L_0(\bs_{\bwda,t}^0, \dot{\bs}_{\bwda,t}^0, \btheta)\big) \, dt}
\\[6pt]
&\qquad + \textcolor{red}{\Biggg(d_{\btheta}\biggg(\begin{matrix} \balpha_0 \\ \partial_{\dot{\bs}} L_0(\balpha_0, \bgamma_0, \btheta)
\end{matrix}\biggg)\Biggg)^\top} \bSigma_x
\textcolor[RGB]{0,140,0}{\Biggg(\biggg(\begin{matrix}
\bs_{\bwda,0}^\beta \\
-\partial_{\dot{\bs}} L_\beta(\bs_{\bwda,0}^\beta, \dot{\bs}_{\bwda,0}^\beta, \btheta)
\end{matrix}\biggg) - \biggg(\begin{matrix}
\balpha_0 \\
-\partial_{\dot{\bs}} L_0(\balpha_0, \bgamma_0, \btheta)
\end{matrix}\biggg)\Biggg)}\Bigg].
\end{aligned}
\]

\medskip
\textbf{RHEL (Hamiltonian)}
\[
\begin{aligned}
\Delta^{\text{RHEL}}(\beta, \balpha_0, \bmu_0)
&= \frac{1}{\beta}\Bigg[\textcolor{blue}{-\int_0^T \big(\partial_{\btheta} H_\beta(\bPhi_t^e, \btheta) - \partial_{\btheta} H_0(\bPhi_t, \btheta)\big) \, dt}
\\[6pt]
&\qquad + \textcolor{red}{\Biggg(\partial_{\btheta} \biggg(\begin{matrix} \balpha_0 \\ \bmu_0 \end{matrix}\biggg)\Biggg)^\top} \bSigma_x
\textcolor[RGB]{0,140,0}{\Biggg(\biggg(\begin{matrix} \bs_T^e \\ \bp_T^e \end{matrix}\biggg) - \biggg(\begin{matrix} \balpha_0 \\ -\bmu_0 \end{matrix}\biggg)\Biggg)}\Bigg].
\end{aligned}
\]
\end{center}
}

where the $\btheta$ dependencies on $\balpha_0, \bgamma_0$ and $\balpha_0, \bmu_0$ --- which are constrained by Equation~\eqref{eq:mapping-ini} --- were dropped for readability. The color coding highlights terms that are equal between LEP and RHEL: \textcolor{blue}{blue} for the integral terms, \textcolor{red}{red} for the parameter derivatives before $\bSigma_x$, and \textcolor[RGB]{0,140,0}{green} for the state differences after $\bSigma_x$.

\end{theorem}

\begin{proof}[Sketch of the proof]
The proof proceeds in three steps.

\paragraph{(1) Legendre correspondence.}
We first show that the Legendre transform establishes a bijection between solutions of the Euler--Lagrange and Hamilton equations. Since the transform itself depends on the parameters~$\btheta$, it not only maps entire trajectories between the two formalisms but also reparametrizes their initial conditions in a $\btheta$-dependent manner.

\paragraph{(2) PFVP--HES construction.}
For both $\beta=0$ and $\beta\neq0$, we construct the HES from the PFVP through a sequence of maps (including the Legendre transform), each of which is bijective.

\paragraph{(3) Gradient equivalence.}
Finally, applying the Legendre transform to the PFVP gradient estimator yields the RHEL gradient expression.  
Term by term, the Lagrangian estimator in LEP matches the Hamiltonian estimator in RHEL, establishing full gradient equivalence.

\end{proof}




\textbf{Theoretical significance.} The combination of 
Theorems~\ref{thm:pfvp-cancellation} and~\ref{thm:lep-rhel-equivalence} establishes 
a fundamental result: RHEL can be derived from first principles using 
variational methods of EP.
Theorem~\ref{thm:pfvp-cancellation} demonstrates that 
the PFVP formulation is a solution instance of LEP, the first one we found 
that does not have problematic boundary residuals, and thus can be used 
to train Lagrangian systems. Furthermore, we can also recover the RHEL learning 
rule for Hamiltonian systems: Theorem~\ref{thm:lep-rhel-equivalence} shows that 
this computationally viable LEP formulation is mathematically equivalent 
to RHEL through the Legendre transformation. This equivalence provides a 
new theoretical foundation for RHEL, revealing that its distinctive 
properties---forward-only computation, scalability independent of model 
size, and local learning---emerge naturally from the variational structure 
of physical systems rather than being only the consequence of specific Hamiltonian dynamics.



\subsection{Empirical validation}
\label{subsec:empirical-validation}
We now provide numerical validation of Theorem~\ref{thm:lep-rhel-equivalence} by training a Hopfield-inspired dynamical system using both RHEL (Hamiltonian formulation) and LEP (Lagrangian formulation), demonstrating that the two approaches yield identical gradients.

\subsubsection{Example of equivalence: fixed Hamiltonian initial conditions}

\paragraph{Learning rule analysis.}
Consider the case where the Hamiltonian initial conditions $\balpha_0$ and $\bmu_0$ are fixed independently of $\btheta$, i.e., $\partial_{\btheta} \balpha_0 = 0$ and $\partial_{\btheta} \bmu_0 = 0$. In this setting, the \textcolor{red}{red boundary term} in Theorem~\ref{thm:lep-rhel-equivalence} vanishes, and both gradient estimators reduce to the \textcolor{blue}{blue integral term} only:
\begin{align*}
    \partial_{\btheta} \begin{pmatrix} \balpha_0 \\ \bmu_0 \end{pmatrix} = \mathbf{0} \quad \Rightarrow \quad \Delta^{\text{RHEL}}(\beta, \balpha_0, \bmu_0) &= -\frac{1}{\beta} \int_0^T \left[ \partial_{\btheta} H_\beta(\bPhi^e_t, \btheta) - \partial_{\btheta} H_0(\bPhi_t, \btheta) \right] \dt \\
    &= \Delta^{\text{PFVP}}(\beta, \balpha_0, \bgamma_0(\btheta))\,,
\end{align*}
where $\bgamma_0(\btheta) = \partial_{\bp} H_0(\balpha_0, \bmu_0, \btheta)$ is the corresponding Lagrangian initial velocity. The LEP gradient estimator takes the equivalent form:
\begin{align*}
    \Delta^{\text{PFVP}}(\beta, \balpha_0, \bgamma_0(\btheta)) = \frac{1}{\beta} \int_0^T \left[ \partial_{\btheta} L_\beta(\bs_{\bwda,t}^{\beta}, \dot{\bs}_{\bwda,t}^{\beta}, \btheta) - \partial_{\btheta} L_0(\bs_{\bwda,t}^{0}, \dot{\bs}_{\bwda,t}^{0}, \btheta) \right] \dt\,.
\end{align*}
Both learning rules compare parameter derivatives along the free and nudged trajectories, differing only in whether Hamiltonian or Lagrangian variables are used.

\paragraph{Initial condition analysis.}
Crucially, fixing the \emph{Hamiltonian} initial conditions induces \emph{parametric} Lagrangian initial conditions. Through the Legendre transform (Equation~\eqref{eq:mapping-ini}), the initial velocity in the Lagrangian formulation is:
\begin{align*}
    \bgamma_0(\btheta) = \partial_{\bp} H_0(\balpha_0, \bmu_0, \btheta)\,.
\end{align*}
When $H_0$ depends on $\btheta$ (e.g., through a mass matrix or time constant parameters), the initial velocity $\bgamma_0$ becomes $\btheta$-dependent even though the Hamiltonian initial conditions are fixed. This subtlety is illustrated in Figure~\ref{fig:empirical_validation}B, where the Lagrangian phase portraits show varying initial velocities across training epochs as parameters evolve.

\begin{remark}[Simplification for zero initial momentum]
\label{remark:zero_initial_momentum}
In practice, if one wishes to avoid implementing the boundary term in the learning rule, one can set $\bmu_0 = \mathbf{0}$. This yields $\bgamma_0 = \partial_{\bp} H_0(\balpha_0, \mathbf{0}, \btheta) = \mathbf{0}$ for standard kinetic energies, making both initial conditions non-parametric.
\end{remark}

\subsubsection{Hopfield-inspired system with learnable time constants}

We validate our theoretical results on a Hopfield-inspired dynamical system, based on the Hopfield model in Table~\ref{tab:ml_hamiltonians}. For simplicity, we set $\alpha = 0$ and $b = \mathbf{0}$ (no regularization or bias in the potential). The Lagrangian takes the form (see Table~\ref{tab:ml_hamiltonians}):
\begin{align}
    L_0(\bs, \dot{\bs}, \btheta, \bx) = \frac{1}{2}\dot{\bs}^\top \mathrm{diag}(\bm{\tau}) \dot{\bs} - \frac{1}{2}\rho(\bs)^\top \bm{W} \rho(\bs) - \bm{B}^\top \rho(\bs) - \rho(\bx)^\top \rho(\bs)\,,
    \label{eq:hopfield_lagrangian}
\end{align}
where $\bs \in \mathbb{R}^d$ is the state, $\rho(\cdot)$ is an element-wise activation function (e.g., $\tanh$), $\bm{\tau} \in \mathbb{R}^d_{>0}$ is a vector of learnable time constants, $\bm{W} \in \mathbb{R}^{d \times d}$ is the symmetric recurrent weight matrix, $\bm{B} \in \mathbb{R}^d$ is a bias vector, and $\bx_t \in \mathbb{R}^d$ is the time-varying input. The learnable parameters are $\btheta = (\bm{W}, \bm{B}, \bm{\tau})$.

\paragraph{Parameter gradients.} Table~\ref{tab:learning_rules} summarizes the parameter gradients in both formalisms. Notably, the gradient with respect to $\bm{W}$ takes the form $\rho(\bs)\rho(\bs)^\top$, which corresponds to a \emph{Hebbian learning rule}---one of the most famous and oldest learning rules in neuroscience~\citep{dauphinRecurrentHamiltonianEcho2025}. Additionally, the gradient with respect to $\bm{\tau}$ takes different forms in each formulation: in the Lagrangian it depends on velocities $\dot{\bs}$, while in the Hamiltonian it depends on momenta $\bp$. These are related through the Legendre transform and yield identical learning signals.

\begin{table}[h]
\centering
\renewcommand{\arraystretch}{1.8}
\begin{tabular}{@{}l c c@{}}
\toprule
\textbf{Parameter} & \textbf{LEP:} $\partial_{\btheta} L_0$ & \textbf{RHEL:} $\partial_{\btheta} H_0$ \\
\midrule
$\bm{W}$ & $-\frac{1}{2}\rho(\bs) \rho(\bs)^\top$ & $\frac{1}{2}\rho(\bs) \rho(\bs)^\top$ \\
$\bm{B}$ & $-\rho(\bs)$ & $\rho(\bs)$ \\
$\bm{\tau}$ & $\frac{1}{2}\dot{\bs} \odot \dot{\bs}$ & $-\frac{1}{2}\bp \odot \bp \odot \bm{\tau}^{-2}$ \\
\bottomrule
\end{tabular}
\caption{\textbf{Parameter gradients for the Hopfield-inspired system.} The symbol $\odot$ denotes element-wise multiplication. The relation $\partial_{\btheta} H_0 = -\partial_{\btheta} L_0$ (Lemma~\ref{lemma:param_grad}) is verified for each parameter. For $\bm{W}$, the gradient simplifies due to its symmetry. For the time constant $\bm{\tau}$, using $\dot{\bs} = \mathrm{diag}(\bm{\tau})^{-1}\bp$ confirms that $\frac{1}{2}\dot{\bs} \odot \dot{\bs} = \frac{1}{2}\bp \odot \bp \odot \bm{\tau}^{-2}$.}
\label{tab:learning_rules}
\end{table}

\paragraph{Initial condition mapping.} For fixed Hamiltonian initial conditions $(\balpha_0, \bmu_0)$, the corresponding Lagrangian initial conditions are:
\begin{align*}
    \text{Position:} \quad &\balpha_0 \quad \text{(unchanged)} \\
    \text{Velocity:} \quad &\bgamma_0 = \mathrm{diag}(\bm{\tau})^{-1} \bmu_0 \quad \text{($\btheta$-dependent through $\bm{\tau}$)}\,.
\end{align*}
This $\btheta$-dependence of the Lagrangian initial velocity through the learnable time constants $\bm{\tau}$ is what makes the initial conditions parametric in the LEP formulation, as illustrated in Figure~\ref{fig:empirical_validation}B where the initial velocity changes across training epochs.

\subsubsection{Experimental setup}

\paragraph{Task.} We consider a teacher-student learning setup with a 6-dimensional system ($d=6$). The input signal $\bx_t$ is injected into neuron 0 and consists of a superposition of 10 random sine waves:
\begin{align*}
    x_t = \frac{1}{n_{\text{waves}}} \sum_{k=1}^{n_{\text{waves}}} a_k \sin(2\pi f_k t + \phi_k)\,,
\end{align*}
where frequencies $f_k$ are uniformly sampled from $[10^{-2}, 1]$ Hz, phases $\phi_k$ from $[0, 2\pi]$, and amplitudes $a_k$ from $[0.5, 1.5]$. The target output $\by_t$ is generated by a teacher network with the same architecture but different random initialization. The cost function is the squared error on neuron 5: $c(\bs_t, \by_t) = \frac{1}{2}(s_t^{(5)} - y_t^{(5)})^2$. We use Euler integration with time step $\dt = 0.001$, total duration $T = 10$, and nudging strength $\beta = 0.01$. 

\paragraph{Parameter initialization.} The weight matrix $\bm{W}$ is initialized via QR decomposition: a random orthogonal matrix $\bm{U}$ is obtained from the QR factorization of a Gaussian matrix, and eigenvalues are sampled uniformly from $[0.1, 1.0]$, yielding $\bm{W} = \bm{U} \, \mathrm{diag}(\bm{\lambda}) \, \bm{U}^\top$ for controlled spectral properties. Time constants $\bm{\tau}$ are sampled uniformly from $[0.5, 1.0]$. We use the Adam optimizer with learning rate $0.005$ and random seed $50$. Full hyperparameter details are given in Appendix~\ref{appx:experimental_details}.

We perform two separate training runs of 100 epochs each, both starting from the same initial parameter values $\btheta_0$: (1) a RHEL training run using Hamiltonian parameterization with state variables $(\bs, \bp)$ and learning rules from Table~\ref{tab:learning_rules} (right column), and (2) a LEP training run using Lagrangian parameterization with state variables $(\bs, \dot{\bs})$ and learning rules from Table~\ref{tab:learning_rules} (left column). The Hamiltonian initial conditions $(\balpha_0, \bmu_0)$ are fixed and identical for both runs; in the LEP run, these map to Lagrangian initial conditions $(\balpha_0, \bgamma_0)$ where $\bgamma_0 = \mathrm{diag}(\bm{\tau})^{-1}\bmu_0$ evolves as $\bm{\tau}$ changes during training. During LEP training, at every gradient update we also compute the gradient provided by automatic differentiation (BPTT) for comparison. 

\begin{figure}[t!]
    \centering
    \includegraphics[width=1.0\textwidth]{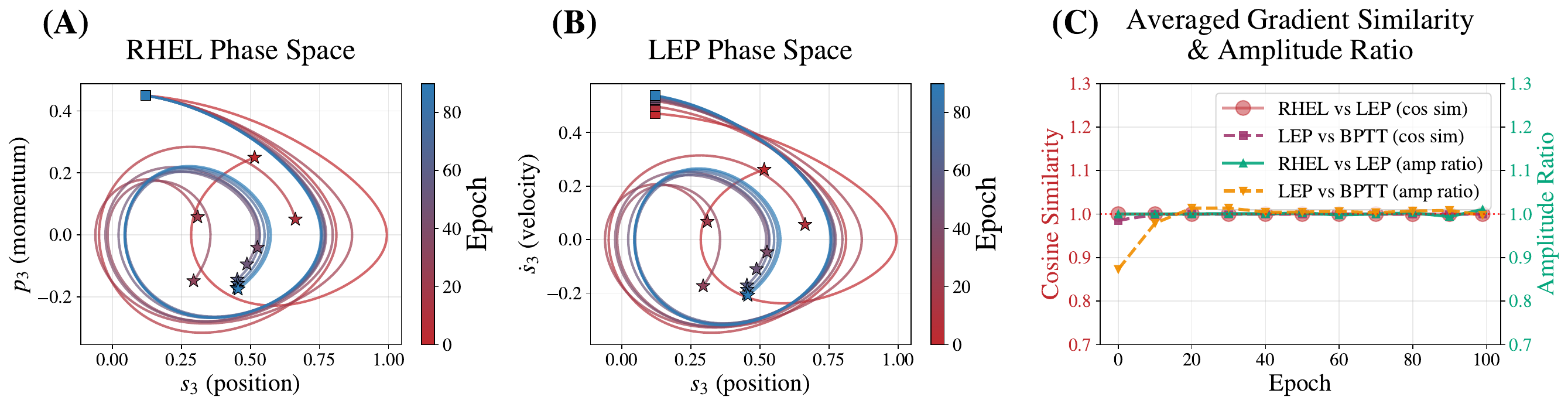}
    \caption{\textbf{Numerical validation of Theorem~\ref{thm:lep-rhel-equivalence} with two separate training runs.} 
    A 6-dimensional Hopfield-inspired system (Equations~\eqref{eq:hopfield_lagrangian}) is trained in two separate runs from the same initial parameters: one with Hamiltonian parameterization + RHEL, one with Lagrangian parameterization + LEP. The phase portrait of ``hidden layer'' neuron 3 (position $s^{(3)}$ vs.\ momentum/velocity) is shown across training epochs (colored trajectories from \textcolor{red}{red} to \textcolor{blue}{blue}); squares mark initial conditions, stars mark final conditions.
    \textbf{(A)}~RHEL training run with Hamiltonian parameterization $(\bs, \bp)$.
    \textbf{(B)}~LEP training run with Lagrangian parameterization $(\bs, \dot{\bs})$.
    \textbf{(C)}~Cosine similarity and amplitude ratio between gradient estimates: LEP vs.\ BPTT (\textcolor[RGB]{162,59,114}{purple}, \textcolor[RGB]{241,143,1}{orange}) and RHEL vs.\ LEP (\textcolor[RGB]{193,41,46}{red}, \textcolor[RGB]{6,167,125}{green}).}
    \label{fig:empirical_validation}
\end{figure}

The experiment confirms the predictions of Theorem~\ref{thm:lep-rhel-equivalence}. The two separate training runs---one with Hamiltonian parameterization and RHEL learning rule, one with Lagrangian parameterization and LEP learning rule---both start from the same initial parameter values $\btheta_0$ and evolve the parameters independently. In the RHEL run (Figure~\ref{fig:empirical_validation}A), the Hamiltonian initial conditions $(\balpha_0, \bmu_0)$ remain fixed across training epochs while the input signal (a superposition of sine waves) drives complex oscillatory dynamics. In the LEP run (Figure~\ref{fig:empirical_validation}B), the same fixed Hamiltonian initial conditions $(\balpha_0, \bmu_0)$ map to Lagrangian initial conditions where the initial velocity $\bgamma_0 = \mathrm{diag}(\bm{\tau})^{-1}\bmu_0$ shifts across epochs as the time constant parameters $\bm{\tau}$ evolve during training, illustrating the $\btheta$-dependence of boundary conditions under the Legendre transform. Despite these two independent training runs using different parameterizations and learning rules, the LEP and RHEL gradient estimates agree nearly perfectly throughout training (cosine similarity $\approx 1$, amplitude ratio $\approx 1$), and both closely match the ground-truth BPTT gradients obtained via automatic differentiation.

\section{From LEP to Dissipative LEP}
\label{sec:dissipative}

The non-dissipative nature of standard Hamiltonian/Lagrangian systems has been recognized as a limitation in both the LEP and HEL literatures, on two fronts. From a hardware perspective, energy conservation restricts the class of physical systems where LEP can be implemented; to address this, \citet{kendall2021gradient} proposed using fractional calculus to extend Lagrangian mechanics to dissipative dynamics. From a machine learning perspective, the absence of dissipation means that, like Unitary RNNs before them \citep{jingGatedOrthogonalRecurrent2017}, Lagrangian/Hamiltonian systems cannot forget \citep{rhel, lopez-pastorSelfLearningMachinesBased2023, boyerLearningDissipateEnergy2025}.

In this section, we take a first step toward addressing this limitation by extending LEP to dissipative systems. We show that dissipation can be introduced through an exponential integrating factor in the Lagrangian, and made practical via the PFVP formulation: during the free phase, the system genuinely dissipates energy, while during the nudge phase, energy is pumped back in.

\subsection{Energy Conservation in Standard Lagrangian Systems}

To understand the non-dissipative nature of standard Lagrangian systems, we first consider an \emph{isolated system} without external input. Let $L_0^{\mathrm{iso}}(\bs_t, \dot{\bs}_t, \btheta)$ denote the Lagrangian of the isolated system, obtained by setting $\bx_t = 0$ in the full Lagrangian $L_0(\bs_t, \dot{\bs}_t, \btheta, \bx_t)$. For any Lagrangian system, there exists a conserved quantity~\cite{olverCalculusVariations2022}:
\begin{equation}
    E = \dot{\bs}_t^\top \partial_{\dot{\bs}} L_0^{\mathrm{iso}} - L_0^{\mathrm{iso}}\,.
    \label{eq:energy_definition}
\end{equation}
This quantity $E$ is the \emph{physical energy} of the system: kinetic energy plus internal potential energy, corresponding to the standard notion of mechanical energy in classical physics.

For the isolated system satisfying the Euler-Lagrange equations $\partial_{\bs} L_0^{\mathrm{iso}} - d_t\partial_{\dot{\bs}} L_0^{\mathrm{iso}} = 0$, this energy is conserved: $d_t E = 0$. This can be verified by direct computation:
\begin{align*}
    d_t E &= d_t\left(\dot{\bs}_t^\top \partial_{\dot{\bs}} L_0^{\mathrm{iso}}\right) - d_t L_0^{\mathrm{iso}} \\
    &= \dot{\bs}_t^\top \left(d_t\partial_{\dot{\bs}} L_0^{\mathrm{iso}} - \partial_{\bs} L_0^{\mathrm{iso}}\right) = 0\,.
\end{align*}

Note that when the external input $\bx_t$ is applied, it introduces in the Lagrangian $L_0(\bs_t, \dot{\bs}_t, \btheta, \bx_t)$ a time dependence that breaks this energy conservation. The system can exchange energy with its environment through the input, but does not dissipate energy by itself. 


\subsection{Dissipative LEP}
\label{subsec:dissipative_lep}

To address the limitation identified above, we extend LEP to dissipative systems by introducing an explicitly time-dependent Lagrangian through an exponential integrating factor. This approach generalizes a known method for simulating dissipation~\cite{rieweNonconservativeLagrangianHamiltonian1996a} to the multivariate case.

\paragraph{Construction of the dissipative Lagrangian.}
We scale the standard physical Lagrangian $L_0$ by an exponential factor, yielding the \emph{dissipative Lagrangian}:
\begin{equation}
    L^{\mathrm{diss}}_\beta(\bs_t, \dot{\bs}_t, \btheta, \bx_t, \by_t) := \expw{\exp(\bzeta t)} \cdot L_0(\bs_t, \dot{\bs}_t, \btheta, \bx_t) + \beta \, c(\bs_t, \by_t)\,,
    \label{eq:dissipative_lagrangian}
\end{equation}
where $\bzeta > 0$ is a scalar damping coefficient. The exponential factor $\expw{\exp(\bzeta t)}$ acts as an integrating factor that introduces dissipation into the dynamics while maintaining the variational structure needed for gradient estimation.

\paragraph{Dissipative gradient estimator.}
We now present the dissipative counterpart of Theorem~\ref{thm:pfvp-cancellation}. The structure remains similar, but with additional terms arising from the exponential time-weighting.

\begin{theorem}[Dissipative LEP with PFVP]
\label{thm:dissipative_LEP}
Let $t \mapsto \bs_{\bwda,t}^\beta(\btheta)$ denote the solution to the dissipative Euler-Lagrange equation:
\begin{equation}
    \mathrm{EL}_{\mathrm{diss}}(t, \btheta, \beta) := \partial_{\bs} L_0 - d_t\partial_{\dot{\bs}} L_0 - \bzeta \, \partial_{\dot{\bs}} L_0 + \beta \, \expw{\exp(-\bzeta t)} \, \partial_{\bs} c = 0\,,
    \label{eq:dissipative_EL}
\end{equation}
with PFVP boundary conditions. Then the gradient of the objective functional is given by:
\begin{align*}
    \dtheta \mathcal{C}[\bs_{\bwda}^0(\btheta)] &= \lim_{\beta \to 0} {\Delta}_{\mathrm{PFVP}}(\beta, \balpha_0(\btheta), \bgamma_0(\btheta))\,,
\end{align*}
where the dissipative PFVP gradient estimator is:
\begin{align}
    {\Delta}_{\mathrm{PFVP}}(\beta, \balpha_0, \bgamma_0) &:= \frac{1}{\beta} \Bigg[ \textcolor{blue}{\int_0^T \expw{\exp(\bzeta t)} \cdot \left( \partial_{\btheta} L_\beta(\bs_{\bwda,t}^\beta, \dot{\bs}_{\bwda,t}^\beta, \btheta) - \partial_{\btheta} L_0(\bs_{\bwda,t}^0, \dot{\bs}_{\bwda,t}^0, \btheta) \right) \, \mathrm{d}t} \nonumber \\
    &\quad + \textcolor{red}{\left( \dtheta \partial_{\dot{\bs}} L_0(\balpha_0, \bgamma_0, \btheta) \right)^\top} \textcolor[RGB]{0,140,0}{\left( \bs_{\bwda,0}^\beta - \balpha_0 \right)} \nonumber \\
    &\quad - \textcolor{red}{(\partial_{\btheta} \balpha_0)^\top} \textcolor[RGB]{0,140,0}{\left( \partial_{\dot{\bs}} L_\beta(\bs_{\bwda,0}^\beta, \dot{\bs}_{\bwda,0}^\beta, \btheta) - \partial_{\dot{\bs}} L_0(\balpha_0, \bgamma_0, \btheta) \right)} \Bigg]\,,
    \label{eq:dissipative_PFVP_gradient}
\end{align}
with $\balpha_0 = \balpha_0(\btheta)$ and $\bgamma_0 = \bgamma_0(\btheta)$ the initial conditions. The \textcolor{blue}{blue integral term} weights the Lagrangian difference by the exponential factor, the \textcolor{red}{red terms} involve parameter derivatives of initial conditions, and the \textcolor[RGB]{0,140,0}{green terms} measure state and momentum differences at the initial time.
\end{theorem}

\begin{proof}
See Appendix~\ref{appx:dissipative_LEP}.
\end{proof}

\paragraph{Interpretation: dissipative terms.}
Compared to the conservative case, the dissipative formulation introduces a new \expw{exponentially-weighted term} (shown in \expw{orange}) in both the Euler-Lagrange equation and the gradient estimator:
\begin{itemize}[leftmargin=*]
    \item In the \textbf{Euler-Lagrange equation}~\eqref{eq:dissipative_EL}: The term $-\bzeta \, \partial_{\dot{\bs}} L_0$ introduces friction-like damping, while the cost term acquires a down-weighting factor $\expw{\exp(-\bzeta t)}$ that reduces nudging strength at later times.
    \item In the \textbf{gradient estimator}~\eqref{eq:dissipative_PFVP_gradient}: The \textcolor{blue}{integral term} is weighted by $\expw{\exp(\bzeta t)}$, emphasizing later time steps. This reflects that dissipative dynamics progressively "forget" early information, so gradients appropriately emphasize recent observations.
    \item For the \textbf{free phase} ($\beta = 0$): The Euler-Lagrange equation reduces to $\partial_{\bs} L_0 - d_t\partial_{\dot{\bs}} L_0 - \bzeta \, \partial_{\dot{\bs}} L_0 = 0$, which is identical to applying the standard Euler-Lagrange equation to the exponentially-weighted Lagrangian $\exp(\bzeta t) L_0$.
\end{itemize}
The boundary terms (\textcolor{red}{red} and \textcolor[RGB]{0,140,0}{green}) remain identical to the conservative PFVP case.

\paragraph{Verification: energy dissipation.}
To confirm that the exponential integrating factor introduces a dissipative system with energy decay (rather than merely rescaling time), we analyze how energy evolves under the dissipative dynamics. We again consider the isolated system ($\bx_t = 0$) to cleanly isolate the effect of dissipation. We find that for a trajectory $t \mapsto \bs_t$ satisfying the dissipative Euler-Lagrange equation~\eqref{eq:dissipative_EL} with $\beta = 0$ and $\bx_t = 0$, the physical energy $E$ (defined as in~\eqref{eq:energy_definition}) evolves as (see Proposition~\ref{prop:energy_dissipation} in Appendix):
$$d_t E = - \bzeta \, \dot{\bs}_t^\top \partial_{\dot{\bs}} L_0^{\mathrm{iso}}$$
In the special case with quadratic kinetic energy $E_{\mathrm{kin}}(\dot{\bs}_t) = \frac{1}{2} \|\dot{\bs}_t\|^2$, this reduces to:
$$d_t E = - \bzeta \|\dot{\bs}_t\|^2 \leq 0$$
Since $\bzeta > 0$, energy is strictly dissipated whenever $\dot{\bs}_t \neq 0$, confirming the physically expected behavior of a dissipative system.

\subsection{Empirical Validation}
\label{subsec:empirical}

We now validate the dissipative LEP framework empirically. Our goals are twofold: first, to confirm that the exponential integrating-factor mechanism genuinely introduces dissipation, with energy transfers consistent with Proposition~\ref{prop:harmonic_energy_evolution}; second, to verify that the dissipative LEP gradient estimator accurately recovers parameter gradients, using autodiff/BPTT as a ground-truth baseline. We conduct these experiments on a system of $d=6$ coupled damped harmonic oscillators (Figure~\ref{fig:dissipative_oscillators}), extending the undamped system of Section~\ref{subsec:lagrangian_hamiltonian} with damping forces via the exponential integrating-factor introduced above.

\paragraph{System description.}
Consider a $d$-dimensional system of coupled harmonic oscillators with mass vector $\bm{m} \in \mathbb{R}^{d}_{>0}$, symmetric stiffness matrix $\bm{K} \in \mathbb{R}^{d \times d}$, and scalar damping coefficient $\bzeta > 0$. The damping vector is $\bgamma := \zeta \bm{m}$, making the damping force proportional to mass (for independent per-dimension damping coefficients $\gamma_i$ decoupled from mass, see Appendix~\ref{app:anisotropic}). An external input $x_t$ drives the first oscillator, and the output is measured from the last oscillator $y_t = s_{d,t}$. The learnable parameters are $\btheta = \{\bm{m}, \bm{K}, \zeta\}$. We use fixed initial conditions $(\bs_0, \dot{\bs}_0) = (\balpha_0, \mathbf{0})$ (zero initial velocity), ensuring boundary terms vanish as explained in Remark~\ref{remark:zero_initial_momentum}. The Lagrangian of the undamped, input-driven system is:
\begin{equation}
    L_0(\bs_t, \dot{\bs}_t, \btheta, x_t) = \frac{1}{2} (\bm{m} \odot \dot{\bs}_t) \cdot \dot{\bs}_t - \frac{1}{2} \bs_t^\top \bm{K} \bs_t - \bm{e}_1^\top \bs_t \, x_t\,,
    \label{eq:harmonic_lagrangian}
\end{equation}
where $\bm{e}_1 = (1, 0, \ldots, 0)^\top$ and $\odot$ denotes element-wise multiplication. The dissipative Lagrangian is $L^{\mathrm{diss}}_\beta = \exp(\bzeta t) \cdot L_0 + \beta \, c(\bs_t, y_t)$ with cost $c(\bs_t, y_t) = \frac{1}{2}(s_{d,t} - y_t)^2$.

\paragraph{Dynamics and gradient estimator: contrast with classical LEP.}
Table~\ref{tab:dissipative_summary} summarizes the dissipative LEP equations. Both the free and nudged dynamics are integrated \emph{forward in time} as Initial Value Problems (IVPs). Compared to the classical (non-dissipative) LEP gradient estimator (Theorem~\ref{thm:pfvp-cancellation}), the dissipative formulation introduces two key modifications:
\begin{enumerate}[leftmargin=*]
    \item \textbf{Sign-flipped damping in the nudge phase.} For the nudged phase, we apply the bouncing-backward kick (Proposition~\ref{prop:solution-pfvp-reversibility}): solving the PFVP backward in time from final conditions $(\bs^\beta_T, \dot{\bs}^\beta_T) = (\bs^0_T, \dot{\bs}^0_T)$ is equivalent to integrating forward with velocity-reversed initial conditions $(\bs^0_T, -\dot{\bs}^0_T)$ and, crucially, \emph{sign-flipped damping} ($+\bgamma \to -\bgamma$). This sign flip reverses the energy flow: while the free phase dissipates energy, the nudged phase \emph{pumps energy back} (see Appendix~\ref{appx:dissipative_oscillators} and Proposition~\ref{prop:dissipative_time_reversal}).
    \item \textbf{Exponential weighting in nudging and learning rule.} The cost nudging term in the Euler-Lagrange equation~\eqref{eq:dissipative_EL} acquires a down-weighting factor $\exp(-\bzeta t)$, and the gradient estimator~\eqref{eq:dissipative_PFVP_gradient} is weighted by $\exp(\bzeta t)$. These exponential factors arise from the integrating-factor construction and are essential for correct gradient estimation.
\end{enumerate}
With fixed initial conditions and PFVP final-condition matching, all boundary terms in Theorem~\ref{thm:dissipative_LEP} vanish, leaving only the integral term that compares trajectories.

\paragraph{Classical LEP baseline.}
To isolate the importance of these modifications, we also evaluate a classical LEP baseline that correctly performs the sign-flipped damping ($+\bgamma \to -\bgamma$) during the nudge phase, but \emph{omits both exponential factors}: it uses the standard nudging $\beta \, \partial_{\bs} c$ instead of the down-weighted $\beta \, \exp(-\bzeta t) \, \partial_{\bs} c$ in the dynamics~\eqref{eq:dissipative_EL}, and the standard unweighted integral $\int_0^T [\cdots] \, \mathrm{d}t$ instead of the exponentially-weighted $\int_0^T [\cdots] \exp(\bzeta t) \, \mathrm{d}t$ in the gradient estimator~\eqref{eq:dissipative_PFVP_gradient}. In other words, this baseline accounts for the dissipative dynamics (including the sign flip) but not for the effect of dissipation on the variational gradient formula.

\begin{table}[h]
\centering
\renewcommand{\arraystretch}{2.2}
\resizebox{\textwidth}{!}{
\begin{tabular}{@{}l l l l@{}}
\toprule
\textbf{Phase} & \textbf{Dynamics (IVP)} & \textbf{Time} & \textbf{Initial Conditions} \\
\midrule
Free ($\beta=0$) & $\bm{m} \odot \ddot{\bs}^0_t + \bgamma \odot \dot{\bs}^0_t + \bm{K} \bs^0_t = -x_t \bm{e}_1$ & $t \in [0,T]$ & $(\bs^0_0, \dot{\bs}^0_0) = (\balpha_0, \mathbf{0})$ \\[0.5em]
Nudged ($\beta > 0$) & $\bm{m} \odot \ddot{\bs}^\beta_{t'} - \bgamma \odot \dot{\bs}^\beta_{t'} + \bm{K} \bs^\beta_{t'} = -x_{T-t'} \bm{e}_1$ & $t' \in [0,T]$ & $(\bs^\beta_0, \dot{\bs}^\beta_0) = (\bs^0_T, -\dot{\bs}^0_T)$ \\
& \qquad $- \beta e^{-\bzeta(T-t')} \bm{e}_d (s^\beta_{d,t'} - y_{T-t'})$ & & \\[0.5em]
\midrule
\multicolumn{4}{@{}l@{}}{\textbf{Gradient estimator:} \quad $\displaystyle \dtheta \mathcal{C}[\bs^0(\btheta)] = \lim_{\beta \to 0} \frac{1}{\beta} \int_0^T \left[ \partial_{\btheta} L_\beta(\bs^\beta_t, \dot{\bs}^\beta_t, \btheta, x_t) - \partial_{\btheta} L_0(\bs^0_t, \dot{\bs}^0_t, \btheta, x_t) \right] \exp(\bzeta t) \, \mathrm{d}t$} \\[0.8em]
\multicolumn{4}{@{}l@{}}{\quad with $\partial_{m_i} L_0 = \frac{1}{2} \dot{s}_{i,t}^2$, \quad $\partial_{\bm{K}} L_0 = -\frac{1}{2} \bs_t \bs_t^\top$, \quad $e^{-\bzeta t}\partial_{\bzeta} [e^{\bzeta t} L_0] = t \cdot L_0(\bs_t, \dot{\bs}_t, \btheta, x_t)$} \\[0.5em]
\midrule
\multicolumn{4}{@{}l@{}}{\textbf{Energy definition:} \quad $\displaystyle E(t) = \underbrace{\frac{1}{2} (\bm{m} \odot \dot{\bs}_t) \cdot \dot{\bs}_t}_{E_{\mathrm{kin}}(t)} + \underbrace{\frac{1}{2} \bs_t^\top \bm{K} \bs_t}_{U_{\mathrm{int}}(t)}$} \\[0.8em]
\multicolumn{4}{@{}l@{}}{\textbf{Energy transfer during inference (free phase)} (Prop.~\ref{prop:harmonic_energy_evolution}): \quad $E(t) = E(0) + W_{\mathrm{input}}(t) - D_{\mathrm{diss}}(t)$} \\[0.3em]
\multicolumn{4}{@{}l@{}}{\quad $\bullet$ \textbf{Input work:} $W_{\mathrm{input}}(t) = -\int_0^t \dot{s}_{1,\tau} \, x_\tau \, \mathrm{d}\tau$ \quad (power = force $\times$ velocity; can inject or extract energy)} \\[0.3em]
\multicolumn{4}{@{}l@{}}{\quad $\bullet$ \textbf{Dissipation:} $D_{\mathrm{diss}}(t) = \int_0^t \bgamma \cdot \dot{\bs}_\tau^2 \, \mathrm{d}\tau \geq 0$ \quad (always removes energy)} \\
\bottomrule
\end{tabular}}
\caption{\textbf{Summary of dissipative LEP for coupled harmonic oscillators.} Both the free and nudged phases are integrated forward in time as IVPs from their respective initial conditions; for the nudged phase, the backward PFVP is implemented through the bouncing-backward kick with reversed initial velocity and sign-flipped damping. The gradient estimator contains \emph{only integral terms}---all boundary terms cancel due to fixed initial conditions and PFVP matching of final conditions. During inference (free phase), the energy $E(t) = E_{\mathrm{kin}}(t) + U_{\mathrm{int}}(t)$ (kinetic plus internal potential) evolves through two mechanisms: work done by the external input (force $\times$ velocity), and dissipation (proportional to $\bgamma \cdot \dot{\bs}_t^2$, always removes energy).}
\label{tab:dissipative_summary}
\end{table}

Figure~\ref{fig:dissipative_oscillators} reports the outcomes of both validations.
Regarding energy dynamics (Panels~A--B), the free-phase energy decomposition shows kinetic and potential energy oscillating out of phase as energy transfers between modes, while the total energy increases over time due to the external input driving the system---kinetic energy starts at zero since $\dot{\bs}_0 = \mathbf{0}$.
The cumulative dissipation and input work are of comparable magnitude, confirming that dissipation is balanced by the energy injected by the external drive.
The energy conservation relation $E(t) = E(0) + W_{\mathrm{input}}(t) - D_{\mathrm{diss}}(t)$ (Proposition~\ref{prop:harmonic_energy_evolution}) is verified numerically, confirming that the integrating-factor mechanism produces physically consistent dissipative behavior.
Regarding gradient accuracy (Panel~C), the dissipative LEP gradient estimates for all parameters ($\bm{m}$, $\bm{K}$, $\bzeta$) closely match the autodiff/BPTT ground truth, with relative Euclidean distance below $0.10$.
In contrast, the classical LEP baseline (which performs the sign flip but omits the exponential factors) yields relative distances above $1$ for $\bm{m}$ and $\bm{K}$, demonstrating that the sign-flipped damping alone is not sufficient---the exponential weighting in both the nudging and the learning rule is essential for correct gradient estimation in dissipative systems. Note that the classical LEP baseline produces an identically zero gradient for $\bzeta$: without the exponential weighting $e^{\bzeta t}$ in the learning rule, the damping coefficient does not appear in the gradient estimator, so no LEP bar is shown for $\bzeta$ in Panel~C.


\paragraph{Comparison with other approaches.} Kendall \textit{et al.} \citep{kendall2021gradient} proposed using fractional calculus to extend Lagrangian systems to dissipative dynamics, building on earlier work \citep{rieweNonconservativeLagrangianHamiltonian1996a}. While promising, this approach is limited to non-standard fractional dissipative elements, and they implicitly assumed fixed boundary conditions (equivalent to CBPVP), which would need to be reformulated as a PFVP for practical use. Another approach is to use periodic systems driven by periodic inputs \citep{bernemanEquilibriumPropagationPeriodic2025}, which also simplifies the boundary terms at the cost of restricting applicability to periodic systems (there is no need to match final conditions, but the input must be repeated multiple times). \citep{massarEquilibriumPropagationLearning2025} also proposed leveraging periodic systems (rather than the bouncing-backward kick used in our work), but introduced dissipation via the same exponential integrating factor as ours. Interestingly, all these approaches start from the mathematically guiding Lagrangian framework (see Section~\ref{subsec:lagrangian_hamiltonian}). Finally, a method to simulate dissipativity within a non-dissipative system was proposed by \citep{lopez-pastorSelfLearningMachinesBased2023}, where part of the system serves as an ancilla (an auxiliary subsystem that preserves information to maintain reversibility, a concept from reversible computing) for task-irrelevant information, yet can still be exploited for leveraging bouncing-backward kick.

\begin{figure}[t!]
    \centering
    \includegraphics[width=1.0\textwidth]{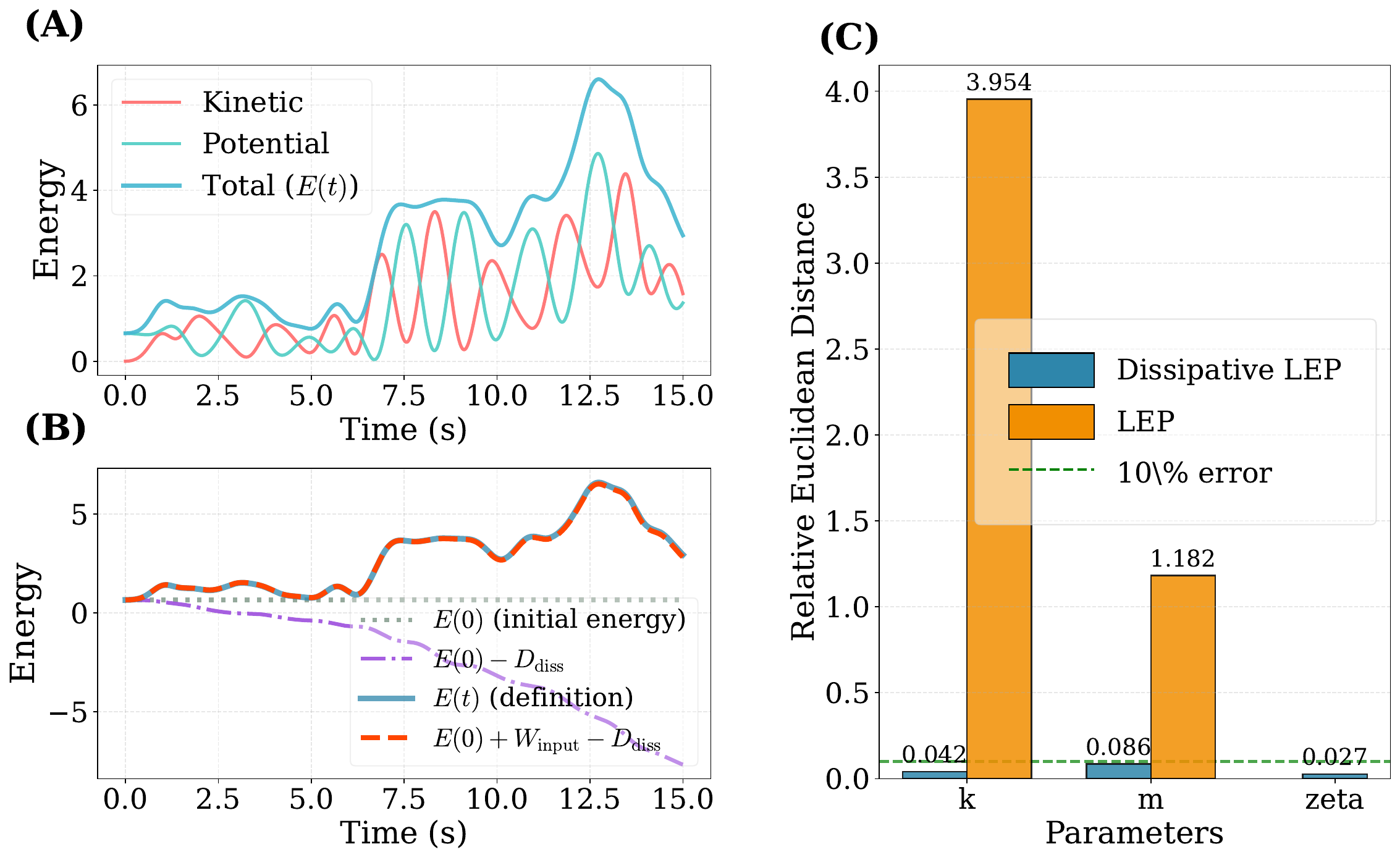}
    \caption{\textbf{Empirical validation of dissipative LEP} on $d=6$ coupled damped harmonic oscillators.
    \textbf{(A)}~Internal energy decomposition during the free phase: kinetic energy $E_{\mathrm{kin}}$, internal potential $U_{\mathrm{int}}$, and total energy $E$.
    \textbf{(B)}~Cumulative energy balance: initial energy $E(0)$ (grey dashed), cumulative dissipation $D_{\mathrm{diss}}$ (purple), input work $W_{\mathrm{input}}$ (red), and total energy $E(t)$.
    \textbf{(C)}~Relative Euclidean distance of gradient estimates to autodiff/BPTT for dissipative LEP (blue) and the classical LEP baseline (orange, which performs the sign-flipped damping but omits the exponential weighting in nudging and learning rule) across parameters $\bm{m}$, $\bm{K}$, and $\bzeta$.}
    \label{fig:dissipative_oscillators}
\end{figure}


\section{Discussions and Future Works}

\paragraph{Summary.} This work sets out to address two questions. 

(a) The first is whether \textit{EP can be generalised to design efficient and practically-implementable learning algorithms for time-varying inputs and outputs}. We show that it can, through Lagrangian Equilibrium Propagation (LEP) that extends EP's variational principles from steady states to entire physical trajectories, provided that the boundary conditions are chosen carefully. 

(b) The second question is \textit{how Hamiltonian Echo Learning algorithms relate to this generalised EP} framework. We show that RHEL is a special case of LEP obtained by combining the PFVP boundary conditions with the Legendre transformation.

A central finding is that \textit{the choice of boundary conditions has a decisive impact on whether the resulting learning algorithm is practical}. We show that the most natural choices lead to a trade-off between tractability of the gradient estimator and tractability of the trajectory computation. On one hand, the Constant Initial Value Problem (CIVP) yields causal, easy-to-simulate trajectories but introduces boundary residual terms that are hard to compute and requires explicit backward passes. On the other hand, the Constant Boundary Position Value Problem (CBPVP) eliminates these residuals but imposes non-causal boundary conditions that require an iterative boundary value solver. The Parametric Final Value Problem (PFVP), combined with time-reversibility, resolves this trade-off: it eliminates boundary residuals entirely while preserving causal, forward-only, streaming computation with no iterative solver overhead.

By combining this PFVP formulation with the Legendre transformation, we establish that RHEL is a special case of LEP. This reveals that RHEL's distinctive properties, namely local learning rules, forward-only computation, and the ``bouncing-backward'' echo phase, are not artifacts of Hamiltonian mechanics but arises naturally from the underlying variational structure.

Finally, we show that the variational framework of LEP provides guiding principles to extend these algorithms beyond conservative systems. By introducing an exponential integrating factor in the Lagrangian, dissipative dynamics can be accommodated within the PFVP framework provided the sign of the damping can be flipped during the echo phase. The variational derivation prescribes both the correct exponential weighting in the nudging and in the learning rule. Empirical validation on coupled damped harmonic oscillators confirms that this dissipative LEP gradient estimator accurately recovers BPTT gradients, and that omitting the prescribed weighting leads to incorrect gradients even when the sign-flipped damping is correctly applied.

\paragraph{Limitations and future directions.}
First, the PFVP formulation still requires an echo phase, \emph{i.e.} a second forward pass that can only begin after the free phase completes, making the algorithm inherently offline in the sense that gradients are not available during inference. Developing an online variant that eliminates this echo phase would bring LEP closer to Real-Time Recurrent Learning~\citep{williamsLearningAlgorithmContinually1989}, potentially offering more efficient alternatives to its notoriously high computational cost. 
Second, the elimination of boundary residuals relies on time-reversibility, which restricts applicability to conservative (or sign-controllable dissipative) systems. Extending the PFVP formulation beyond time-reversible systems, or identifying weaker sufficient conditions for boundary residual cancellation, would broaden its applicability. 
Third, while RHEL has been tested at larger scale on state-space models~\citep{rhel}, neither LEP nor dissipative LEP have been validated on real physical systems. 
These advances would further solidify the theoretical foundation for physics-based learning algorithms that unify inference and training within single physical systems, offering promising alternatives to conventional digital computing paradigms for future neuromorphic and analog computing architectures.

\paragraph{Reproducibility.} Code to reproduce the experiments will be made available.

\bibliographystyle{alpha}
\bibliography{sample}

\newpage
\appendix 
\part{Appendix}
\parttoc
\newpage
\section{Derivative and shape conventions}
\label{appx:sec:derivative_conventions}

Throughout the paper, we adopt a fixed coordinate-based matrix convention. 
State variables are represented as column vectors
$s \in \mathbb{R}^{d_s}$ and parameters as column vectors $\theta \in \mathbb{R}^{d_\theta}$.

Gradients with respect to state variables (including velocities) are column vectors:
\begin{equation*}
\partial_s L \in \mathbb{R}^{d_s},
\qquad
\partial_{\dot s} L \in \mathbb{R}^{d_s}\,.
\end{equation*}

Jacobians with respect to parameters are matrices acting on parameter variations:
\begin{equation*}
\partial_\theta s \in \mathbb{R}^{d_s \times d_\theta},
\qquad
\partial^2_{\theta, \dot s} L \in \mathbb{R}^{d_s \times d_\theta}\,.
\end{equation*}

\paragraph{Total derivatives vs partial derivatives.}
We distinguish between \emph{partial derivatives} and \emph{total derivatives}:
\begin{itemize}[leftmargin=*]
    \item $\partial_\theta$ denotes the \emph{partial derivative} with respect to $\theta$, holding all other explicit arguments fixed.
    \item $d_\theta$ denotes the \emph{total derivative} with respect to $\theta$, accounting for both explicit and implicit dependencies through the chain rule.
\end{itemize}
For example, $d_\theta \partial_{\dot s} L$ is a total derivative (Jacobian) with shape $\mathbb{R}^{d_s \times d_\theta}$ that accounts for how $\partial_{\dot s} L$ changes with $\theta$ through all dependencies, including implicit ones through the state variables.

Scalar or parameter-wise quantities are obtained via standard matrix multiplication.
In particular, for any $v \in \mathbb{R}^{d_s}$,
\begin{equation*}
(\partial^2_{\theta, \dot s} L)^\top v \;\in\; \mathbb{R}^{d_\theta}\,,
\end{equation*}
where the transpose denotes the usual matrix transpose and ensures dimensional consistency.
Equivalently, this corresponds componentwise to
\begin{equation*}
\big[(\partial^2_{\theta, \dot s} L)^\top v\big]_j
=
\sum_{i=1}^{d_s}
\partial^2_{\theta_j, \dot{s}_i} L \, v_i \,.
\end{equation*}

All transposes appearing in the paper are genuine matrix transposes introduced to make
matrix products well-defined; no implicit row/column conventions are assumed.
Under this convention, all gradient expressions and boundary residual terms are dimensionally consistent.

\paragraph{Functional (variational) derivative.}
We denote by $\delta_s A$ the \emph{functional derivative} (or \emph{variational derivative}) of a functional $A[s]$ with respect to the trajectory $s$. It is defined implicitly through the directional derivative: for any smooth variation $\eta$, $\delta_s A \cdot \eta \defn \left. d_\epsilon \right|_{\epsilon=0} A[s + \epsilon \eta]$. Informally, $\delta_s A$ is the infinite-dimensional analogue of a gradient: it captures how $A$ responds to infinitesimal deformations of the trajectory.

\section{Glossary}
\label{appx:sec:glossary}

Table~\ref{tab:lep_formulations} summarizes the different boundary condition formulations for Lagrangian Equilibrium Propagation (LEP) discussed in this paper. Each formulation defines how trajectories $\bs_t^\beta(\btheta)$ are specified through boundary conditions, leading to distinct computational properties and practical implications.

\begin{table}[H]
\centering
\caption{\textbf{Summary of LEP Boundary Condition Formulations.} Comparison of the three main boundary condition formulations for Lagrangian Equilibrium Propagation.}
\label{tab:lep_formulations}
\begin{tabular}{lp{10cm}l}
\toprule
\textbf{Acronym} & \textbf{Definition} & \textbf{Section} \\
\midrule
CIVP & \textbf{Constant Initial Value Problem:} All trajectories share fixed initial conditions independent of $\btheta$ and $\beta$. Defined by:
$$\forall t \in [0,T] \quad t \mapsto \bs_{\fwda,t}^\beta(\boldsymbol{{\btheta}}, (\boldsymbol{\alpha}_0, \boldsymbol{\gamma}_0)) 
\text{ satisfies:} \quad
\begin{cases}
\text{EL}(t, \boldsymbol{{\btheta}}, \beta) = 0 \\
\bs_{\fwda,0}^\beta(\boldsymbol{{\btheta}}) = \boldsymbol{\alpha}_0 \\
\dot{\bs}_{\fwda,0}^\beta(\boldsymbol{{\btheta}}) = \boldsymbol{\gamma}_0
\end{cases}$$
Causal boundary conditions: forward integration from $t=0$. Suffers from intractable boundary residuals. & \ref{subsec:civp} \\
\midrule
CBPVP & \textbf{Constant Boundary Position Value Problem:} All trajectories satisfy fixed position boundary conditions at both endpoints, independent of $\btheta$ and $\beta$. Defined by:
$$\forall t \in [0,T], \quad t \mapsto \bs_{\bda,t}^\beta(\boldsymbol{{\btheta}}, (\boldsymbol{\alpha}_0, \boldsymbol{\alpha}_T)) 
\text{ satisfies:} \quad
\begin{cases}
\text{EL}(t, \boldsymbol{{\btheta}}, \beta) = 0 \\
\bs_{\bda,0}^\beta(\boldsymbol{{\btheta}}) = \boldsymbol{\alpha}_0 \\
\bs_{\bda,T}^\beta(\boldsymbol{{\btheta}}) = \boldsymbol{\alpha}_T
\end{cases}$$
Non-causal boundary conditions: requires solving a two-point boundary value problem. Eliminates boundary residuals but computationally expensive. & \ref{subsec:cfvp} \\
\midrule
PFVP & \textbf{Parametric Final Value Problem:} Final boundary conditions depend on parameters $\btheta$. Defined by:
$$\forall t \in [0,T] \quad t \mapsto \bs_{\bwda,t}^\beta(\btheta, (\balpha_T(\btheta), \bgamma_T(\btheta))) \text{ satisfies:}\quad
\begin{cases}
\text{EL}_{r}(t, \btheta, \beta) = 0 \\
\bs_{\bwda,T}^\beta(\btheta) = \balpha_T(\btheta) \\
\dot{\bs}_{\bwda,T}^\beta(\btheta) = \bgamma_T(\btheta)
\end{cases}$$
where parametric boundaries are derived from CIVP free phase: $\balpha_T(\btheta) = \bs_{\fwda,T}^0(\btheta, (\balpha_0, \bgamma_0))$ and $\bgamma_T(\btheta) = \dot{\bs}_{\fwda,T}^0(\btheta, (\balpha_0, \bgamma_0))$. \\
\bottomrule
\end{tabular}
\end{table}

\section{Preparatory results}

\begin{remark}[Regularity and uniqueness of solutions]
\label{rmk:regularity}
Several proofs in this paper invoke uniqueness of solutions to initial value problems.
The classical sufficient condition (the uniqueness part of the Picard--Lindel\"of theorem) is that the Euler--Lagrange equation, once rewritten as a first-order system
$\dot{\bm{z}}=\bm{f}(t,\bm{z})$, has a right-hand side $\bm{f}$ that is locally Lipschitz in~$\bm{z}$.
Two ingredients are needed:
\begin{enumerate}
    \item \textbf{Mass-matrix invertibility.} The Hessian $\partial^2_{\dot{\bs},\dot{\bs}}L$ must be invertible so that $\ddot{\bs}$ can be expressed as a function of $(\bs,\dot{\bs},t)$.
    This is already required for the Legendre transform (Proposition~\ref{prop:Legendre_transform}).
    \item \textbf{Local Lipschitz continuity of the resulting right-hand side.}
    When $L\in C^2$, the implicit-function theorem guarantees that the map $(\bs,\dot{\bs},t)\mapsto\ddot{\bs}$ is $C^1$, hence locally Lipschitz.
\end{enumerate}
\textbf{Verification for the models of Table~\ref{tab:ml_hamiltonians}.}
For all three models the mass matrix is either the identity (\textbf{UniCORNN}, \textbf{LinOSS}) or $\mathrm{diag}(\bm{\tau})$ with $\tau_i>0$ (\textbf{Hopfield}), hence invertible.
Moreover, the right-hand sides are in fact \emph{globally} Lipschitz:
\textbf{LinOSS} is linear in~$\bm{z}$;
\textbf{UniCORNN} involves $\tanh$, which is globally Lipschitz (derivative bounded by~$1$) composed with a linear map;
\textbf{Hopfield} involves products of $\tanh$ and $\tanh'$, both globally bounded, composed with linear maps---the Lipschitz constant depends on $\|W\|$ but remains finite for any fixed~$W$.
Global Lipschitz continuity ensures that solutions exist and are unique on any interval $[0,T]$.
\end{remark}

\begin{lemma}[Odd derivative property of reversible Lagrangian]\label{lemma:odd_derivative}
For a reversible Lagrangian $L_{\beta}$ that satisfies $L_{\beta}(\bs,\dot{\bs},\btheta) = L_{\beta}(\bs,-\dot{\bs},\btheta)$, the derivative with respect to $\dot{\bs}$ satisfies:
\begin{equation*}
    \partial_{\dot{\bs}} L_{\beta}(\bs,-\dot{\bs},\btheta) = -\partial_{\dot{\bs}} L_{\beta}(\bs,\dot{\bs},\btheta)\,.
\end{equation*}
\end{lemma}
\begin{proof}
Since the Lagrangian $L_{\beta}$ is reversible, it satisfies
\begin{equation*}
    L_{\beta}(\bs,\dot{\bs},\btheta) = L_{\beta}(\bs,-\dot{\bs},\btheta)\,,
\end{equation*}
i.e.\ it is even in $\dot{\bs}$. Consequently, its derivative with respect to $\dot{\bs}$ is odd, yielding the stated result.
\end{proof}





\begin{proposition}[Least Action principle for parametrized perturbations]
\label{prop:chain-rule-varvect}
Let $A[\bs(\btheta),\btheta]=\int_0^T L(t,\btheta,\bs_t(\btheta),\dot{\bs}_t(\btheta))dt$ be a scalar functional of an arbitrary function $\bs(\btheta)$ that depends on some parameter vector $\btheta \in \R^p$. Further, $A$ also has an explicit dependence on $\btheta$. Here, $\btheta$ is a non-time-varying parameter.

If $\bs(\btheta)$ satisfies the Euler-Lagrange equations $\partial_{\bs} L - d_t\partial_{\dot{\bs}} L=0$, then the implicit variation of $A$ through $\btheta$ via $\bs$ reduces to boundary terms:
\begin{align*}
\delta_{\bs} A[\bs(\btheta),\btheta] \delta_{\btheta} \bs(\btheta) = \left[\left(\partial_{\btheta} \bs_t(\btheta)\right)^\top \cdot\partial_{\dot{\bs}}L\left(t,\btheta, \bs_t(\btheta), \dot{\bs}_t(\btheta)\right) \right]_0^T \,.
\end{align*}
\textbf{Implicit variation (definition):} The implicit variation along each component $\theta_i$ is defined as the change in $A$ due to $\theta_i$ acting \emph{only} through $\bs$, with the explicit $\btheta$-dependence of $A$ held fixed:
\begin{align}
    \delta_{\bs} A[\bs(\btheta),\btheta] \delta_{\theta_i} \bs(\btheta) \defn \left. d_\epsilon \right|_{\epsilon=0} A[\bs(\btheta + \epsilon e_i), \btheta] \,.
    \label{eq:implicit-variation-def}
\end{align}
\textbf{Notation:} Here $e_i$ denotes the $i$-th canonical basis vector in $\R^p$ (the parameter space of $\btheta$). Each $\delta_{\bs} A \delta_{\theta_i} \bs$ is a scalar, and the full vector form $\delta_{\bs} A[\bs(\btheta),\btheta] \delta_{\btheta} \bs(\btheta) \in \R^{p}$ is the vector obtained by concatenation: $\delta_{\bs} A \delta_{\btheta} \bs = \left(\delta_{\bs} A \delta_{\theta_1} \bs, \ldots, \delta_{\bs} A \delta_{\theta_p} \bs\right)^\top$.
\end{proposition}

\begin{proof}
We prove the result component-wise using Lemma~\ref{lemma:euler-lagrange}. Fix a component $\theta_i$ and consider the perturbation $\bmeta(\epsilon) \defn \bs(\btheta + \epsilon e_i) - \bs(\btheta)$, which satisfies $\bmeta(0) = \mathbf{0}$ and $\left.\partial_\epsilon\right|_{\epsilon=0} \bmeta_t(\epsilon) = \partial_{\theta_i} \bs_t(\btheta)$. From definition~\eqref{eq:implicit-variation-def}:
\begin{align*}
    \delta_{\bs} A[\bs(\btheta),\btheta] \delta_{\theta_i} \bs(\btheta) = \left. d_\epsilon \right|_{\epsilon=0} A[\bs(\btheta + \epsilon e_i), \btheta] = \left. d_\epsilon \right|_{\epsilon=0} A[\bs(\btheta) + \bmeta(\epsilon), \btheta] \,.
\end{align*}
Applying Lemma~\ref{lemma:euler-lagrange} to parametric perturbations (see note after the Lemma and proof in Appendix~\ref{appx:sec:proof-generalized-lemma}), since $\bs(\btheta)$ satisfies the Euler-Lagrange equations:
\begin{align*}
    \delta_{\bs} A[\bs(\btheta),\btheta] \delta_{\theta_i} \bs(\btheta) = \left[\left(\partial_{\theta_i} \bs_t(\btheta)\right)^\top \cdot \partial_{\dot{\bs}} L\left(t, \btheta, \bs_t(\btheta), \dot{\bs}_t(\btheta)\right)\right]_0^T \,.
\end{align*}
Concatenating over all components $i = 1, \ldots, p$ yields the full vector result. The same analysis applies with respect to any other parameter (e.g., $\beta$).
\end{proof}



\begin{proposition}[Equivalence between CIVP and PFVP]\label{prop:equivalence_IVP_EVP}
The PFVP free solution that terminates at the final state of the corresponding CIVP free solution
coincides with that CIVP free solution. Namely, for any $(\balpha_0,\bgamma_0)$ and all $t\in[0,T]$,
\begin{equation*}
\bs_{\bwda,t}^0\!\Bigl(\btheta,\bigl(\bs_{\fwda,T}^0(\btheta,(\balpha_0,\bgamma_0)),
\dot{\bs}_{\fwda,T}^0(\btheta,(\balpha_0,\bgamma_0))\bigr)\Bigr)
=
\bs_{\fwda,t}^0(\btheta,(\balpha_0,\bgamma_0))\,.
\end{equation*}
\end{proposition}

\begin{proof}
Define the terminal state of the CIVP trajectory by
\begin{equation*}
(\balpha_T,\bgamma_T)
:=
\bigl(\bs_{\fwda,T}^0(\btheta,(\balpha_0,\bgamma_0)), \dot{\bs}_{\fwda,T}^0(\btheta,(\balpha_0,\bgamma_0))\bigr)\,.
\end{equation*}
By definition of the PFVP (Definition~\ref{def:PFVP-boundary-from-CIVP}), the trajectory
$t\mapsto \bs_{\bwda,t}^0(\btheta,(\balpha_T,\bgamma_T))$ is a solution of the same Euler--Lagrange
dynamics as $t\mapsto \bs_{\fwda,t}^0(\btheta,(\balpha_0,\bgamma_0))$ and satisfies the terminal condition
\begin{equation*}
\bigl(\bs_{\bwda,T}^0(\btheta,(\balpha_T,\bgamma_T)),
\dot{\bs}_{\bwda,T}^0(\btheta,(\balpha_T,\bgamma_T))\bigr)
=
(\balpha_T,\bgamma_T)\,.
\end{equation*}

Therefore, the two trajectories share the same state at time $T$ while solving the same ODE.
By uniqueness of solutions (Remark~\ref{rmk:regularity}), they must coincide on $[0,T]$, which proves the claim.
\end{proof}

\begin{lemma}[IVP-FVP equivalence for reversible Hamiltonian systems]\label{lemma:equivalence_IVP_EVPhamiltonian}
For a reversible Hamiltonian system, the IVP solution starting from momentum-flipped 
initial conditions is equivalent to the time-reversed FVP solution:
\begin{align*}
    \forall t \in [0,T] \quad \bPhi_{IVP,t}(\btheta, \bSigma_z\blambda_0) = \bSigma_z\bPhi_{FVP,T-t}(\btheta, \blambda_0) \,,
\end{align*}
where $\bSigma_z = \begin{pmatrix} I & 0 \\ 0 & -I \end{pmatrix}$ is the momentum-flipping operator.
\end{lemma}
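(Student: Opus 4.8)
The plan is to prove the identity by a uniqueness argument: construct an explicit candidate trajectory out of the FVP solution via time-reversal composed with momentum-flipping, verify that it solves exactly the same initial value problem as $\bPhi_{IVP,t}(\btheta, \bSigma_z\blambda)$, and then invoke uniqueness of solutions to Hamilton's equations. Concretely, I would define the candidate $\bm{\Psi}_t := \bSigma_z \bPhi_{FVP,T-t}(\btheta, \blambda)$ for $t \in [0,T]$, where $\bPhi_{FVP,\cdot}$ denotes the solution of Hamilton's equations pinned at the final time by $\bPhi_{FVP,T}(\btheta,\blambda) = \blambda$. Checking the initial condition is then immediate: $\bm{\Psi}_0 = \bSigma_z \bPhi_{FVP,T} = \bSigma_z \blambda$, which matches the prescribed initial state of the IVP.

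The core of the argument is to show that the candidate obeys $d_t \bm{\Psi}_t = \bJ\,\partial_{\bPhi} H(\bm{\Psi}_t)$ (suppressing the $\btheta$ argument). Differentiating through the reparametrization $s = T-t$ produces a sign flip and a factor $\bSigma_z$, giving $d_t \bm{\Psi}_t = -\bSigma_z \bJ\,\partial_{\bPhi}H(\bPhi_{FVP,T-t})$, where $\bPhi_{FVP,T-t} = \bSigma_z \bm{\Psi}_t$ since $\bSigma_z^2 = \bm{I}$. I would then differentiate the reversibility hypothesis $H(\bSigma_z\bPhi) = H(\bPhi)$ with respect to $\bPhi$ to obtain the gradient identity $\partial_{\bPhi} H(\bSigma_z \bPhi) = \bSigma_z\,\partial_{\bPhi}H(\bPhi)$, which lets me rewrite the right-hand side as $-\bSigma_z\bJ\bSigma_z\,\partial_{\bPhi}H(\bm{\Psi}_t)$.

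The decisive algebraic step is the symplectic identity $\bSigma_z \bJ \bSigma_z = -\bJ$, which I would verify by direct block-matrix multiplication. Together with the sign inherited from time-reversal, the two minus signs cancel, leaving $d_t\bm{\Psi}_t = \bJ\,\partial_{\bPhi}H(\bm{\Psi}_t)$, i.e. exactly Hamilton's equations. Since $\bm{\Psi}_t$ and $\bPhi_{IVP,t}(\btheta,\bSigma_z\blambda)$ solve the same first-order ODE with the same initial data, uniqueness of the initial value problem (Picard--Lindel\"of, assuming $\partial_{\bPhi}H$ Lipschitz in $\bPhi$) forces them to coincide on $[0,T]$, which is precisely the claimed equivalence $\bPhi_{IVP,t}(\btheta, \bSigma_z\blambda) = \bSigma_z\bPhi_{FVP,T-t}(\btheta, \blambda)$.

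I expect the main obstacle to be the careful bookkeeping of the two sign flips together with the direction in which the reversibility-induced gradient identity is applied; getting either wrong would spuriously convert the forward flow into a backward one and break the match of initial data. A secondary point worth flagging is any explicit time-dependence of $H$ through a driving input: the cancellation goes through verbatim for an autonomous Hamiltonian, whereas a time-varying input must be correspondingly reflected (i.e. $\bx_t \mapsto \bx_{T-t}$) for the reversed trajectory to satisfy the same dynamics, consistent with the $\bx_{-t}$ schedule used in the RHEL echo phase.
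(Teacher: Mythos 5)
Your proof is correct: the candidate $\bSigma_z\bPhi_{FVP,T-t}$, the gradient identity $\partial_{\bPhi}H(\bSigma_z\bPhi)=\bSigma_z\partial_{\bPhi}H(\bPhi)$, the block computation $\bSigma_z\bJ\bSigma_z=-\bJ$, and the appeal to uniqueness all check out, and this is exactly the Hamiltonian analogue of the argument the paper gives for the corresponding Lagrangian statement (Proposition~\ref{prop:solution-pevp-reversibility}): construct the time-reversed trajectory, verify it obeys the same equations of motion via the reversibility hypothesis, match initial data, and conclude by uniqueness of the IVP. The paper in fact states this lemma without proof, so your writeup fills that gap, and your closing remark about the non-autonomous case (the input schedule must be reflected, $\bx_t\mapsto\bx_{T-t}$, for the reversed trajectory to satisfy the same dynamics) is a point the paper glosses over but that is needed for the lemma to be applied as stated in the echo-phase construction.
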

\begin{proof}[Proof by reference and relation to Proposition~2]
\textbf{(1) Analogy with Proposition~2.}
Proposition~2 proves reversibility of the time-reversible PFVP solution in the Lagrangian formulation: the FVP can be reduced to an IVP by applying the time-reversal symmetry (reverse time and flip the time-odd variable, namely the velocity).
The present lemma (Lemma~\ref{lemma:equivalence_IVP_EVPhamiltonian}) is the Hamiltonian analogue of that statement: in Hamiltonian coordinates, time reversal acts by leaving the position unchanged and flipping the conjugate momentum. Hence the same ``FVP $\leftrightarrow$ IVP'' conversion holds, with velocity flip replaced by momentum flip.

\textbf{(2) Proof is contained in the RHEL paper.}
This reversibility property is established in the RHEL paper~\cite{rhel}, Appendix~A.1.
Specifically, Lemma~A.3 in~\cite{rhel} proves time-reversal invariance of the Hamiltonian dynamics under the momentum-flip involution (with the appropriate time-reversal of the forcing/input), and Corollary~A.3 in~\cite{rhel} deduces the corresponding reversibility/echo (trajectory retracing) property.
The present lemma is exactly the specialization of these results to the IVP--FVP equivalence stated here.
\end{proof}


\begin{lemma}[Parameter-gradient relation under Legendre transform]
\label{lemma:param_grad}
Let $t\mapsto \bPhi_t = (\bs_t, \bp_t)$ be a solution of Hamilton's equations with Hamiltonian $H(\bPhi_t, \btheta)$. Let $t \mapsto (\bs_t,\dot{\bs}_t)$ be the associated Lagrangian trajectory defined through the backward Legendre transform (Proposition~\ref{prop:Legendre_transform}). Then:
\begin{equation*}
    \partial_{\btheta} H(\bPhi_t,\btheta) = - \partial_{\btheta} L(\bs_t,\dot{\bs}_t,\btheta)\,.
\end{equation*}
\end{lemma}

\begin{proof}
The momentum is defined by the (forward) Legendre transform (Proposition~\ref{prop:Legendre_transform}\hyperlink{prop:Legendre_transform:a}{(a)}):
\begin{align}
    \bp_t := \partial_{\dot{\bs}} L(\bs_t, \dot{\bs}_t, \btheta)\,.
    \label{eq:momentum_definition_appxA}
\end{align}

The Hamiltonian is constructed as:
\begin{align*}
    H(\bPhi_t, \btheta) := \bp_t^\top \dot{\bs}_t - L(\bs_t, \dot{\bs}_t, \btheta)\,,
\end{align*}
where $\dot{\bs}_t$ is determined implicitly by $(\bs_t, \bp_t, \btheta)$ through Eq.~\eqref{eq:momentum_definition_appxA}. In particular, when the Legendre transform is well-defined (i.e., $\partial^2_{\dot{\bs}, \dot{\bs}} L$ is invertible), we may invert $\bp_t = \partial_{\dot{\bs}} L(\bs_t, \dot{\bs}_t, \btheta)$ to obtain a (local) implicit function $\dot{\bs}_t = \dot{\bs}(\bs_t, \bp_t, \btheta)$. Thus $\dot{\bs}_t$ is not an independent variable here: once $\bPhi_t=(\bs_t,\bp_t)$ is held fixed, the right-hand side is understood as a function of $(\bPhi_t,\btheta)$ only.

Taking the derivative with respect to $\btheta$ (holding $\bPhi_t = (\bs_t, \bp_t)$ fixed):
\begin{align*}
    \partial_{\btheta} H(\bPhi_t, \btheta) 
    &= \partial_{\btheta}\left[\bp_t^\top \dot{\bs}_t - L(\bs_t, \dot{\bs}_t, \btheta)\right] \\
    &= \bp_t^\top \partial_{\btheta}\dot{\bs}_t - \partial_{\btheta} L(\bs_t, \dot{\bs}_t, \btheta) - (\partial_{\dot{\bs}} L)^\top \partial_{\btheta}\dot{\bs}_t\,.
\end{align*}

Here, $\partial_{\btheta}\dot{\bs}_t$ represents the derivative of the implicit function $\dot{\bs}_t(\bs_t, \bp_t, \btheta)$ with respect to $\btheta$. This derivative captures how the velocity $\dot{\bs}_t$ changes with $\btheta$ while keeping the Hamiltonian state $(\bs_t, \bp_t)$ fixed.

The key observation is that the first and third terms cancel:
\begin{align*}
    \bp_t^\top \partial_{\btheta}\dot{\bs}_t - (\partial_{\dot{\bs}} L)^\top \partial_{\btheta}\dot{\bs}_t 
    &= \bp_t^\top \partial_{\btheta}\dot{\bs}_t - \bp_t^\top \partial_{\btheta}\dot{\bs}_t \quad \text{(using Eq.~\eqref{eq:momentum_definition_appxA})} \\
    &= 0\,.
\end{align*}

Therefore:
\begin{align*}
    \partial_{\btheta} H(\bPhi_t, \btheta) 
    &= -\partial_{\btheta} L(\bs_t, \dot{\bs}_t, \btheta)\,.
\end{align*}
\end{proof}

\begin{lemma}[Independence of augmented Lagrangian and Hamiltonian derivatives]
\label{lemma:beta_independent_momentum}
Let $L_\beta$ be the augmented Lagrangian defined in Equations~\eqref{eqdef:L_beta} where the cost term $c$ does not depend on $\dot{\bs}$ or $\btheta$. Let $H_\beta$ be the corresponding Hamiltonian obtained via Legendre transform. Then, for all
$(\bs,\dot{\bs},\btheta)$ (or $(\bPhi,\btheta)$ for Hamiltonian) and all $\beta$:
\begin{enumerate}
    \item $\partial_{\dot{\bs}} L_\beta(\bs,\dot{\bs},\btheta) = \partial_{\dot{\bs}} L_0(\bs,\dot{\bs},\btheta)$ \quad (Lagrangian velocity derivative)
    \item $\partial_{\btheta} L_\beta(\bs,\dot{\bs},\btheta) = \partial_{\btheta} L_0(\bs,\dot{\bs},\btheta)$ \quad (Lagrangian parameter derivative)
    \item $\partial_{\btheta} H_\beta(\bPhi,\btheta) = \partial_{\btheta} H_0(\bPhi,\btheta)$ \quad (Hamiltonian parameter derivative)
\end{enumerate}

\textit{Note:} The result also holds for the Hamiltonian momentum derivative: $\partial_{\bp} H_\beta(\bs,\bp,\btheta) = \partial_{\bp} H_0(\bs,\bp,\btheta)$, though this property is not needed in this paper.
\end{lemma}
\begin{proof}
Since $L_\beta = L_0 + \beta c$ where $c$ depends only on $\bs$ (not on $\dot{\bs}$ or $\btheta$), the first two properties follow immediately. For the third property, since $H_\beta$ is obtained from $L_\beta$ via Legendre transform and the transform preserves parameter derivatives (as shown in Lemma~\ref{lemma:param_grad}), we have $\partial_{\btheta} H_\beta = -\partial_{\btheta} L_\beta = -\partial_{\btheta} L_0 = \partial_{\btheta} H_0$.
\end{proof}

\section{Proof of Proposition~\ref{prop:Legendre_transform}: Legendre transform}
\begin{proof} [Proof of Proposition~\ref{prop:Legendre_transform}]
We first justify the forward transform, then the backward one, and finally the equivalence of the Hessian conditions.

\medskip\noindent
\phantomsection\hypertarget{prop:Legendre_transform:a}{}%
	\textbf{(a) Forward transform.}
Fix $t$ and regard $L$ as a function of $(\bs_t,\dot{\bs}_t)$. Define
\begin{equation*}
\bp_t \;=\; \partial_{\dot{\bs}} L(\bs_t,\dot{\bs}_t)\,.
\end{equation*}
For fixed $\bs_t$, the Jacobian of the map
\begin{equation*}
\dot{\bs}_t \mapsto \bp_t
\end{equation*}
is precisely the Hessian $\partial^2_{\dot{\bs}, \dot{\bs}}L(\bs_t,\dot{\bs}_t)$.

If
\begin{equation*}
\det\big(\partial^2_{\dot{\bs}, \dot{\bs}}L(\bs_t,\dot{\bs}_t)\big)\neq 0\,,
\end{equation*}
then, by the inverse function theorem, this map is locally invertible: there exists a unique smooth function
\begin{equation*}
\dot{\bs}_t = \dot{\bs}_t(\bs_t,\bp_t)
\end{equation*}
in a neighbourhood of $(\bs_t,\dot{\bs}_t)$. We then define the Hamiltonian
\begin{equation*}
H(\bs_t,\bp_t) \;=\; \bp_t^\top \dot{\bs}_t(\bs_t,\bp_t) - L\big(\bs_t,\dot{\bs}_t(\bs_t,\bp_t)\big)\,,
\end{equation*}
which yields the forward transform
\begin{equation*}
(\bs_t,\dot{\bs}_t) \mapsto (\bs_t,\bp_t), 
\qquad
\bp_t = \partial_{\dot{\bs}} L, 
\quad
H = \bp_t^\top \dot{\bs}_t - L\,,
\end{equation*}
and is locally one-to-one with smooth inverse $(\bs_t,\bp_t)\mapsto (\bs_t,\dot{\bs}_t)$.

\medskip\noindent
\textbf{(b) Backward transform.}
Conversely, fix $t$ and regard $H$ as a function of $(\bs_t,\bp_t)$, and define
\begin{equation*}
\dot{\bs}_t \;=\; \partial_{\bp} H(\bs_t,\bp_t)\,.
\end{equation*}
For fixed $\bs_t$, the Jacobian of the map
\begin{equation*}
\bp_t \mapsto \dot{\bs}_t
\end{equation*}
is the Hessian $\partial^2_{\bp, \bp}H(\bs_t,\bp_t)$. 

If
\begin{equation*}
\det\big(\partial^2_{\bp, \bp}H(\bs_t,\bp_t)\big)\neq 0\,,
\end{equation*}
then, again by the inverse function theorem, this map is locally invertible, so there exists a unique smooth function
\begin{equation*}
\bp_t = \bp_t(\bs_t,\dot{\bs}_t)\,,
\end{equation*}
and we can define the Lagrangian via
\begin{equation*}
L(\bs_t,\dot{\bs}_t) \;=\; \bp_t(\bs_t,\dot{\bs}_t)^\top \dot{\bs}_t - H\big(\bs_t,\bp_t(\bs_t,\dot{\bs}_t)\big)\,,
\end{equation*}
which yields the backward transform
\begin{equation*}
(\bs_t,\bp_t) \mapsto (\bs_t,\dot{\bs}_t),
\qquad
\dot{\bs}_t = \partial_{\bp} H,
\quad
L = \bp_t^\top \dot{\bs}_t - H\,,
\end{equation*}
again locally one-to-one with smooth inverse.

\medskip\noindent
\textbf{Equivalence of Hessian conditions.}
Assume $L$ and $H$ are related by the Legendre transform as above. For fixed $\bs_t$, we have
\begin{equation*}
\bp_t \;=\; \partial_{\dot{\bs}}L(\bs_t,\dot{\bs}_t),
\qquad
\dot{\bs}_t \;=\; \partial_{\bp} H(\bs_t,\bp_t)\,.
\end{equation*}
Differentiate the first relation w.r.t.\ $\dot{\bs}_t$:
\begin{equation*}
\partial_{\dot{\bs}_t} \bp_t
\;=\;
\partial^2_{\dot{\bs}, \dot{\bs}}L(\bs_t,\dot{\bs}_t)\,.
\end{equation*}
Differentiate the second relation w.r.t.\ $\bp_t$:
\begin{equation*}
\partial_{\bp_t} \dot{\bs}_t
\;=\;
\partial^2_{\bp, \bp}H(\bs_t,\bp_t)\,.
\end{equation*}
Since the maps $\dot{\bs}_t \mapsto \bp_t$ and $\bp_t \mapsto \dot{\bs}_t$ are inverse to each other (for fixed $\bs_t$), their Jacobians are matrix inverses:
\begin{equation*}
\partial_{\bp_t} \dot{\bs}_t
=
\left(\partial_{\dot{\bs}_t} \bp_t\right)^{-1}\,.
\end{equation*}
Hence
\begin{equation*}
\partial^2_{\bp, \bp}H(\bs_t,\bp_t)
=
\big(\partial^2_{\dot{\bs}, \dot{\bs}}L(\bs_t,\dot{\bs}_t)\big)^{-1}\,.
\end{equation*}
In particular,
\begin{equation*}
\det(\partial^2_{\dot{\bs}, \dot{\bs}}L)\neq 0
\quad\Longleftrightarrow\quad
\det(\partial^2_{\bp, \bp}H)\neq 0\,,
\end{equation*}
so the forward and backward non-singularity conditions are equivalent, and the Legendre transform is invertible in both directions.
\end{proof}

\section{Proof of Lemma~\ref{lemma:euler-lagrange}. Euler-Lagrange solutions and the action functional}
\label{appx:sec:proof-generalized-lemma}
\begin{proof}[Proof of Lemma~\ref{lemma:euler-lagrange}]
We prove the result for a general smooth perturbation $\epsilon \mapsto \bmeta(\epsilon)$ with $\bmeta(0) = \mathbf{0}$ (as noted after Lemma~\ref{lemma:euler-lagrange}); the linear case $\bmeta(\epsilon) = \epsilon \bmeta$ follows by specialization. 

Expanding the action functional along the perturbation:
\begin{align*}
    A_\beta[\mathbf{s}^\beta + \bmeta(\epsilon)] 
    &= \int_0^T L_\beta\!\left(\bs_t^\beta + \bmeta_t(\epsilon),\; \dot{\bs}_t^\beta + \dot{\bmeta}_t(\epsilon),\; \btheta\right) \dt \,,
\end{align*}
where $\dot{\bmeta}_t(\epsilon) \defn d_t \bmeta_t(\epsilon)$ denotes the time derivative at fixed $\epsilon$. Differentiating with respect to $\epsilon$ and evaluating at $\epsilon = 0$, the chain rule gives:
\begin{align*}
    \left. d_\epsilon \right|_{\epsilon=0} A_\beta[\mathbf{s}^\beta + \bmeta(\epsilon)] 
    &= \int_0^T \left[\left(\left.\partial_\epsilon\right|_{\epsilon=0} \bmeta_t(\epsilon)\right)^\top \partial_{\bs} L_\beta(\bs_t^\beta, \dot{\bs}_t^\beta, \btheta) 
    + \left(d_t \left.\partial_\epsilon\right|_{\epsilon=0} \bmeta_t(\epsilon)\right)^\top \partial_{\dot{\bs}} L_\beta(\bs_t^\beta, \dot{\bs}_t^\beta, \btheta) \right] \dt \,.
\end{align*}
Here we used $\bmeta(0) = \mathbf{0}$ (i.e., $\bmeta_t(0) = \mathbf{0}$ and $\dot{\bmeta}_t(0) = \mathbf{0}$ for all $t$) to ensure that the partial derivatives of $L_\beta$ are evaluated at the original trajectory $(\bs_t^\beta, \dot{\bs}_t^\beta)$. We also used the commutativity of $\partial_\epsilon$ and $d_t$ (valid by smoothness) to write $\left.\partial_\epsilon\right|_{\epsilon=0} \dot{\bmeta}_t(\epsilon) = d_t \left.\partial_\epsilon\right|_{\epsilon=0} \bmeta_t(\epsilon)$.
Applying integration by parts to the second term:
\begin{align*}
    &= \int_0^T \left(\left.\partial_\epsilon\right|_{\epsilon=0} \bmeta_t(\epsilon)\right)^\top \left[\partial_{\bs} L_\beta - d_t \partial_{\dot{\bs}} L_\beta\right] \dt 
    + \left[\left(\left.\partial_\epsilon\right|_{\epsilon=0} \bmeta_t(\epsilon)\right)^\top \partial_{\dot{\bs}} L_\beta(\bs_t^\beta, \dot{\bs}_t^\beta, \btheta)\right]_0^T \,.
\end{align*}
Since $\mathbf{s}^\beta$ satisfies the Euler-Lagrange equation $\EL(t, \btheta, \beta) = \partial_{\bs} L_\beta - d_t \partial_{\dot{\bs}} L_\beta = 0$, the integral vanishes, yielding:
\begin{align}
    \left. d_\epsilon \right|_{\epsilon=0} A_\beta[\mathbf{s}^\beta + \bmeta(\epsilon)] 
    = \left[\left(\left.\partial_\epsilon\right|_{\epsilon=0} \bmeta_t(\epsilon)\right)^\top \partial_{\dot{\bs}} L_\beta(\bs_t^\beta, \dot{\bs}_t^\beta, \btheta)\right]_0^T \,.
    \label{eq:proof-general-variation}
\end{align}
This establishes the general case for parametric perturbations (see note after Lemma~\ref{lemma:euler-lagrange}). For the linear perturbation $\bmeta(\epsilon) = \epsilon \bmeta$, we have $\left.\partial_\epsilon\right|_{\epsilon=0} \bmeta(\epsilon) = \bmeta$, and Eq.~\eqref{eq:proof-general-variation} becomes:
\begin{align*}
    \delta_{\bs} A_\beta \cdot \bmeta = \left. d_\epsilon \right|_{\epsilon=0} A_\beta[\mathbf{s}^\beta + \epsilon \bmeta] 
    = \left[\bmeta_t^\top \partial_{\dot{\bs}} L_\beta(\bs_t^\beta, \dot{\bs}_t^\beta, \btheta)\right]_0^T \,.
\end{align*}
This establishes Case~2 (the general formula for arbitrary $\bmeta$). Case~1 follows immediately: when $\bmeta_0 = \bmeta_T = \mathbf{0}$, the boundary terms vanish and $\delta_{\bs} A_\beta \cdot \bmeta = 0$.
\end{proof}

\section{Proof Theorem~\ref{thm:general-ep}. Lagrangian EP 
gradient estimator}
\label{appx:sec:proof-general-ep}
\begin{proof}[Proof of Theorem~\ref{thm:general-ep}]

We consider the cross-derivatives of the action functional 
$A_{\beta}[\bs^\beta(\btheta), \btheta]$. 
Since $A_\beta$ depends on $\btheta$ both \emph{explicitly} (through $L_\beta(\cdot, \cdot, \btheta)$) and \emph{implicitly} (through the trajectory $\bs^\beta(\btheta)$), the total derivative decomposes as $d_{\btheta} A_\beta = \partial_{\btheta} A_\beta + \delta_{\bs} A_\beta \, \delta_{\btheta} \bs^\beta$, where $\partial_{\btheta}$ acts on the explicit $\btheta$-dependence at fixed trajectory and $\delta_{\bs} A_\beta \, \delta_{\btheta} \bs^\beta$ captures the implicit variation through $\bs^\beta(\btheta)$ (see Proposition~\ref{prop:chain-rule-varvect} and the notation conventions in Appendix~\ref{appx:sec:derivative_conventions}).

First, differentiating with respect to $\btheta$ then $\beta$ (at $\beta=0$):
\begin{align}
    d_{\beta\btheta} A_{\beta}[\bs^\beta(\btheta),\btheta] 
    &= d_\beta |_{\beta=0} \left(\partial_{\btheta} A_{\beta}[\bs^\beta(\btheta),\btheta] + \delta_{\bs} A_{\beta} \delta_{\btheta} \bs^\beta \right) \nonumber\\
    &= d_\beta |_{\beta=0} \int_0^T \partial_{\btheta} L_{\beta}\left( \bs_t^{\beta}, \dot{\bs}_t^{\beta}, \btheta\right) \dt 
       + d_\beta |_{\beta=0}(\delta_{\bs} A_{\beta} \delta_{\btheta} \bs^\beta) \,.
    \label{eq:beta-theta-crossderivative}
\end{align}

Second, differentiating with respect to $\beta$ (at $\beta=0$) then $\btheta$:
\begin{align}
    d_{\btheta\beta} A_{\beta}[\bs^\beta(\btheta),\btheta] 
    &= d_{\btheta}\left(\partial_\beta |_{\beta=0} A_{\beta}[\bs^\beta(\btheta), \btheta] + \delta_{\bs} A_{0} \delta_\beta |_{\beta=0} \bs^\beta \right) \nonumber \\
    &= d_{\btheta}\left(C[\bs^0(\btheta)] + \delta_{\bs} A_{0} \delta_\beta  |_{\beta=0}\bs^\beta \right) \,,
    \label{eq:theta-beta-crossderivative}
\end{align}
where we used the fact that $\partial_\beta |_{\beta=0} A_{\beta}[\bs^\beta(\btheta),\btheta] = \int_0^T c(\bs_t^0(\btheta)) \dt = C[\bs^0(\btheta)]$.

By Schwarz's theorem (symmetry of mixed partial derivatives), we have (since $\beta$ and $\btheta$ are independent, we can use $d$ instead of $\partial$):
\begin{align*}
    d_{\beta\btheta} A_{\beta}[\bs^\beta(\btheta),\btheta] = d_{\btheta\beta} A_{\beta}[\bs^\beta(\btheta),\btheta] \,.
\end{align*}
This requires the composite map $(\beta, \btheta) \mapsto A_\beta[\bs^\beta(\btheta), \btheta]$ to be $C^2$. This holds whenever the Lagrangian is $C^2$ in all its arguments and the trajectory map $(\beta, \btheta) \mapsto \bs^\beta(\btheta)$ is $C^2$, which follows from the standard smooth dependence of ODE solutions on parameters.

Equating the right-hand sides of equations~\eqref{eq:beta-theta-crossderivative} 
and~\eqref{eq:theta-beta-crossderivative}, we obtain:
\begin{align}
    d_{\btheta} C[\bs^0] 
    &= d_\beta \int_0^T \partial_{\btheta} L_{\beta}\left(\bs_t^{\beta}, \dot{\bs}_t^{\beta}, \btheta\right) \dt
    + \left(d_\beta(\delta_{\bs} A_{\beta} \delta_{\btheta} \bs^\beta) 
                  - d_{\btheta}(\delta_{\bs} A_{0} \delta_\beta \bs^\beta) \right) \,.
    \label{eq:ep_residual_before_cancellation}
\end{align}

From Proposition~\ref{prop:chain-rule-varvect} (derived from Lemma~\ref{lemma:euler-lagrange}), the variation through implicit dependence gives:
\begin{align}
    d_\beta(\delta_{\bs} A_{\beta} \delta_{\btheta} \bs^\beta) 
    &= d_\beta\left[\left(\partial_{\btheta} \bs_t^\beta\right)^\top \cdot 
       \partial_{\dot{\bs}} L_{\beta} \left(\bs_t^{\beta}, \dot{\bs}_t^{\beta}, \btheta\right) \right]_0^T \,.
    \label{eq:beta-residual}
\end{align}

Applying the product rule of differentiation:
\begin{align}
    &d_\beta(\delta_{\bs} A_{\beta} \delta_{\btheta} \bs^\beta) = \left[\left(\partial_{\beta\btheta} \bs_t^\beta\right)^\top \cdot 
      \partial_{\dot{\bs}} L_{0}\left(\bs_t^{0}, \dot{\bs}_t^{0}, \btheta\right) 
      + \left(\partial_{\btheta} \bs_t^0\right)^\top \cdot 
        d_\beta \partial_{\dot{\bs}} L_{\beta}\left(\bs_t^{\beta}, \dot{\bs}_t^{\beta}, \btheta\right)
    \right]_0^T \,.
    \label{eq:residual_la_beta}
\end{align}
By the same reasoning applied to~\eqref{eq:beta-residual} with the roles of $\beta$ and $\btheta$ exchanged:
\begin{align}
    &d_{\btheta}(\delta_{\bs} A_{0} \delta_\beta \bs^\beta) = \left[\left(\partial_{\btheta\beta} \bs_t^\beta\right)^\top \cdot 
      \partial_{\dot{\bs}} L_{0}\left(\bs_t^{0}, \dot{\bs}_t^{0}, \btheta\right) 
      + \left(d_{\btheta} \partial_{\dot{\bs}} L_{0}\left(\bs_t^{0}, \dot{\bs}_t^{0}, \btheta\right) \right)^\top 
        \cdot \left(\partial_\beta \bs_t^\beta\right) 
    \right]_0^T \,.
    \label{eq:residual_la_theta}
\end{align}
Using the symmetry of cross-derivatives, $\partial_{\btheta\beta} \bs_t^\beta = \partial_{\beta\btheta} \bs_t^\beta$, 
the first terms in equations~\eqref{eq:residual_la_beta} and~\eqref{eq:residual_la_theta} cancel:
\begin{align}
    &d_\beta(\delta_{\bs} A_{\beta} \delta_{\btheta} \bs^\beta) - d_{\btheta}(\delta_{\bs} A_{0} \delta_\beta \bs^\beta) = \left[ \left(\partial_{\btheta} \bs_t^{0}\right)^\top \cdot 
       d_\beta\partial_{\dot{\bs}} L_{\beta}\left(\bs_t^{\beta}, \dot{\bs}_t^{\beta}, \btheta\right)  - \left(d_{\btheta}\partial_{\dot{\bs}} L_{0}\left(\bs_t^{0}, \dot{\bs}_t^{0}, \btheta\right)\right)^\top 
              \cdot \partial_\beta \bs_t^{\beta}\right]_0^T \,.
    \label{eq:boundary_residuals_appendix}
\end{align}

Substituting equation~\eqref{eq:boundary_residuals_appendix} into 
equation~\eqref{eq:ep_residual_before_cancellation} yields the final result.

\end{proof} 
\section{Proof of LEP Corollaries}
\subsection{Proof of Corollary~\ref{corollary:civp_gradient} :Gradient estimator for CIVP}
\begin{proof}[Proof of Corollary~\ref{corollary:civp_gradient}]
We apply Theorem~\ref{thm:general-ep} to the CIVP formulation and analyze 
the boundary residual terms. From Theorem~\ref{thm:general-ep}, the boundary 
residual term is:
\begin{align*}
    \left[\left(\partial_{\boldsymbol{{\btheta}}} \bs_{\fwda,t}^0\right)^\top d_\beta \partial_{\dot{\bs}} L_\beta(\bs_{\fwda,t}^\beta, \dot{\bs}_{\fwda,t}^\beta, \boldsymbol{{\btheta}}) 
    - \left(d_{\boldsymbol{{\btheta}}} \partial_{\dot{\bs}} L_0(\bs_{\fwda,t}^0, \dot{\bs}_{\fwda,t}^0, \boldsymbol{{\btheta}})\right)^\top \partial_\beta \bs_{\fwda,t}^\beta\right]_0^T \,.
\end{align*}

We examine the boundary conditions at both temporal endpoints.

\textbf{Analysis at $t = 0$:} The boundary residual vanishes due to the 
constant initial value constraints. By the CIVP construction, all trajectories 
satisfy the boundary conditions $\bs_{\fwda,0}^\beta(\boldsymbol{{\btheta}}) = \balpha_0$ and 
$\dot{\bs}_{\fwda,0}^\beta(\boldsymbol{{\btheta}}) = \bgamma_0$, which are independent of both $\boldsymbol{{\btheta}}$ and $\beta$.

The left term vanishes because:
\begin{align*}
    \partial_{\boldsymbol{{\btheta}}} \bs_{\fwda,0}^0 = \partial_{\boldsymbol{{\btheta}}} \balpha_0 = \mathbf{0} \,.
\end{align*}

The right term vanishes because:
\begin{align*}
    \partial_\beta \bs_{\fwda,0}^\beta = \partial_\beta \balpha_0 = \mathbf{0} \,.
\end{align*}

Therefore, both boundary residual terms are zero at $t = 0$.

\textbf{Analysis at $t = T$:} The boundary residual does not cancel due to 
the absence of constraints at the final time. Unlike at the initial conditions, 
no boundary value constraints are imposed at $t = T$, so both 
$\partial_{\boldsymbol{{\btheta}}} \bs_{\fwda,T}^0$ and $\partial_\beta \bs_{\fwda,T}^\beta$ are generally non-zero.
Notably, since $\beta$ is scalar, the $\beta$ derivatives can easily be estimated via finite differences. To emphasize this, we can rewrite the left term as: 
\begin{align*}
    \left(\partial_{\boldsymbol{{\btheta}}} \bs_{\fwda,T}^0\right)^\top d_\beta \partial_{\dot{\bs}} L_\beta(\bs_{\fwda,T}^\beta, \dot{\bs}_{\fwda,T}^\beta, \boldsymbol{{\btheta}})
    &= \lim_{\beta \to 0} \frac{1}{\beta}\left(\partial_{\boldsymbol{{\btheta}}} \bs_{\fwda,T}^0\right)^\top 
    \left[\partial_{\dot{\bs}} L_\beta(\bs_{\fwda,T}^\beta, \dot{\bs}_{\fwda,T}^\beta, \boldsymbol{{\btheta}}) 
    - \partial_{\dot{\bs}} L_0(\bs_{\fwda,T}^0, \dot{\bs}_{\fwda,T}^0, \boldsymbol{{\btheta}})\right] \,.
\end{align*}
Similarly, the right term becomes:
\begin{align*}
    \left(d_{\boldsymbol{{\btheta}}} \partial_{\dot{\bs}} L_0(\bs_{\fwda,T}^0, \dot{\bs}_{\fwda,T}^0, \boldsymbol{{\btheta}})\right)^\top \partial_\beta \bs_{\fwda,T}^\beta
    &= \lim_{\beta \to 0} \frac{1}{\beta}\left(d_{\boldsymbol{{\btheta}}} \partial_{\dot{\bs}} L_0(\bs_{\fwda,T}^0, \dot{\bs}_{\fwda,T}^0, \boldsymbol{{\btheta}})\right)^\top 
    \left(\bs_{\fwda,T}^\beta - \bs_{\fwda,T}^0\right) \,.
\end{align*}

\textbf{Final result:} Combining the integral term (in finite difference form) from Theorem~\ref{thm:general-ep} 
with the boundary analysis and applying the finite difference approximation, we obtain:
\begin{align*}
    d_{\boldsymbol{{\btheta}}} C[\bs_{\fwda}^0(\boldsymbol{{\btheta}})] &= \lim_{\beta \to 0} \frac{1}{\beta} \Bigg[ 
    \int_0^T \Big[\partial_{\boldsymbol{{\btheta}}} L_{\beta}(\bs_{\fwda,t}^{\beta}, \dot{\bs}_{\fwda,t}^{\beta}, \btheta) 
    - \partial_{\boldsymbol{{\btheta}}} L_{0}(\bs_{\fwda,t}^{0}, \dot{\bs}_{\fwda,t}^0, \btheta)\Big] \dt \\
    &\quad + \left(\partial_{\boldsymbol{{\btheta}}} \bs_{\fwda,T}^{0}\right)^{\top}
    \Big(\partial_{\dot{\bs}} L_{\beta}(\bs_{\fwda,T}^{\beta}, \dot{\bs}_{\fwda,T}^{\beta}, \btheta) 
    - \partial_{\dot{\bs}} L_{0}(\bs_{\fwda,T}^{0}, \dot{\bs}_{\fwda,T}^{0}, \btheta)\Big) \\
    &\quad - \left(d_{\boldsymbol{{\btheta}}}\partial_{\dot{\bs}} L_{0}(\bs_{\fwda,T}^{0}, \dot{\bs}_{\fwda,T}^{0}, \btheta)\right)^{\top}  
    \left(\bs_{\fwda,T}^{\beta} - \bs_{\fwda,T}^{0}\right) \Bigg] \,.
\end{align*}
The boundary residuals at $t = T$ remain due to the absence of final time 
constraints.
\end{proof}

\subsection{Proof of Corollary~\ref{corollary:CBPVP_gradient}: Gradient estimator for CBPVP}

\begin{proof}[Proof of Corollary~\ref{corollary:CBPVP_gradient}]
We apply Theorem~\ref{thm:general-ep} to the CBPVP formulation and analyze 
the boundary residual terms. From Theorem~\ref{thm:general-ep}, the boundary 
residual term is:
\begin{align*}
    \left[\left(\partial_{\boldsymbol{{\btheta}}} \bs_{\bda,t}^0\right)^\top d_\beta \partial_{\dot{\bs}} L_\beta(\bs_{\bda,t}^\beta, \dot{\bs}_{\bda,t}^\beta, \boldsymbol{{\btheta}}) 
    - \left(d_{\boldsymbol{{\btheta}}} \partial_{\dot{\bs}} L_0(\bs_{\bda,t}^0, \dot{\bs}_{\bda,t}^0, \boldsymbol{{\btheta}})\right)^\top \partial_\beta \bs_{\bda,t}^\beta\right]_0^T \,.
\end{align*}

We examine the boundary conditions at both temporal endpoints.

\textbf{Analysis at $t = 0$:} The boundary residual vanishes due to the 
constant initial position constraint. By the CBPVP construction, all trajectories 
satisfy the boundary condition $\bs_{\bda,0}^\beta(\boldsymbol{{\btheta}}) = \balpha_0$, which is 
independent of both $\boldsymbol{{\btheta}}$ and $\beta$.

The left term vanishes because:
\begin{align*}
    \partial_{\boldsymbol{{\btheta}}} \bs_{\bda,0}^0 = \partial_{\boldsymbol{{\btheta}}} \balpha_0 = \mathbf{0} \,.
\end{align*}

The right term vanishes because:
\begin{align*}
    \partial_\beta \bs_{\bda,0}^\beta = \partial_\beta \balpha_0 = \mathbf{0} \,.
\end{align*}

Therefore, both boundary residual terms are zero at $t = 0$.

\textbf{Analysis at $t = T$:} The boundary residual also vanishes due to the 
constant final position constraint. By the CBPVP construction, all trajectories 
satisfy the boundary condition $\bs_{\bda,T}^\beta(\boldsymbol{{\btheta}}) = \balpha_T$, which is 
independent of both $\boldsymbol{{\btheta}}$ and $\beta$.

The left term vanishes because:
\begin{align*}
    \partial_{\boldsymbol{{\btheta}}} \bs_{\bda,T}^0 = \partial_{\boldsymbol{{\btheta}}} \balpha_T = \mathbf{0} \,.
\end{align*}

The right term vanishes because:
\begin{align*}
    \partial_\beta \bs_{\bda,T}^\beta = \partial_\beta \balpha_T = \mathbf{0} \,.
\end{align*}

Therefore, both boundary residual terms are zero at $t = T$.

\textbf{Final result:} Since the boundary residuals vanish at both endpoints, 
combining with the integral term from Theorem~\ref{thm:general-ep} and 
applying the finite difference approximation, we obtain:
\begin{align*}
    d_{\boldsymbol{{\btheta}}} C[\bs_{\bda}^0(\boldsymbol{{\btheta}})] &= \lim_{\beta \to 0} \frac{1}{\beta} 
    \int_0^T \Big[\partial_{\boldsymbol{{\btheta}}} L_{\beta}(\bs_{\bda,t}^{\beta}, \dot{\bs}_{\bda,t}^{\beta}, \btheta) 
    - \partial_{\boldsymbol{{\btheta}}} L_{0}(\bs_{\bda,t}^{0}, \dot{\bs}_{\bda,t}^0, \btheta)\Big] \dt \,.
\end{align*}

The CBPVP formulation eliminates all problematic boundary residual terms, 
yielding a clean gradient estimator that only requires integrating differences 
between Lagrangian derivatives over the two trajectories.
\end{proof}

\section{Proof of Theorem~\ref{thm:rhel}: Gradient estimator from RHEL with parametrized initial state}
\label{app:proof-rhel-parametrized}

This section shows how Theorem~\ref{thm:rhel} can be recovered from the results in the RHEL paper~\citep{rhel}. We proceed in two steps: first clarifying the notation differences between the two papers, then deriving how the gradient with respect to a parametrized initial state combines with the gradient with respect to parameters.

\subsection{Notation Correspondence}

The RHEL paper~\citep{rhel} uses a time convention where the forward pass runs from $t \in [-T, 0]$ and the echo phase runs from $t \in [0, T]$. In contrast, this paper uses $t \in [0, T]$ for the forward pass. The correspondences are:

\paragraph{Time indexing.} For the forward trajectory:
\begin{itemize}
    \item RHEL paper: forward trajectory $\bPhi(t)$ for $t \in [-T, 0]$
    \item This paper: forward trajectory $\bPhi_t$ for $t \in [0, T]$ with inputs $\bx_t$
    \item Relationship: $\bPhi_t \leftrightarrow \bPhi(-t+T)$ in RHEL notation
\end{itemize}

For the echo trajectory:
\begin{itemize}
    \item RHEL paper: echo trajectory $\bPhi^e(t, \epsilon)$ for $t \in [0, T]$ with inputs $\bu(-t)$, where $\epsilon$ is the nudging strength
    \item This paper: echo trajectory $\bPhi^e_t$ for $t \in [0, T]$ with inputs $\bx_{T-t}$. The dependence on the nudging strength $\beta$ is left implicit in the subscript notation $\bPhi^e_t \equiv \bPhi^e_t(\beta)$, since $\beta$ is fixed throughout the forward and echo phases and only appears explicitly in the limit $\beta \to 0$
    \item Relationship: inputs are time-reversed relative to forward pass
\end{itemize}

\paragraph{State variables.} The phase space variables are:
\begin{itemize}
    \item RHEL paper: $\bPhi = \begin{pmatrix} \bphi \\ \bpi \end{pmatrix}$ where $\bphi$ represents positions and $\bpi$ represents conjugate momenta
    \item This paper: $\bPhi = \begin{pmatrix} \bs \\ \bp \end{pmatrix}$ where $\bs$ represents positions and $\bp$ represents conjugate momenta, with $\bs$ also denoted as $\balpha$ and $\bp$ as $\bgamma^p$ for initial conditions
    \item Correspondence for forward phase: $\bPhi_t = \begin{pmatrix} \bs_t \\ \bp_t \end{pmatrix}$ (this paper) $\leftrightarrow$ $\bPhi(t) = \begin{pmatrix} \bphi_t \\ \bpi_t \end{pmatrix}$ (RHEL paper)
    \item Correspondence for echo phase: $\bPhi^e_t = \begin{pmatrix} \bs^e_t \\ \bp^e_t \end{pmatrix}$ (this paper) $\leftrightarrow$ $\bPhi^e(t) = \begin{pmatrix} \bphi^e_t \\ \bpi^e_t \end{pmatrix}$ (RHEL paper)
\end{itemize}

\paragraph{Nudging parameter.} The RHEL paper uses $\epsilon$ for bidirectional perturbations $\pm\epsilon$ while this paper uses $\beta$ for unidirectional perturbation; both converge to the same gradient as $\epsilon, \beta \to 0$.

\subsection{Gradient Decomposition for Parametrized Initial States}

We now show how to recover Theorem~\ref{thm:rhel} from the original RHEL result (Theorem 3.1 in~\citep{rhel}). We proceed by first recalling the RHEL result for fixed initial conditions, then the gradient with respect to initial state, and finally combining them.

\paragraph{Step 1: Gradient with respect to parameters (fixed initial state).}

When the initial state $\bPhi_0 = \begin{pmatrix} \balpha_0 \\ \bmu_0 \end{pmatrix}$ is held fixed (independent of $\btheta$), Theorem 3.1 in~\citep{rhel} gives:\footnote{Note that the RHEL paper uses bidirectional nudging with perturbations $\pm\epsilon$ for improved numerical accuracy, while we present the unidirectional formulation here for simplicity. Both converge to the same gradient in the continuous-time limit; see the note at the end of this section for details.}
\begin{align*}
\partial_{\btheta} C[\bPhi(\btheta, \bPhi_0)] = \lim_{\beta \to 0} \frac{1}{\beta} \left[-\int_0^T \left(\partial_{\btheta} H(\bPhi^e_t, \btheta, \bx_{T-t}) - \partial_{\btheta} H(\bPhi_t, \btheta, \bx_t)\right) dt\right] \,,
\end{align*}
where $\bPhi_t$ is the forward trajectory and $\bPhi^e_t$ is the echo trajectory with nudging strength $\beta$.

\paragraph{Step 2: Gradient with respect to initial state (fixed parameters).}

The RHEL paper also provides the gradient of the cost with respect to the initial state (holding parameters $\btheta$ fixed). This sensitivity can be expressed through the echo trajectory difference at the final time:
\begin{align*}
\partial_{\bPhi_0} C = \lim_{\beta \to 0} \frac{1}{\beta} \bSigma_x \left(\begin{pmatrix} \bs^e_T \\ \bp^e_T \end{pmatrix} - \begin{pmatrix} \balpha_0 \\ -\bmu_0 \end{pmatrix}\right) \,,
\end{align*}
where $\begin{pmatrix} \bs^e_T \\ \bp^e_T \end{pmatrix} = \bPhi^e_T$ is the echo trajectory at final time $T$, and the sign flip in the momentum component arises from the momentum-flipping boundary condition of the echo phase.

\paragraph{Step 3: Combining both contributions via chain rule.}

When the initial state depends on parameters, $\bPhi_0(\btheta) = \begin{pmatrix} \balpha_0(\btheta) \\ \bmu_0(\btheta) \end{pmatrix}$, the total gradient must account for both the direct dependence on $\btheta$ and the indirect dependence through $\bPhi_0(\btheta)$. By the chain rule:
\begin{align*}
\dtheta C[\bPhi(\btheta, \bPhi_0(\btheta))] = \underbrace{\partial_{\btheta} C[\bPhi(\btheta, \bPhi_0)]}_{\text{direct, holding $\bPhi_0$ fixed}} + \underbrace{\left(\partial_{\btheta} \bPhi_0\right)^\top \partial_{\bPhi_0} C}_{\text{indirect, through $\bPhi_0(\btheta)$}} \,.
\end{align*}

Substituting the expressions from Steps 1 and 2:
\begin{align*}
\dtheta C = \lim_{\beta \to 0} \frac{1}{\beta} \Bigg[&-\int_0^T \left(\partial_{\btheta} H(\bPhi^e_t, \btheta, \bx_{T-t}) - \partial_{\btheta} H(\bPhi_t, \btheta, \bx_t)\right) dt \\
&+ \left(\partial_{\btheta} \begin{pmatrix} \balpha_0 \\ \bmu_0 \end{pmatrix}\right)^\top \bSigma_x \left(\begin{pmatrix} \bs^e_T \\ \bp^e_T \end{pmatrix} - \begin{pmatrix} \balpha_0 \\ -\bmu_0 \end{pmatrix}\right)\Bigg] \,,
\end{align*}
which is precisely the gradient estimator $\Delta^{\text{RHEL}}(\beta, \balpha_0(\btheta), \bmu_0(\btheta))$ in Eq.~\eqref{eq:Delta^RHEL} of Theorem~\ref{thm:rhel}.

\subsection{Note on bidirectional vs.~unidirectional nudging}
\label{app:bidirectional-nudging}

The RHEL paper uses bidirectional nudging with perturbations $\pm\beta$, computing:
\begin{equation*}
\Delta^{\text{RHEL}}(\beta) = \frac{1}{2\beta}\left[\left(\partial_{\btheta} H(\bPhi^e_t(+\beta), \btheta, \bx_{T-t}) - \partial_{\btheta} H(\bPhi_t, \btheta, \bx_t)\right) - \left(\partial_{\btheta} H(\bPhi^e_t(-\beta), \btheta, \bx_{T-t}) - \partial_{\btheta} H(\bPhi_t, \btheta, \bx_t)\right)\right] \,,
\end{equation*}
which is a centered finite-difference approximation. In this paper, we present the unidirectional formulation:
\begin{equation*}
\Delta^{\text{RHEL}}(\beta) = \frac{1}{\beta}\left(\partial_{\btheta} H(\bPhi^e_t(\beta), \btheta, \bx_{T-t}) - \partial_{\btheta} H(\bPhi_t, \btheta, \bx_t)\right) \,,
\end{equation*}
which is a forward finite-difference approximation. Both converge to the same gradient in the limit $\beta \to 0$. The bidirectional version has better numerical accuracy (second-order error $O(\beta^2)$ vs.~first-order $O(\beta)$), a well-known trick in equilibrium propagation~\citep{laborieuxScalingEquilibriumPropagation2021}.

\section{Proof of Proposition~\ref{prop:solution-pfvp-reversibility}: The bouncing-back trick}
\begin{proof}[Proof of Proposition~\ref{prop:solution-pfvp-reversibility}]
Define the time-reversed trajectory:
\begin{align}
    \bs_{rev,t}^\beta := \bs_{\bwda,T-t}^\beta(\btheta,(\balpha_T, \bgamma_T)) \,.
    \label{eq:construction_srev}
\end{align}
By the chain rule ($\frac{d(T-t)}{dt} = -1$), its velocity and acceleration satisfy:
\begin{align}
    \dot{\bs}_{rev,t}^\beta = -\dot{\bs}_{\bwda,T-t}^\beta, \qquad \ddot{\bs}_{rev,t}^\beta = \ddot{\bs}_{\bwda,T-t}^\beta \,.
    \label{eq:rev_chain_rule}
\end{align}

\textbf{Step 1: $\bs_{rev}$ satisfies the Euler-Lagrange equation.} We show that $\EL_r(t, \btheta, \beta) = 0$ along $\bs_{rev}$ by relating each term back to the Euler-Lagrange equation satisfied by $\bs_{\bwda}^\beta$. Let $t' := T-t$.

\emph{Momentum term.} By \eqref{eq:rev_chain_rule} and the odd-derivative property (Lemma~\ref{lemma:odd_derivative}):
\begin{align*}
    \partial_{\dot{\bs}} L_{\beta}(\bs_{rev,t}^\beta, \dot{\bs}_{rev,t}^\beta, \btheta)
    &= \partial_{\dot{\bs}} L_{\beta}(\bs_{\bwda,t'}^\beta, -\dot{\bs}_{\bwda,t'}^\beta, \btheta) 
    = -\partial_{\dot{\bs}} L_{\beta}(\bs_{\bwda,t'}^\beta, \dot{\bs}_{\bwda,t'}^\beta, \btheta) \,.
\end{align*}
Taking the total time derivative and using $d_t = -d_{t'}$:
\begin{align*}
    d_t\partial_{\dot{\bs}} L_{\beta}(\bs_{rev,t}^\beta, \dot{\bs}_{rev,t}^\beta, \btheta) 
    &= d_t\!\left[-\partial_{\dot{\bs}} L_{\beta}(\bs_{\bwda,t'}^\beta, \dot{\bs}_{\bwda,t'}^\beta, \btheta)\right]
    = (-d_{t'})\!\left[-\partial_{\dot{\bs}} L_{\beta}(\bs_{\bwda,t'}^\beta, \dot{\bs}_{\bwda,t'}^\beta, \btheta)\right]
    = d_{t'}\partial_{\dot{\bs}} L_{\beta}(\bs_{\bwda,t'}^\beta, \dot{\bs}_{\bwda,t'}^\beta, \btheta) \,.
\end{align*}

\emph{Position term.} Since $L_\beta$ is even in $\dot{\bs}$, so is $\partial_{\bs} L_\beta$ (differentiating $L_\beta(\bs, \dot{\bs}, \btheta) = L_\beta(\bs, -\dot{\bs}, \btheta)$ with respect to $\bs$):
\begin{align*}
    \partial_{\bs} L_{\beta}(\bs_{rev,t}^\beta, \dot{\bs}_{rev,t}^\beta, \btheta)
    &= \partial_{\bs} L_{\beta}(\bs_{\bwda,t'}^\beta, -\dot{\bs}_{\bwda,t'}^\beta, \btheta)
    = \partial_{\bs} L_{\beta}(\bs_{\bwda,t'}^\beta, \dot{\bs}_{\bwda,t'}^\beta, \btheta) \,.
\end{align*}

\emph{Combining:}
\begin{align*}
    \partial_{\bs} L_{\beta}(\bs_{rev,t}^\beta, \dot{\bs}_{rev,t}^\beta,\btheta) 
    - d_t\partial_{\dot{\bs}} L_{\beta}(\bs_{rev,t}^\beta, \dot{\bs}_{rev,t}^\beta, \btheta)
    &= \partial_{\bs} L_{\beta}(\bs_{\bwda,t'}^\beta, \dot{\bs}_{\bwda,t'}^\beta,\btheta)
    - d_{t'}\partial_{\dot{\bs}} L_{\beta}(\bs_{\bwda,t'}^\beta, \dot{\bs}_{\bwda,t'}^\beta, \btheta) \\
    &= 0 \,. \quad \text{($\bs_{\bwda}^\beta$ satisfies Euler-Lagrange at $t' = T-t$)}
\end{align*}
Note that $\EL_r(t', \btheta, \beta)$ is evaluated with input $\bx_{t'}$ and target $\by_{t'}$ at time $t' = T-t$, so the nudged dynamics in the IVP automatically use the time-reversed input and target sequences.

\textbf{Step 2: Initial conditions of $\bs_{rev}$.} Using \eqref{eq:construction_srev} and \eqref{eq:rev_chain_rule} with the PFVP boundary conditions $\bs_{\bwda,T}^\beta = \balpha_T$ and $\dot{\bs}_{\bwda,T}^\beta = \bgamma_T$:
\begin{align*}
    \bs_{rev,0}^\beta &= \bs_{\bwda,T}^\beta = \balpha_T, \qquad
    \dot{\bs}_{rev,0}^\beta = -\dot{\bs}_{\bwda,T}^\beta = -\bgamma_T \,.
\end{align*}

\textbf{Step 3: Uniqueness.} Since $\bs_{rev}^\beta$ and $\bs_{\fwda,t}^\beta\left(\btheta, \left(\balpha_T, -\bgamma_T\right)\right)$ both satisfy the same Euler-Lagrange equation with the same initial conditions $(\balpha_T, -\bgamma_T)$ at $t=0$, they are identical by uniqueness of the initial value problem (Remark~\ref{rmk:regularity}):
\begin{align*}
     \bs_{\fwda,t}^\beta\left(\btheta, \left(\balpha_T, -\bgamma_T\right)\right) &= \bs_{rev,t}^\beta = \bs_{\bwda,T-t}^\beta(\btheta,(\balpha_T, \bgamma_T))  \quad \text{(by \eqref{eq:construction_srev})} \,.
\end{align*}
A time translation $t' \leftarrow T-t$ gives the desired result. 
\end{proof}

\section{Proof Theorem~\ref{thm:pfvp-cancellation}: PFVP cancels the boundary residuals}
\begin{proof} [Proof of Theorem~\ref{thm:pfvp-cancellation}]
Let's analyze the boundary residual term from Theorem~\ref{thm:general-ep} 
for the PFVP trajectories $t \mapsto \bs_{\bwda,t}^\beta(\btheta, (\balpha_T(\btheta), \bgamma_T(\btheta)))$:

\begin{align*}
    \left[\left(\partial_{\btheta} \bs_{\bwda,t}^{0} \right)^\top  
    \cdot d_\beta\partial_{\dot{\bs}} L_{\beta}\left(\bs_{\bwda,t}^{\beta}, 
    \dot{\bs}_{\bwda,t}^{\beta}, \btheta\right) -
    \left(d_{\btheta}\partial_{\dot{\bs}} L_{0}\left(\bs_{\bwda,t}^{0}, 
    \dot{\bs}_{\bwda,t}^{0}, \btheta\right)\right)^\top \cdot \partial_\beta \bs_{\bwda,t}^{\beta}\right]_0^T \,.
\end{align*}
We examine the boundary conditions at both temporal endpoints.

\paragraph{Analysis at $t=T$:} 
The boundary residual vanishes due to the parametric final value constraint. 

The right term disappears because $\partial_\beta \bs_{\bwda,T}^{\beta}=0$. 
By the PFVP construction, the nudged trajectory satisfies the boundary condition 
$\bs_{\bwda,T}^\beta(\btheta, (\balpha_T(\btheta), \bgamma_T(\btheta))) = \balpha_T(\btheta)$, 
which is independent of $\beta$.
The left term cancels because:  
\begin{align*}
    d_\beta\partial_{\dot{\bs}} L_{\beta}\left(\bs_{\bwda,T}^{\beta}, 
    \dot{\bs}_{\bwda,T}^{\beta}, \btheta\right) 
    &= d_\beta \partial_{\dot{\bs}} L_{0}\left(\bs_{\bwda,T}^{\beta}, 
    \dot{\bs}_{\bwda,T}^{\beta}, \btheta\right) + \partial^2_{\beta, \dot{\bs}} 
    L_{\beta}\left(\bs_{\bwda,T}^0, \dot{\bs}_{\bwda,T}^0, \btheta\right) \\
    &= \partial^2_{\beta, \dot{\bs}} L_{\beta}\left(\bs_{\bwda,T}^0, 
    \dot{\bs}_{\bwda,T}^0, \btheta\right) \quad 
    \text{(both $\bs_{\bwda,T}^\beta = \balpha_T(\btheta)$ and $\dot{\bs}_{\bwda,T}^\beta = \bgamma_T(\btheta)$ are $\beta$-independent)} \\
    &= \partial_{\dot{\bs}} c\left(\bs_{\bwda,T}^0, \dot{\bs}_{\bwda,T}^0\right) \\
    &= 0 \,. \qquad \text{(cost function $c$ depends only on position, not velocity)}
\end{align*}

\paragraph{Analysis at $t=0$:} 
The boundary residual reduces to easy-to-compute terms at $t=0$.
\begin{align*}
    \bs_{\bwda,0}^{0} &= \bs_{\bwda,0}^{0}(\btheta, (\balpha_T(\btheta), \bgamma_T(\btheta))) 
    \quad \text{(Definition~\ref{def:PFVP-general})}\\
    &= \bs_0^{0}(\btheta, (\balpha_0(\btheta), \bgamma_0(\btheta)))
    \quad \text{(Proposition~\ref{prop:equivalence_IVP_EVP} evaluated at $t=0$)}\\
    &= \balpha_0(\btheta) \,.
\end{align*}

Similarly we have $\dot{\bs}_{\bwda,0}^{0} = \bgamma_0(\btheta)$.

Since $\bs_{\bwda,0}^{0} = \balpha_0(\btheta)$ and $\dot{\bs}_{\bwda,0}^{0} = \bgamma_0(\btheta)$, 
the right term simplifies to:
\begin{align*}
    d_{\btheta}\partial_{\dot{\bs}} L_{\beta}\left(\bs_{\bwda,0}^{0}, 
    \dot{\bs}_{\bwda,0}^{0}, \btheta\right)
    &= d_{\btheta}\partial_{\dot{\bs}} L_{\beta}\left(\balpha_0(\btheta), 
    \bgamma_0(\btheta), \btheta\right) \,.
\end{align*}

\paragraph{Final result:} 
All terms cancel at $t=T$. At $t=0$, the boundary residual evaluates to:

\begin{align*}
    &\left[\left(\partial_{\btheta} \bs_{\bwda,t}^{0} \right)^\top 
    d_\beta\partial_{\dot{\bs}} L_{\beta}\left(\bs_{\bwda,t}^{\beta}, 
    \dot{\bs}_{\bwda,t}^{\beta}, \btheta\right) -
    \left(d_{\btheta}\partial_{\dot{\bs}} L_{0}\left(\bs_{\bwda,t}^{0}, 
    \dot{\bs}_{\bwda,t}^{0}, \btheta\right)\right)^\top \cdot \partial_\beta \bs_{\bwda,t}^{\beta}\right]_0^T \\
    &= \left(\partial_{\btheta} \balpha_0(\btheta)\right)^\top 
    d_\beta\partial_{\dot{\bs}} L_{\beta}\left(\bs_{\bwda,0}^{\beta}, 
    \dot{\bs}_{\bwda,0}^{\beta}, \btheta\right) -
    \left(d_{\btheta}\partial_{\dot{\bs}} L_{0}\left(\balpha_0(\btheta), 
    \bgamma_0(\btheta), \btheta\right)\right)^\top \cdot \partial_\beta \bs_{\bwda,0}^{\beta} \,.
\end{align*}

This yields the desired result.
\end{proof}

\section{Proof Theorem~\ref{thm:lep-rhel-equivalence}: Equivalence between Lagrangian EP
and Recurrent Hamiltonian Echo Learning }

\paragraph{Proof roadmap.}
The proof proceeds in three stages.
\begin{enumerate}[leftmargin=*]
    \item \textbf{Lagrangian--Hamiltonian correspondence (Section~E.1).} 
    We recall the classical result that the Legendre transform maps Euler--Lagrange trajectories bijectively to Hamilton trajectories (Theorem~\ref{thm:legendre_transform_on_dyna}).
    \item \textbf{PFVP $\leftrightarrow$ HES trajectory correspondence (Section~E.2).}
    Using Theorem~\ref{thm:legendre_transform_on_dyna} together with the PFVP reversibility (Proposition~\ref{prop:solution-pfvp-reversibility}), we construct bijective maps between the PFVP free/nudged phases and the HES forward/echo phases.
    \item \textbf{Gradient equivalence (Section~E.3).}
    We transform the PFVP gradient estimator term by term---first the integral term, then the boundary term ---to obtain the RHEL estimator.
\end{enumerate}

\begin{center}
\begin{tikzpicture}[
    node distance=0.6cm and 0.8cm,
    box/.style={draw, rounded corners, text width=5.4cm, align=center, 
                minimum height=1cm, inner sep=6pt, font=\small},
    wide/.style={box, text width=9cm},
    arr/.style={-{Stealth[length=5pt]}, thick}
]
\node[box] (leg) {Section~E.1\\[2pt] Lagrangian $\leftrightarrow$ Hamiltonian correspondence};
\node[box, below left=0.9cm and -0.3cm of leg] (free) {Section~E.2 (free phase)\\[2pt] PFVP $\to$ IVP $\to$ HES forward};
\node[box, below right=0.9cm and -0.3cm of leg] (nudge) {Section~E.2 (nudged phase)\\[2pt] PFVP $\to$ IVP $\to$ HES echo};
\node[wide, below=2.4cm of leg, fill=gray!10] (equiv) {End of Section~E.2\\[2pt] PFVP $\leftrightarrow$ HES equivalence (forward $+$ echo)};
\node[wide, below=0.6cm of equiv] (grad) {Section~E.3: Gradient equivalence\\[2pt] Integral term  $+$ Boundary term };
\node[box, below=0.6cm of grad, text width=6cm, fill=gray!10] (result) {$\Delta^{\text{PFVP}} = \Delta^{\text{RHEL}}$};

\draw[arr] (leg) -- (free);
\draw[arr] (leg) -- (nudge);
\draw[arr] (free) -- (equiv);
\draw[arr] (nudge) -- (equiv);
\draw[arr] (equiv) -- (grad);
\draw[arr] (grad) -- (result);
\end{tikzpicture}
\end{center}

\subsection{Relating the solutions of Euler-Lagrange and Hamilton equations}
Here we first recall a classic theorem about the Legendre transform and how it's used in physics to relate solutions of the Euler-Lagrange and Hamilton's equation. 
\begin{theorem}[Equivalence of Lagrangian and Hamiltonian dynamics]
\label{thm:legendre_transform_on_dyna}
Assume the Legendre transform of Proposition~\ref{prop:Legendre_transform} 
is well defined along the trajectories considered, i.e., the Hessian condition 
$\det(\partial^2_{\dot{\bs},\dot{\bs}} L(\bs_t,\dot{\bs}_t)) \neq 0$ holds at 
each point along the trajectory.

Then the Legendre transform maps solutions of the Euler--Lagrange equations 
bijectively to solutions of Hamilton's equations, together with their 
 initial conditions.

\begin{enumerate}
    \item \textbf{Correspondence of initial conditions.}
    For every Lagrangian initial condition $(\bs_0,\dot{\bs}_0)$ there exists a unique 
    Hamiltonian initial condition
    \begin{equation*}
        \bp_0 = \partial_{\dot{\bs}} L(\bs_0,\dot{\bs}_0)\,,
    \end{equation*}
    and for every Hamiltonian initial condition $(\bs_0,\bp_0)$ there exists a unique 
    Lagrangian initial condition
    \begin{equation*}
        \dot{\bs}_0 = \partial_{\bp} H(\bs_0,\bp_0)\,.
    \end{equation*}
    Thus the Legendre map induces a bijection between  initial conditions.

    \item \textbf{Correspondence of solutions.}
    Let $\bp_t = \partial_{\dot{\bs}} L(\bs_t,\dot{\bs}_t)$. Then:
    \begin{itemize}
        \item If the trajectory $t \mapsto \bs_t$ satisfies the Euler--Lagrange equations
        \begin{equation*}
            \mathrm{d}_t \bigl(\partial_{\dot{\bs}} L(\bs_t,\dot{\bs}_t)\bigr)
            - \partial_{\bs} L(\bs_t,\dot{\bs}_t) = 0\,,
        \end{equation*}
        then the pair $(\bs_t,\bp_t)$ satisfies Hamilton's equations
        \begin{equation*}
            \dot{\bs}_t = \partial_{\bp} H(\bs_t,\bp_t),
            \qquad
            \dot{\bp}_t = -\partial_{\bs} H(\bs_t,\bp_t)\,.
        \end{equation*}

        \item Conversely, if $(\bs_t,\bp_t)$ satisfies Hamilton's equations, then  
        $\bs_t$ satisfies the Euler--Lagrange equations, with
        \begin{equation*}
            \dot{\bs}_t = \partial_{\bp} H(\bs_t,\bp_t),
            \qquad
            \bp_t = \partial_{\dot{\bs}} L(\bs_t,\dot{\bs}_t)\,.
        \end{equation*}
    \end{itemize}
\end{enumerate}

Consequently, under a well-defined Legendre transform, there is a one-to-one 
correspondence between Lagrangian trajectories $\bs_t$ and Hamiltonian 
trajectories $(\bs_t,\bp_t)$, together with their initial conditions.
\end{theorem}

\begin{proof}
By assumption, the Hessian condition $\det(\partial^2_{\dot{\bs},\dot{\bs}} L) \neq 0$ holds along the trajectories considered, so the Legendre transform of Proposition~\ref{prop:Legendre_transform}
gives a smooth locally invertible map
\begin{equation*}
    (\bs_t,\dot{\bs}_t) \longleftrightarrow (\bs_t,\bp_t)
\end{equation*}
at each time $t$, with
\begin{equation*}
    \bp_t = \partial_{\dot{\bs}} L(\bs_t,\dot{\bs}_t),
    \qquad
    H(\bs_t,\bp_t) = \bp_t^\top \dot{\bs}_t - L(\bs_t,\dot{\bs}_t)\,,
\end{equation*}
and inverse relations
\begin{equation*}
    \dot{\bs}_t = \partial_{\bp} H(\bs_t,\bp_t),
    \qquad
    L(\bs_t,\dot{\bs}_t) = \bp_t^\top \dot{\bs}_t - H(\bs_t,\bp_t)\,.
\end{equation*}

We first show that this induces a bijection between  initial conditions,
then prove the equivalence of the equations of motion.

\medskip
\noindent\textbf{1. Correspondence of initial conditions.}
Since for each fixed $\bs$ the map
\begin{equation*}
    \dot{\bs} \mapsto \bp = \partial_{\dot{\bs}} L(\bs,\dot{\bs})
\end{equation*}
is locally invertible (by the non-degenerate Hessian condition of Proposition~\ref{prop:Legendre_transform}),
it follows in particular that at time $t=0$ the map
\begin{equation*}
    (\bs_0,\dot{\bs}_0) \longleftrightarrow (\bs_0,\bp_0),
    \qquad
    \bp_0 = \partial_{\dot{\bs}} L(\bs_0,\dot{\bs}_0)\,,
\end{equation*}
is one-to-one and onto, with inverse
\begin{equation*}
    \dot{\bs}_0 = \partial_{\bp} H(\bs_0,\bp_0)\,.
\end{equation*}
This proves the stated bijection between Lagrangian and Hamiltonian initial
conditions.

\medskip
\noindent\textbf{2. Two basic identities for the Legendre transform.}
We now derive two standard identities that hold whenever $H$ is the 
Legendre transform of $L$:

\begin{equation*}
    \partial_{\bp} H(\bs,\bp) = \dot{\bs},
    \qquad
    \partial_{\bs} H(\bs,\bp) = -\partial_{\bs} L(\bs,\dot{\bs})\,,
\end{equation*}
where $\dot{\bs}$ is implicitly defined by $\bp = \partial_{\dot{\bs}} L(\bs,\dot{\bs})$.

\medskip
\noindent\emph{Identity $\partial_{\bp} H = \dot{\bs}$.}
By definition of $H$,
\begin{equation*}
    H(\bs,\bp) = \bp^\top \dot{\bs}(\bs,\bp) - L\bigl(\bs,\dot{\bs}(\bs,\bp)\bigr)\,,
\end{equation*}
where we view $\dot{\bs}$ as a function of $(\bs,\bp)$ defined implicitly by
\begin{equation*}
    \bp = \partial_{\dot{\bs}} L\bigl(\bs,\dot{\bs}(\bs,\bp)\bigr)\,.
\end{equation*}
Differentiating $H$ with respect to $\bp$ at fixed $\bs$ gives
\begin{equation*}
    \partial_{\bp} H
    = \dot{\bs} + (\partial_{\bp} \dot{\bs})^\top \bp
      - (\partial_{\bp} \dot{\bs})^\top \partial_{\dot{\bs}} L(\bs,\dot{\bs})\,.
\end{equation*}
Since $\bp = \partial_{\dot{\bs}} L(\bs,\dot{\bs})$, the last two terms cancel, and we obtain
\begin{equation*}
    \partial_{\bp} H(\bs,\bp) = \dot{\bs}\,.
\end{equation*}

\medskip
\noindent\emph{Identity $\partial_{\bs} H = -\partial_{\bs} L$.}
Again, from
\begin{equation*}
    H(\bs,\bp) = \bp^\top \dot{\bs}(\bs,\bp) - L\bigl(\bs,\dot{\bs}(\bs,\bp)\bigr)\,,
\end{equation*}
differentiate with respect to $\bs$ at fixed $\bp$:
\begin{equation*}
    \partial_{\bs} H
    = \bp^\top \partial_{\bs} \dot{\bs}
      - \partial_{\bs} L(\bs,\dot{\bs})
      - (\partial_{\bs} \dot{\bs})^\top \partial_{\dot{\bs}} L(\bs,\dot{\bs})\,.
\end{equation*}
Using $\bp = \partial_{\dot{\bs}} L(\bs,\dot{\bs})$, the first and third terms cancel, so
\begin{equation*}
    \partial_{\bs} H(\bs,\bp) = -\partial_{\bs} L(\bs,\dot{\bs})\,.
\end{equation*}

\medskip
\noindent\textbf{3. Euler--Lagrange $\Rightarrow$ Hamilton.}
Assume the trajectory $t \mapsto \bs_t$ satisfies the Euler--Lagrange equations
\begin{equation*}
    \mathrm{d}_t\bigl(\partial_{\dot{\bs}} L(\bs_t,\dot{\bs}_t)\bigr)
    - \partial_{\bs} L(\bs_t,\dot{\bs}_t) = 0\,.
\end{equation*}
Define the momentum
\begin{equation*}
    \bp_t = \partial_{\dot{\bs}} L(\bs_t,\dot{\bs}_t)\,.
\end{equation*}
We must show that $(\bs_t,\bp_t)$ satisfies Hamilton's equations
\begin{equation*}
    \dot{\bs}_t = \partial_{\bp} H(\bs_t,\bp_t),
    \qquad
    \dot{\bp}_t = -\partial_{\bs} H(\bs_t,\bp_t)\,.
\end{equation*}

The first Hamilton equation follows immediately from the identity 
$\partial_{\bp} H = \dot{\bs}$:
\begin{equation*}
    \dot{\bs}_t = \partial_{\bp} H(\bs_t,\bp_t)\,.
\end{equation*}

For the second equation, note that the Euler--Lagrange equation implies
\begin{equation*}
    \dot{\bp}_t = \mathrm{d}_t\bigl(\partial_{\dot{\bs}} L(\bs_t,\dot{\bs}_t)\bigr)
    = \partial_{\bs} L(\bs_t,\dot{\bs}_t)\,.
\end{equation*}
Using the identity $\partial_{\bs} H = -\partial_{\bs} L$ evaluated along the trajectory,
we obtain
\begin{equation*}
    \dot{\bp}_t = \partial_{\bs} L(\bs_t,\dot{\bs}_t)
    = -\partial_{\bs} H(\bs_t,\bp_t)\,,
\end{equation*}
which is exactly the second Hamilton equation.

\medskip
\noindent\textbf{4. Hamilton $\Rightarrow$ Euler--Lagrange.}
Conversely, assume $(\bs_t,\bp_t)$ satisfies Hamilton's equations
\begin{equation*}
    \dot{\bs}_t = \partial_{\bp} H(\bs_t,\bp_t),
    \qquad
    \dot{\bp}_t = -\partial_{\bs} H(\bs_t,\bp_t)\,,
\end{equation*}
and that $L$ and $H$ are related by the Legendre transform as above.

Define the velocity via the inverse Legendre relation
\begin{equation*}
    \dot{\bs}_t = \partial_{\bp} H(\bs_t,\bp_t)\,,
\end{equation*}
and define
\begin{equation*}
    \bp_t = \partial_{\dot{\bs}} L(\bs_t,\dot{\bs}_t)\,,
\end{equation*}
which is consistent by assumption (the Legendre map is a bijection).

We want to show that $\bs_t$ satisfies the Euler--Lagrange equation
\begin{equation*}
    \mathrm{d}_t\bigl(\partial_{\dot{\bs}} L(\bs_t,\dot{\bs}_t)\bigr)
    - \partial_{\bs} L(\bs_t,\dot{\bs}_t) = 0\,.
\end{equation*}

By definition of $\bp_t$,
\begin{equation*}
    \mathrm{d}_t\bigl(\partial_{\dot{\bs}} L(\bs_t,\dot{\bs}_t)\bigr)
    = \dot{\bp}_t\,.
\end{equation*}
Using Hamilton's second equation and the identity 
$\partial_{\bs} H = -\partial_{\bs} L$, we obtain
\begin{equation*}
    \dot{\bp}_t = -\partial_{\bs} H(\bs_t,\bp_t)
    = \partial_{\bs} L(\bs_t,\dot{\bs}_t)\,.
\end{equation*}
Therefore
\begin{equation*}
    \mathrm{d}_t\bigl(\partial_{\dot{\bs}} L(\bs_t,\dot{\bs}_t)\bigr)
    - \partial_{\bs} L(\bs_t,\dot{\bs}_t) = 0\,,
\end{equation*}
which is precisely the Euler--Lagrange equation.

\medskip
\noindent\textbf{5. Bijection of trajectories.}
Steps 3 and 4 show that:

\begin{itemize}
    \item Any trajectory $t \mapsto \bs_t$ solving the Euler--Lagrange equation,
    together with $\bp_t = \partial_{\dot{\bs}} L(\bs_t,\dot{\bs}_t)$, yields a trajectory
    $(\bs_t,\bp_t)$ solving Hamilton's equations.

    \item Any trajectory $(\bs_t,\bp_t)$ solving Hamilton's equations, together with
    $\dot{\bs}_t = \partial_{\bp} H(\bs_t,\bp_t)$, yields a trajectory $\bs_t$ solving the
    Euler--Lagrange equation.
\end{itemize}

Combined with the bijection at the level of initial condition shown in step~1,
this establishes the one-to-one correspondence between Lagrangian trajectories
$\bs_t$ and Hamiltonian trajectories $(\bs_t,\bp_t)$, together with their initial
conditions.
\end{proof}

\subsection{Constructing the invertible mapping between PFVP and HES}
\label{sec:pfvp-hes-trajectory-correspondence}

For readability, in this section we will omit the $\btheta$ dependence on the variable $\balpha_0,\bgamma_0$ and $\balpha^H_0$.

\subsubsection{Free-phase and Forward phase}

\paragraph{From PFVP to HES.} We now demonstrate how the forward phase of an
HES can be constructed from the free phase of the PFVP. 
From Proposition~\ref{prop:equivalence_IVP_EVP}, we can express the free phase of the PFVP as a solution of a IVP:
\begin{align*}
    \bs_{\bwda,t}^0\left(\btheta, \left(\balpha_T(\btheta), 
    \bgamma_T(\btheta)\right)\right) = \bs_{\fwda,t}^0(\btheta, (\balpha_0, \bgamma_0)) \quad\text{for all } t\in[0,T]\,.
\end{align*}
where $\balpha_T(\btheta) = \bs_{\fwda,T}^0(\btheta, (\balpha_0, \bgamma_0))$ and $\bgamma_T(\btheta) = \dot{\bs}_{\fwda,T}^0(\btheta, (\balpha_0, \bgamma_0))$ (Eq.~\ref{def:PFVP-boundary-from-CIVP}).
Applying the forward Legendre transformation of Theorem~\ref{thm:legendre_transform_on_dyna} on this IVP we get the HES forward trajectory $\bPhi_t({\btheta}, (\balpha_0, \bmu_0)^\top)$ that is a solution of the associated Hamilton equation of the IVP:
\begin{align}
    \label{eq:forward-construction}
    \forall t \in [0,T], \quad \bPhi_t({\btheta}, (\balpha_0, \bmu_0)^\top):=\begin{pmatrix}
        \bs_{\fwda,t}^0(\btheta, (\balpha_0, \bgamma_0)) \\
        \partial_{\dot{\bs}} L_{0}\left(
    \bs_{\fwda,t}^0, 
    \dot{\bs}_{\fwda,t}^0,\btheta\right)
    \end{pmatrix}, \quad 
    \begin{pmatrix} \balpha_0 \\ \bmu_0 \end{pmatrix} :=
    \begin{pmatrix} 
        \balpha_0 \\ 
        \partial_{\dot{\bs}} L_0(\balpha_0, \bgamma_0, \btheta)
    \end{pmatrix}\,.
\end{align}

\paragraph{From HES to PFVP.}  To construct the forward phase we applied the two following transformations:
\begin{equation*}
\underbrace{t\mapsto \bs_{\bwda,t}^0\!\left(\btheta,(\balpha_T(\btheta),\bgamma_T(\btheta))\right)}_{\text{PFVP free}}
\!\xrightarrow[\text{Prop.~\ref{prop:equivalence_IVP_EVP}}]{\text{PFVP}\to\text{IVP}}\!
\underbrace{t \mapsto\bs_{\fwda,t}^0(\btheta, (\balpha_0, \bgamma_0))}_{\text{IVP free}}
\!\xrightarrow[\text{Thm.~\ref{thm:legendre_transform_on_dyna}}]{\text{Legendre}}\!
\underbrace{t\mapsto \bPhi_t(\btheta,(\balpha_0, \bmu_0)^\top)}_{\text{HES forward}}\,.
\end{equation*}

Since each of these two transformations is bijective, their composition is also a bijection. Hence the free phase of the PFVP can be constructed from the forward phase of the HES, and vice versa. Applying the inverse maps we get:
\begin{align*}
    \forall t \in [0, T], \quad
    \begin{pmatrix} 
    \bs_{\bwda,t}^0(\btheta, (\balpha_T(\btheta), \bgamma_T(\btheta))) \\
    \dot{\bs}_{\bwda,t}^0(\btheta, (\balpha_T(\btheta), \bgamma_T(\btheta)))
    \end{pmatrix}
    := 
    \begin{pmatrix} 
    {\bs}^0_t(\btheta, (\balpha_0, \bmu_0)^\top) \\
    \partial_{\bp} H(\bPhi_t(\btheta, (\balpha_0, \bmu_0)^\top),\btheta)
    \end{pmatrix}\,,
\end{align*}
where ${\bs}^0_t(\btheta, (\balpha_0, \bmu_0)^\top)$ refers to the first vector component of $\bPhi_t(\btheta, (\balpha_0, \bmu_0)^\top)$, and $\partial_{\bp} H$ means the derivative with respect to second vector component of $\bPhi_t(\btheta, (\balpha_0, \bmu_0)^\top)$. Also, the initial condition of this PFVP are:
\begin{align*}
    \begin{pmatrix} 
    \balpha_0 \\
    \bgamma_0
    \end{pmatrix}
    := 
    \begin{pmatrix} 
    \balpha_0 \\
    \partial_{\bp} H(\balpha_0, \bmu_0,\btheta)
    \end{pmatrix}\,.
\end{align*}

where $\begin{pmatrix} \balpha_0 \\ \bmu_0 \end{pmatrix}$ with $\balpha_0$ being the position and $\bmu_0$ being the momentum.

\subsubsection{Nudged-phase and Echo-phase}

\paragraph{From PFVP to HES}

We now show how the echo phase of the HES arises from the \emph{nudged} PFVP.  
The nudged PFVP trajectory is defined by
\begin{equation*}
t\mapsto \bs_{\bwda,t}^\beta\!\left(\btheta,(\balpha_T(\btheta),\bgamma_T(\btheta))\right),
\qquad t\in[0,T]\,,
\end{equation*}
By Proposition~\ref{prop:solution-pfvp-reversibility}, this can be rewritten as a time translation of the IVP $t\mapsto \bs_{\fwda,t}^\beta \left(\btheta,(\balpha_T(\btheta),-\bgamma_T(\btheta))\right)$:
\begin{equation}
\forall t\in[0,T],\qquad
\bs_{\bwda,t}^\beta\!\left(\btheta,(\balpha_T(\btheta),\bgamma_T(\btheta))\right)
=
\bs_{\fwda,T-t}^\beta\!\left(\btheta,(\balpha_T(\btheta),-\bgamma_T(\btheta))\right)\,.
\label{eq:pfvp-ivp-echo}
\end{equation}
Applying the forward Legendre transform of
Theorem~\ref{thm:legendre_transform_on_dyna},
to the nudged IVP yields the echo phase:
\begin{align}
    \label{eq:phi_e_construction_ivp}
    \forall t \in [0,T], \quad {\bPhi}^e_t({\btheta}, \balpha^{H,e}_0):=\begin{pmatrix}
        \bs_{\fwda,t}^\beta(\btheta, (\balpha_T(\btheta), -\bgamma_T(\btheta))) \\
        \partial_{\dot{\bs}} L_{\beta}\left(
    {\mathbf{s}}_{\fwda,t}^\beta, 
    \dot{{\mathbf{s}}}_{\fwda,t}^\beta,\btheta\right)
    \end{pmatrix}, \quad 
    \balpha^{H,e}_0:=
    \begin{pmatrix} 
        \balpha_T(\btheta) \\ 
        \partial_{\dot{\bs}} L_{\beta}(\balpha_T(\btheta), -\bgamma_T(\btheta), \btheta)
    \end{pmatrix}\,.
\end{align}

To get the full mapping to desired echo phase, we now show that $\balpha^{H,e}_0=\bSigma_z\bPhi_T$. We analyze the second component of $\balpha^{H,e}_0$.  
By definition, it involves the term
\begin{equation*}
    \partial_{\dot{\bs}} L_{\beta}(\balpha_T(\btheta),-\bgamma_T(\btheta),\btheta)\,.
\end{equation*}
By Lemma~\ref{lemma:odd_derivative}, we obtain
\begin{equation*}
    \partial_{\dot{\bs}} L_{\beta}(\balpha_T(\btheta),-\bgamma_T(\btheta),\btheta)
    = -\,\partial_{\dot{\bs}} L_{\beta}(\balpha_T(\btheta),\bgamma_T(\btheta),\btheta)\,.
\end{equation*}
which gives:
\begin{align}
    \balpha^{H,e}_0&=
    \begin{pmatrix} 
        \balpha_T(\btheta) \\ 
        -\partial_{\dot{\bs}} L_{\beta}(\balpha_T(\btheta),\bgamma_T(\btheta), \btheta)
    \end{pmatrix} \notag\\
    &=
    \label{eq:lambda^e_dvped}
    \begin{pmatrix} 
        \balpha_T(\btheta) \\ 
        -\partial_{\dot{\bs}} L_{0}(\balpha_T(\btheta),\bgamma_T(\btheta), \btheta)
    \end{pmatrix}\,. \quad \text{(Lemma~\ref{lemma:beta_independent_momentum})}
\end{align}
We now evaluate Eq.~\eqref{eq:forward-construction} at time $t=T$. 
\begin{equation*}
\bPhi_T(\btheta,(\balpha_0, \bmu_0)^\top)
=
\begin{pmatrix}
\bs_{\fwda,T}^0(\btheta,(\balpha_0, \bgamma_0)) \\[3pt]
\partial_{\dot{\bs}} L_0\!\left( \bs_{\fwda,T}^0, \dot{\bs}_{\fwda,T}^0, \btheta \right)
\end{pmatrix}\,.
\end{equation*}
By the PFVP construction (Eq.~\eqref{def:PFVP-boundary-from-CIVP}),
\begin{equation*}
\bs_{\fwda,T}^0 = \balpha_T(\btheta),
\qquad
\dot{\bs}_{\fwda,T}^0 = \bgamma_T(\btheta)\,,
\end{equation*}
so that
\begin{equation}
\label{eq:phi_0}
    \bPhi_T(\btheta,(\balpha_0, \bmu_0)^\top)
=
\begin{pmatrix}
\balpha_T(\btheta) \\[3pt]
\partial_{\dot{\bs}} L_0(\balpha_T(\btheta),\bgamma_T(\btheta),\btheta)
\end{pmatrix}\,.
\end{equation}
Taking this last Equation~\eqref{eq:phi_0} with Equation~\eqref{eq:lambda^e_dvped}, we have the final condition that makes the constructed echo-phase well-defined:
\begin{align*}
\;\balpha^{H,e}_0 =  \bSigma_z\, \bPhi_T(\btheta,(\balpha_0, \bmu_0)^\top)\,.
\end{align*}
Rewriting our construction (Equation~\eqref{eq:phi_e_construction_ivp})  in terms of PFVP variables, we have constructed $t\mapsto \bPhi_t^e(\btheta,\balpha^{H,e}_0)$ with:
\begin{equation}
    \label{eq:phe_e_construction}
    \forall t \in [0,T], \quad {\bPhi}^e_t({\btheta}, \balpha^{H,e}_0):=\begin{pmatrix}
        \bs_{\bwda,T-t}^\beta(\btheta, (\balpha_T(\btheta), \bgamma_T(\btheta))) \\
        \partial_{\dot{\bs}} L_{\beta}\left(
    \bs_{\bwda,T-t}^\beta,
    -\dot{\bs}_{\bwda,T-t}^\beta,\btheta\right)
    \end{pmatrix} \quad, \balpha^{H,e}_0 := \bSigma_z\, \begin{pmatrix}
\balpha_T(\btheta) \\[3pt]
\partial_{\dot{\bs}} L_0(\balpha_T(\btheta),\bgamma_T(\btheta),\btheta)
\end{pmatrix}\,.
\end{equation}

\paragraph{From HES to PFVP.}  To construct the echo phase, we applied the two following transformations:

\begin{equation*}
\underbrace{t\mapsto \bs_{\bwda,t}^\beta\!\left(\btheta,(\balpha_T(\btheta),\bgamma_T(\btheta))\right)}_{\text{PFVP nudge}}
\!\xrightarrow[\text{Prop.~\ref{prop:solution-pfvp-reversibility}}]{\text{PFVP}\to\text{IVP}  \text{, time translation}}\!
\underbrace{t\mapsto \bs_{\fwda,t}^\beta\!\left(\btheta,(\balpha_T(\btheta),-\bgamma_T(\btheta))\right)}_{\text{IVP nudge}}
\!\xrightarrow[\text{Thm.~\ref{thm:legendre_transform_on_dyna}}]{\text{Legendre}}\!
\underbrace{t\mapsto \bPhi_t^e(\btheta,\bSigma_z\bPhi_T)}_{\text{HES echo}}\,.
\end{equation*}
Since each of these two transformations is bijective, their composition is also a bijection. Hence the nudged phase of the PFVP can be constructed from the echo phase of the HES, and vice versa.


\subsection{Gradient Equivalence.}
We prove that the PFVP gradient estimator equals the RHEL gradient estimator by applying the forward Legendre transform. This direction of the proof leverages the trajectory correspondences already established in Section~\ref{sec:pfvp-hes-trajectory-correspondence}.

\subsection*{Starting Point: PFVP Gradient Estimator}
The PFVP gradient estimator in Lagrangian variables is (from Theorem~\ref{thm:pfvp-cancellation}):
\begin{align*}
\Delta^{\text{PFVP}}(\beta, \balpha_0, \bgamma_0) := \frac{1}{\beta} \Bigg[ &\underbrace{\int_0^T \left(\partial_{\btheta} L_{\beta}(\bs_{\bwda,t}^{\beta}, \dot{\bs}_{\bwda,t}^{\beta}, \btheta) - \partial_{\btheta} L_{0}(\bs_{\bwda,t}^{0}, \dot{\bs}_{\bwda,t}^{0}, \btheta)\right) \dt}_{\text{Integral term}: \bm{I}} \\
&+ \underbrace{d_{\btheta}\begin{pmatrix}\partial_{\dot{\bs}} L_0(\balpha_0, \bgamma_0, \btheta) \\
\balpha_0
\end{pmatrix}^\top \bSigma_z
\begin{pmatrix}
(\bs_{\bwda,0}^\beta - \balpha_0) \\
\partial_{\dot{\bs}} L_\beta(\bs_{\bwda,0}^\beta, \dot{\bs}_{\bwda,0}^\beta, \btheta) - \partial_{\dot{\bs}} L_0(\balpha_0, \bgamma_0, \btheta)
\end{pmatrix}}_{\text{Boundary term}: \bm{B}} \Bigg]\,.
\end{align*}
Our goal is to show that this gradient estimator is equivalent to the following one (Theorem~\ref{thm:rhel}):
\begin{align*}
\Delta^{\text{RHEL}}(\beta, \balpha^H_0(\btheta)) = -\frac{1}{\beta} \left(\int_0^T 
\left[ \partial_{\btheta} H_\beta(\bPhi^e_t(\beta), {\btheta}) 
- \partial_{\btheta} H_0(\bPhi_t, {\btheta}) \right]\dt - \left(\partial_{\btheta} \balpha^H_0 \right)^\top \bSigma_x (\bPhi^e_T(\beta) - \bSigma_z\bPhi_0)\right)\,.
\end{align*}

\subsection*{Main Proof: Transforming PFVP to RHEL}

The proof relies on the trajectory correspondences established in Section E.2. Rather than restating these correspondences, we will reference the relevant equations from E.2 as needed throughout the proof.

\paragraph{Step 1: Transform the Integral Term}

We start with the integral term of PFVP:
\begin{align*}
    \bm{I} = \int_0^T \left(\partial_{\btheta} L_{\beta}(\bs_{\bwda,t}^{\beta}, \dot{\bs}_{\bwda,t}^{\beta}, \btheta) - \partial_{\btheta} L_{0}(\bs_{\bwda,t}^{0}, \dot{\bs}_{\bwda,t}^{0}, \btheta)\right) \dt\,.
\end{align*}

\textbf{Step 1.1: Applying the Parameter-Gradient Relation.}

To transform this integral, we use the parameter-gradient relation established in Lemma~\ref{lemma:param_grad}: $\partial_{\btheta} H(\bPhi_t,\btheta) = - \partial_{\btheta} L(\bs_t,\dot{\bs}_t,\btheta)$.

Recall from the beginning of Step 1:
\begin{align*}
    \bm{I} = \int_0^T \left(\partial_{\btheta} L_{\beta}(\bs_{\bwda,t}^{\beta}, \dot{\bs}_{\bwda,t}^{\beta}, \btheta) - \partial_{\btheta} L_{0}(\bs_{\bwda,t}^{0}, \dot{\bs}_{\bwda,t}^{0}, \btheta)\right) \dt, \quad t \in [0, T]\,.
\end{align*}

By Lemma~\ref{lemma:beta_independent_momentum}, we have $\partial_{\btheta} L_\beta = \partial_{\btheta} L_0$. Thus:
\begin{align*}
    \bm{I} = \int_0^T \left(\partial_{\btheta} L_{0}(\bs_{\bwda,t}^{\beta}, \dot{\bs}_{\bwda,t}^{\beta}, \btheta) - \partial_{\btheta} L_{0}(\bs_{\bwda,t}^{0}, \dot{\bs}_{\bwda,t}^{0}, \btheta)\right) \dt, \quad t \in [0, T]\,.
\end{align*}

To transform this to Hamiltonians, we recall the two equality from Section E.2:
\begin{align*}
    {\bPhi}^e_t=\begin{pmatrix}
        \bs_{\bwda,T-t}^\beta \\
        \partial_{\dot{\bs}} L_{\beta}(\bs_{\bwda,T-t}^\beta, -\dot{\bs}_{\bwda,T-t}^\beta,\btheta)
    \end{pmatrix}, \quad t \in [0,T]\,, \quad \text{(Eq.~\ref{eq:phe_e_construction})}
\end{align*}
\begin{align*}
    \bPhi_t = \begin{pmatrix}
        \bs_{\fwda,t}^0 \\
        \partial_{\dot{\bs}} L_{0}(\bs_{\fwda,t}^0, \dot{\bs}_{\fwda,t}^0, \btheta)
    \end{pmatrix}, \quad t \in [0,T]\,. \quad \text{(Eq.~\ref{eq:forward-construction})}
\end{align*}

We apply Lemma~\ref{lemma:param_grad} to transform each term. 

For the first term, we apply Lemma~\ref{lemma:param_grad} to the augmented system with Hamiltonian $H_\beta$ and Lagrangian $L_\beta$ at $\bPhi^e_{t'}$ with $t' \in [0, T]$.
The Lagrangian trajectory corresponding to $\bPhi^e_{t'}$ is the IVP trajectory at time $t'$, whose velocity is $-\dot{\bs}_{\bwda,T-t'}^{\beta}$ (cf.\ Eq.~\ref{eq:pfvp-ivp-echo}). Thus:
\begin{align*}
    \partial_{\btheta} H_\beta(\bPhi^e_{t'}, \btheta) = -\partial_{\btheta} L_{\beta}(\bs_{\bwda,T-t'}^{\beta}, -\dot{\bs}_{\bwda,T-t'}^{\beta}, \btheta), \quad t' \in [0, T]\,.
\end{align*}
By Lemma~\ref{lemma:beta_independent_momentum}, we have $\partial_{\btheta} L_\beta = \partial_{\btheta} L_0$ and $\partial_{\btheta} H_\beta = \partial_{\btheta} H_0$, giving:
\begin{align*}
    \partial_{\btheta} H_0(\bPhi^e_{t'}, \btheta) = -\partial_{\btheta} L_{0}(\bs_{\bwda,T-t'}^{\beta}, -\dot{\bs}_{\bwda,T-t'}^{\beta}, \btheta), \quad t' \in [0, T]\,.
\end{align*}
Since $L_0$ is a reversible Lagrangian, i.e.\ $L_0(\bs, -\dot{\bs}, \btheta) = L_0(\bs, \dot{\bs}, \btheta)$ (cf.\ Eq.~\ref{def:PFVP-general}), differentiating with respect to $\btheta$ gives $\partial_{\btheta} L_{0}(\bs, -\dot{\bs}, \btheta) = \partial_{\btheta} L_{0}(\bs, \dot{\bs}, \btheta)$.
Change of variables $t' = T - t$ then gives:
\begin{align*}
    \partial_{\btheta} L_{0}(\bs_{\bwda,t}^{\beta}, \dot{\bs}_{\bwda,t}^{\beta}, \btheta) = -\partial_{\btheta} H_0(\bPhi^e_{T-t}, \btheta), \quad t \in [0, T]\,.
\end{align*}

For the second term, we apply Lemma~\ref{lemma:param_grad} to the non-augmented system with Hamiltonian $H_0$ and Lagrangian $L_0$ at $\bPhi_{t'}$ with $t' \in [0, T]$:
\begin{align*}
    \partial_{\btheta} H_0(\bPhi_{t'}, \btheta) = -\partial_{\btheta} L_{0}(\bs_{\bwda,t'}^{0}, \dot{\bs}_{\bwda,t'}^{0}, \btheta), \quad t' \in [0, T]\,.
\end{align*}
Therefore:
\begin{align*}
    \partial_{\btheta} L_{0}(\bs_{\bwda,t}^{0}, \dot{\bs}_{\bwda,t}^{0}, \btheta) = -\partial_{\btheta} H_0(\bPhi_t, \btheta), \quad t \in [0, T]\,.
\end{align*}

Substituting both results into $\bm{I}$ for $t \in [0, T]$:
\begin{align*}
    \bm{I} &= \int_0^T \left(-\partial_{\btheta} H_0(\bPhi^e_{T-t}, \btheta) - (-\partial_{\btheta} H_0(\bPhi_t, \btheta))\right) \dt \\
    &= \int_0^T \left(-\partial_{\btheta} H_0(\bPhi^e_{T-t}, \btheta) + \partial_{\btheta} H_0(\bPhi_t, \btheta)\right) \dt\,.
\end{align*}

Final change of variables: Let $t' = T - t$ so that $dt' = -dt$. When $t \in [0,T]$, we have $t' \in [T, 0]$:
\begin{align*}
    \bm{I} &= \int_T^0 \left(\partial_{\btheta} H_0(\bPhi^e_{t'}, \btheta) - \partial_{\btheta} H_0(\bPhi_{T-t'}, \btheta)\right) (-dt') \\
    &= -\int_0^T \left(\partial_{\btheta} H_0(\bPhi^e_t, \btheta) - \partial_{\btheta} H_0(\bPhi_{T-t}, \btheta)\right) dt \\
    &= -\int_0^T \left(\partial_{\btheta} H_0(\bPhi^e_t, \btheta) - \partial_{\btheta} H_0(\bPhi_t, \btheta)\right) dt\,.
\end{align*}
where the last equality uses the change of dummy integration variable $u = T - t$ in the second term only: $\int_0^T f(\bPhi_{T-t})\,dt = \int_0^T f(\bPhi_u)\,du$.

This matches (up to sign) the integral term in RHEL.

\paragraph{Step 2: Transform the Boundary Term}

The boundary term in PFVP (from Theorem~\ref{thm:pfvp-cancellation}) is:
\begin{align*}
    \bm{B} = d_{\btheta}\begin{pmatrix}\balpha_0 \\
\partial_{\dot{\bs}} L_0(\balpha_0, \bgamma_0, \btheta)
\end{pmatrix}^\top \bSigma_x
\begin{pmatrix}
(\bs_{\bwda,0}^\beta - \balpha_0) \\
-\partial_{\dot{\bs}}L_\beta(\bs_{\bwda,0}^\beta, \dot{\bs}_{\bwda,0}^\beta, \btheta)+\partial_{\dot{\bs}} L_0(\balpha_0, \bgamma_0, \btheta)
\end{pmatrix}\,.
\end{align*}

Recall from Section~\ref{sec:pfvp-hes-trajectory-correspondence} the mapping:
\begin{align*}
    \bPhi^e_T &= \begin{pmatrix}
        \bs_{\bwda,0}^\beta \\
        \partial_{\dot{\bs}} L_\beta(\bs_{\bwda,0}^\beta, -\dot{\bs}_{\bwda,0}^\beta, \btheta)
    \end{pmatrix}\,, \quad \text{(Eq.~\ref{eq:phe_e_construction} at $t=T$)}
\end{align*}
\begin{align}
    \label{eq:initial_mapping}
    \bPhi_0 = \begin{pmatrix} \balpha_0 \\ \bmu_0 \end{pmatrix} = \begin{pmatrix}
        \balpha_0 \\
        \partial_{\dot{\bs}} L_0(\balpha_0, \bgamma_0, \btheta)
    \end{pmatrix}\,. \quad \text{(Eq.~\ref{eq:forward-construction} at $t=0$)}
\end{align}

Therefore:
\begin{align}
    \bPhi^e_T - \bSigma_z\bPhi_0 &= \begin{pmatrix}
        \bs_{\bwda,0}^\beta - \balpha_0 \\
        \partial_{\dot{\bs}} L_\beta(\bs_{\bwda,0}^\beta, -\dot{\bs}_{\bwda,0}^\beta, \btheta) + \partial_{\dot{\bs}} L_0(\balpha_0, \bgamma_0, \btheta)
    \end{pmatrix}\notag\\
    &=\begin{pmatrix}
        \bs_{\bwda,0}^\beta - \balpha_0 \\
        -\partial_{\dot{\bs}} L_\beta(\bs_{\bwda,0}^\beta, \dot{\bs}_{\bwda,0}^\beta, \btheta) + \partial_{\dot{\bs}} L_0(\balpha_0, \bgamma_0, \btheta)
    \end{pmatrix}\,. \quad \text{(Lemma~\ref{lemma:odd_derivative})} \label{eq:phi_sum}
\end{align}

Also, from the initial condition $\begin{pmatrix} \balpha_0 \\ \bmu_0 \end{pmatrix} = \bPhi_0$ (Eq.~\ref{eq:initial_mapping}), we can deduce:
\begin{align}\label{eq:partial_theta_lambda}
    \partial_{\btheta} \begin{pmatrix} \balpha_0 \\ \bmu_0 \end{pmatrix} = \partial_{\btheta} \begin{pmatrix} 
        \balpha_0 \\ 
        \bmu_0
    \end{pmatrix} = d_{\btheta} \begin{pmatrix} 
        \balpha_0 \\ 
        \partial_{\dot{\bs}} L_0(\balpha_0, \bgamma_0, \btheta) 
    \end{pmatrix}\,. 
\end{align}

We now show that the RHEL boundary term equals $\bm{B}$. Starting from the RHEL boundary term:
\begin{align*}
    \left(\partial_{\btheta} \balpha^H_0 \right)^\top \bSigma_x (\bPhi^e_T - \bSigma_z\bPhi_0)
    &= \left(d_{\btheta}\begin{pmatrix}\balpha_0 \\
\partial_{\dot{\bs}} L_0(\balpha_0, \bgamma_0, \btheta)
\end{pmatrix}\right)^\top \bSigma_x (\bPhi^e_T - \bSigma_z\bPhi_0) \quad \text{(substitute Eq.~\ref{eq:partial_theta_lambda})} \\
    &= \left(d_{\btheta}\begin{pmatrix}\balpha_0 \\
\partial_{\dot{\bs}} L_0(\balpha_0, \bgamma_0, \btheta)
\end{pmatrix}\right)^\top \bSigma_x \begin{pmatrix}
        \bs_{\bwda,0}^\beta - \balpha_0 \\
        -\partial_{\dot{\bs}} L_\beta(\bs_{\bwda,0}^\beta, \dot{\bs}_{\bwda,0}^\beta, \btheta) + \partial_{\dot{\bs}} L_0(\balpha_0, \bgamma_0, \btheta)
    \end{pmatrix} \quad \text{(substitute Eq.~\ref{eq:phi_sum})} \\
    &= \bm{B}\,. \quad \text{(matches the PFVP boundary term)}
\end{align*}

This shows the boundary terms match exactly.

\paragraph{Step 3: Combine and Conclude}

Combining both terms from Step 1 and Step 2, we have:
\begin{align*}
    \Delta^{\text{PFVP}}(\beta) &= \frac{1}{\beta} \left(\bm{I} + \bm{B}\right) \\
    &= \frac{1}{\beta} \left(-\int_0^T \left(\partial_{\btheta} H_0(\bPhi^e_t, \btheta) - \partial_{\btheta} H_0(\bPhi_t, \btheta)\right) dt + \left(\partial_{\btheta} \begin{pmatrix} \balpha_0 \\ \bmu_0 \end{pmatrix} \right)^\top \bSigma_x (\bPhi^e_T(\beta) - \bSigma_z\bPhi_0)\right) \\
    &= -\frac{1}{\beta} \left(\int_0^T \left(\partial_{\btheta} H_0(\bPhi^e_t, \btheta) - \partial_{\btheta} H_0(\bPhi_t, \btheta)\right) dt - \left(\partial_{\btheta} \begin{pmatrix} \balpha_0 \\ \bmu_0 \end{pmatrix} \right)^\top \bSigma_x (\bPhi^e_T(\beta) - \bSigma_z\bPhi_0)\right) \\
    &= \Delta^{\text{RHEL}}(\beta, \balpha_0(\btheta), \bmu_0(\btheta))\,.
\end{align*}



\section{Dissipative Lagrangian Equilibrium Propagation}
\label{appx:dissipative_LEP}

This appendix presents the general theory of dissipative Lagrangian Equilibrium Propagation (LEP), including the proof of the main theorem and the energy dissipation property. The harmonic oscillator instantiation is presented in the following section as a concrete example.

\subsection{Proof of Theorem~\ref{thm:dissipative_LEP}: Dissipative LEP}

\begin{proof}
We first derive the Euler-Lagrange equation~\eqref{eq:dissipative_EL}, then apply Theorem~\ref{thm:pfvp-cancellation}.

\textbf{Step 1: Derivation of the dissipative Euler-Lagrange equation.}
The standard Euler-Lagrange equation for $L^{\mathrm{diss}}_\beta$ is:
\begin{equation*}
    \partial_{\bs} L^{\mathrm{diss}}_\beta - d_t\partial_{\dot{\bs}} L^{\mathrm{diss}}_\beta = 0\,.
\end{equation*}

Since $c(\bs_t, \by_t)$ does not depend on $\dot{\bs}_t$, the velocity derivative is:
\begin{equation*}
    \partial_{\dot{\bs}} L^{\mathrm{diss}}_\beta = \exp(\bzeta t) \cdot \partial_{\dot{\bs}} L_0\,.
\end{equation*}

Taking the time derivative using the product rule:
\begin{align*}
    d_t\partial_{\dot{\bs}} L^{\mathrm{diss}}_\beta &= \bzeta \exp(\bzeta t) \cdot \partial_{\dot{\bs}} L_0 + \exp(\bzeta t) \cdot d_t\left( \partial_{\dot{\bs}} L_0 \right)  \\
    &= \exp(\bzeta t) \cdot \left( \bzeta \, \partial_{\dot{\bs}} L_0 + d_t\partial_{\dot{\bs}} L_0 \right)\,.
\end{align*}

The position derivative is:
\begin{equation*}
    \partial_{\bs} L^{\mathrm{diss}}_\beta = \exp(\bzeta t) \cdot \partial_{\bs} L_0 + \beta \, \partial_{\bs} c\,.
\end{equation*}

Substituting into the Euler-Lagrange equation $\partial_{\bs} L^{\mathrm{diss}}_\beta - d_t\partial_{\dot{\bs}} L^{\mathrm{diss}}_\beta = 0$ and multiplying through by $\exp(-\bzeta t)$ yields~\eqref{eq:dissipative_EL}.

\textbf{Step 2: Physical interpretation (free phase).}
For $\beta = 0$, dividing by $\exp(\bzeta t) \neq 0$:
\begin{equation*}
    \partial_{\bs} L_0 - d_t\partial_{\dot{\bs}} L_0 = \bzeta \, \partial_{\dot{\bs}} L_0\,.
\end{equation*}
This shows that the effect of the exponential time-scaling is to add a friction-like term proportional to $\partial_{\dot{\bs}} L_0$ to the standard Euler-Lagrange equation. When the Lagrangian has quadratic kinetic energy ($\partial_{\dot{\bs}} L_0 = \dot{\bs}$), this reduces to Newton's second law with viscous friction $\mathbf{F}_{\mathrm{friction}} = -\bzeta \dot{\bs}$.

\textbf{Step 3: Application of Theorem~\ref{thm:pfvp-cancellation}.}
Since $\partial_{\btheta} L^{\mathrm{diss}}_\beta = \partial_{\btheta} L_0 \cdot \exp(\bzeta t)$ (the cost $c$ does not depend on $\btheta$), the integral term in the PFVP gradient estimator becomes:
\begin{equation*}
    \int_0^T \left( \partial_{\btheta} L_0(\bs_{\bwda,t}^\beta, \dot{\bs}_{\bwda,t}^\beta, \btheta) - \partial_{\btheta} L_0(\bs_{\bwda,t}^0, \dot{\bs}_{\bwda,t}^0, \btheta) \right) \exp(\bzeta t) \, \mathrm{d}t\,.
\end{equation*}

For the boundary terms at $t = 0$, we have $\partial_{\dot{\bs}} L^{\mathrm{diss}}_\beta = \partial_{\dot{\bs}} L_0 \cdot \exp(\bzeta \cdot 0) = \partial_{\dot{\bs}} L_0$, so they remain unchanged from Theorem~\ref{thm:pfvp-cancellation}. The PFVP-to-IVP reduction (Proposition~\ref{prop:solution-pfvp-reversibility}) generalizes to the dissipative setting: since the undamped Lagrangian $L_0$ is time-reversible, the bouncing-backward kick applies with the replacement $\bzeta \to -\bzeta$ in the echo phase, corresponding to energy pumping during the nudged backward trajectory (see Proposition~\ref{prop:dissipative_time_reversal}).

\textbf{Remark (Exponential weighting):} The factor $\exp(\bzeta t)$ weights later time steps exponentially more than earlier ones. 
\end{proof}

\subsection{Proof of Proposition~\ref{prop:energy_dissipation}: Energy Dissipation}

\begin{proposition}[Energy Dissipation]
\label{prop:energy_dissipation}
Consider the isolated dissipative system ($\bx_t = 0$). For a trajectory $t \mapsto \bs_t$ satisfying the dissipative Euler-Lagrange equation~\eqref{eq:dissipative_EL} with $\beta = 0$ and $\bx_t = 0$, the physical energy $E$ (defined as in~\eqref{eq:energy_definition}) evolves according to:
\begin{equation}
    d_t E = - \bzeta \, \dot{\bs}_t^\top \partial_{\dot{\bs}} L_0^{\mathrm{iso}}\,.
    \label{eq:energy_dissipation_general}
\end{equation}

\textbf{Quadratic kinetic energy case:} When the Lagrangian admits a decomposition $L_0^{\mathrm{iso}} = E_{\mathrm{kin}}(\dot{\bs}_t) - U_{\mathrm{int}}(\bs_t, \btheta)$ with quadratic kinetic energy $E_{\mathrm{kin}}(\dot{\bs}_t) = \frac{1}{2} \|\dot{\bs}_t\|^2$, we have $\partial_{\dot{\bs}} L_0^{\mathrm{iso}} = \dot{\bs}_t$, yielding:
\begin{equation}
    d_t E = - \bzeta \|\dot{\bs}_t\|^2 \leq 0\,.
    \label{eq:energy_dissipation}
\end{equation}
Since $\bzeta > 0$, energy is strictly dissipated whenever $\dot{\bs}_t \neq 0$.
\end{proposition}

\begin{proof}
Starting from the energy definition~\eqref{eq:energy_definition}:
\begin{equation*}
    E = \dot{\bs}_t^\top \partial_{\dot{\bs}} L_0^{\mathrm{iso}} - L_0^{\mathrm{iso}}\,.
\end{equation*}

Taking the time derivative:
\begin{align*}
    d_t E &= \ddot{\bs}_t^\top \partial_{\dot{\bs}} L_0^{\mathrm{iso}} + \dot{\bs}_t^\top d_t\left(\partial_{\dot{\bs}} L_0^{\mathrm{iso}}\right) - d_t L_0^{\mathrm{iso}}  \\
    &= \ddot{\bs}_t^\top \partial_{\dot{\bs}} L_0^{\mathrm{iso}} + \dot{\bs}_t^\top d_t\left(\partial_{\dot{\bs}} L_0^{\mathrm{iso}}\right) - \partial_{\bs} L_0^{\mathrm{iso}} \cdot \dot{\bs}_t - \partial_{\dot{\bs}} L_0^{\mathrm{iso}} \cdot \ddot{\bs}_t  \\
    &= \dot{\bs}_t^\top \left( d_t\partial_{\dot{\bs}} L_0^{\mathrm{iso}} - \partial_{\bs} L_0^{\mathrm{iso}} \right)\,,
\end{align*}
where the first two terms (with $\ddot{\bs}_t$) cancel, and the chain rule gives $d_t L_0^{\mathrm{iso}} = \partial_{\bs} L_0^{\mathrm{iso}} \cdot \dot{\bs}_t + \partial_{\dot{\bs}} L_0^{\mathrm{iso}} \cdot \ddot{\bs}_t$.

For the isolated system ($\bx_t = 0$) with $\beta = 0$, the dissipative Euler-Lagrange equation~\eqref{eq:dissipative_EL} reduces to:
\begin{equation*}
    \partial_{\bs} L_0^{\mathrm{iso}} - d_t\partial_{\dot{\bs}} L_0^{\mathrm{iso}} - \bzeta \, \partial_{\dot{\bs}} L_0^{\mathrm{iso}} = 0\,.
\end{equation*}

Rearranging:
\begin{equation*}
    d_t\partial_{\dot{\bs}} L_0^{\mathrm{iso}} - \partial_{\bs} L_0^{\mathrm{iso}} = -\bzeta \, \partial_{\dot{\bs}} L_0^{\mathrm{iso}}\,.
\end{equation*}

Substituting into the energy evolution expression:
\begin{equation*}
    d_t E = \dot{\bs}_t^\top \left( -\bzeta \, \partial_{\dot{\bs}} L_0^{\mathrm{iso}} \right) = - \bzeta \, \dot{\bs}_t^\top \partial_{\dot{\bs}} L_0^{\mathrm{iso}}\,.
\end{equation*}

This proves equation~\eqref{eq:energy_dissipation_general}.

For the quadratic kinetic energy case where $L_0^{\mathrm{iso}} = \frac{1}{2} \|\dot{\bs}_t\|^2 - U_{\mathrm{int}}(\bs_t, \btheta)$, we have:
\begin{equation*}
    \partial_{\dot{\bs}} L_0^{\mathrm{iso}} = \dot{\bs}_t\,.
\end{equation*}

Therefore:
\begin{equation*}
    d_t E = - \bzeta \, \dot{\bs}_t^\top \dot{\bs}_t = - \bzeta \|\dot{\bs}_t\|^2 \leq 0\,.
\end{equation*}

Since $\bzeta > 0$, this shows that energy is strictly dissipated (decreases) whenever $\dot{\bs}_t \neq 0$, confirming the physically expected behavior of a dissipative system.
\end{proof}

\section{Dissipative Harmonic Oscillators: Complete Derivation}
\label{appx:dissipative_oscillators}

This appendix provides the complete derivation of the dissipative harmonic oscillator system summarized in Table~\ref{tab:dissipative_summary} of Section~\ref{subsec:empirical}.

\subsection{Derivation of Free and Nudged Dynamics}

\paragraph{Lagrangian and dissipative formulation.}
The physical Lagrangian is given by equation~\eqref{eq:harmonic_lagrangian}:
\begin{equation*}
    L_0(\bs_t, \dot{\bs}_t, \btheta, x_t) = \frac{1}{2} (\bm{m} \odot \dot{\bs}_t) \cdot \dot{\bs}_t - \frac{1}{2} \bs_t^\top \bm{K} \bs_t - \bm{e}_1^\top \bs_t \, x_t\,,
\end{equation*}
where the kinetic energy uses the mass vector $\bm{m}$ with element-wise operations, and the potential energy uses the dense symmetric stiffness matrix $\bm{K}$ that couples all oscillators. The input coupling term $-\bm{e}_1^\top \bs_t \, x_t = -s_{1,t} x_t$ describes the external force acting on the first oscillator.

Following the dissipative Lagrangian formulation~\eqref{eq:dissipative_lagrangian}, we use a scalar damping coefficient $\bzeta > 0$. This gives the dissipative Lagrangian:
\begin{equation*}
    L^{\mathrm{diss}}_\beta(\bs_t, \dot{\bs}_t, \btheta, x_t, y_t) = \exp(\bzeta t) \cdot L_0(\bs_t, \dot{\bs}_t, \btheta, x_t) + \beta \, c(\bs_t, y_t)\,,
\end{equation*}
with cost function $c(\bs_t, y_t) = \frac{1}{2}(s_{d,t} - y_t)^2$, where $s_{d,t}$ denotes the $d$-th component of $\bs_t$ (the last oscillator).

\paragraph{Free dynamics ($\beta = 0$).}
Applying the dissipative Euler-Lagrange equation~\eqref{eq:dissipative_EL}, the free dynamics are:
\begin{align*}
    \partial_{\bs} L_0 - d_t \partial_{\dot{\bs}} L_0 - \bzeta \, \partial_{\dot{\bs}} L_0 &= \mathbf{0}\,.
\end{align*}
Computing the gradients:
\begin{align*}
    \partial_{\bs} L_0 &= -\bm{K} \bs_t - x_t \bm{e}_1, \qquad
    \partial_{\dot{\bs}} L_0 = \bm{m} \odot \dot{\bs}_t\,.
\end{align*}
Defining the element-wise damping vector $\bgamma := \zeta \bm{m} = (\zeta m_1, \ldots, \zeta m_d)^\top$, this yields the driven damped coupled harmonic oscillator equations:
\begin{equation*}
    \bm{m} \odot \ddot{\bs}_t + \bgamma \odot \dot{\bs}_t + \bm{K} \bs_t = -x_t \bm{e}_1\,.
\end{equation*}
This recovers the well-known damped harmonic oscillator equation with proportional damping (damping force proportional to mass with uniform coefficient $\zeta$).

\paragraph{Nudged dynamics ($\beta > 0$).}
With the cost function term acting on the last oscillator, applying the dissipative Euler-Lagrange equation gives:
\begin{equation*}
    \bm{m} \odot \ddot{\bs}^\beta_t + \bgamma \odot \dot{\bs}^\beta_t + \bm{K} \bs^\beta_t = -x_t \bm{e}_1 - \beta \, \exp(-\bzeta t) \, \bm{e}_d (s_{d,t}^\beta - y_t)\,,
\end{equation*}
where $\bm{e}_d = (0, \ldots, 0, 1)^\top$ selects the last oscillator where the cost is applied. Note the exponential factor $\exp(-\bzeta t)$ in the nudging term, which arises from the dissipative Lagrangian formulation and ensures that the nudging strength is properly weighted along the time-scaled trajectory.

\subsection{Time-Reversibility and PFVP Implementation}

As in Lagrangian EP, both the free and nudged phases are formulated as \emph{Parametric Final Value Problems} (PFVP), where the final conditions at time $T$ are parametrically determined by $\btheta$, while the initial conditions are fixed.

\textbf{Free phase:} The free dynamics are solved as a standard initial value problem, integrating forward in time from $t = 0$ to $t = T$ with fixed initial conditions $(\bs_0, \dot{\bs}_0) = (\balpha_0, \mathbf{0})$:
\begin{equation*}
    \bm{m} \odot \ddot{\bs}^0_t + \bgamma \odot \dot{\bs}^0_t + \bm{K} \bs^0_t = -x_t \bm{e}_1, \quad t \in [0, T]\,.
\end{equation*}
This yields the free trajectory and determines the final conditions $(\bs^0_T, \dot{\bs}^0_T)$.

\textbf{Nudged phase:} The nudged dynamics are formulated as a final value problem. To implement the PFVP condition that both free and nudged trajectories share the same final state, we solve the nudged dynamics \emph{backward in time} from $t = T$ to $t = 0$, starting from the final conditions $(\bs^\beta_T, \dot{\bs}^\beta_T) = (\bs^0_T, \dot{\bs}^0_T)$.

The critical implementation detail is given by the following proposition:

\begin{proposition}[Time-reversibility of dissipative PFVP]
\label{prop:dissipative_time_reversal}
Consider the dissipative dynamics with damping vector $\bgamma = \zeta \bm{m}$ (where $\zeta > 0$ is scalar), mass vector $\bm{m}$, and stiffness matrix $\bm{K}$:
\begin{equation*}
    \bm{m} \odot \ddot{\bs}_t + \bgamma \odot \dot{\bs}_t + \bm{K} \bs_t = \bm{f}(t)\,,
\end{equation*}
where $\bm{f}(t) \in \mathbb{R}^d$ is an external forcing term. The solution of the PFVP with final conditions $(\bs_T, \dot{\bs}_T)$ can be computed by integrating \emph{forward in time} $t' \in [0,T]$ the Initial Value Problem with velocity-reversed initial conditions $(\bs_T, -\dot{\bs}_T)$ where the dissipative term changes sign:
\begin{equation*}
    \bm{m} \odot \ddot{\bs}_{t'} - \bgamma \odot \dot{\bs}_{t'} + \bm{K} \bs_{t'} = \bm{f}(T - t'), \quad t' \in [0, T]\,,
\end{equation*}
with initial conditions $(\bs_0, \dot{\bs}_0) = (\bs_T, -\dot{\bs}_T)$. The PFVP solution at physical time $t$ is given by $\bs_t = \bs_{t'}$ where $t' = T - t$.
\end{proposition}

\begin{proof}
See Appendix~\ref{appx:dissipative_time_reversal}.
\end{proof}

Applying Proposition~\ref{prop:dissipative_time_reversal} to the nudged dynamics, we integrate \emph{forward in time} $t' \in [0, T]$ starting from the velocity-reversed final conditions $(\bs^\beta_T, -\dot{\bs}^\beta_T) = (\bs^0_T, -\dot{\bs}^0_T)$:
\begin{equation*}
    \bm{m} \odot \ddot{\bs}^\beta_{t'} - \bgamma \odot \dot{\bs}^\beta_{t'} + \bm{K} \bs^\beta_{t'} = -x_{T-t'} \bm{e}_1 - \beta \, \exp(-\bzeta (T-t')) \, \bm{e}_d (s_{d,t'}^\beta - y_{T-t'}), \quad t' \in [0, T]\,.
\end{equation*}
Crucially, this is an Initial Value Problem that is integrated \emph{forward} in integration time $t'$ from $0$ to $T$ (corresponding to physical time $t$ going backward from $T$ to $0$). The inputs $x_{T-t'}$ and targets $y_{T-t'}$ are fed in reverse temporal order: at integration time $t'$, we use the input and target from physical time $T-t'$.

\textbf{Physical interpretation:} The sign flip has a natural physical interpretation. When we run a dissipative system forward in time, energy is dissipated and the system loses energy through friction (see Eq.~\eqref{eq:energy_dissipation}, where the term $-\bzeta \|\dot{\bs}_t\|^2 < 0$ represents energy dissipation). When we run the nudge phase backward, the friction term must reverse its action—effectively \emph{adding} energy back into the system (as $-\bzeta$ becomes $+\bzeta$, making the term positive).

\subsection{Gradient Estimator with Fixed Initial Conditions}

For fixed initial conditions $\bs_0 = \balpha_0$ (independent of $\btheta$) and zero initial velocity $\dot{\bs}_0 = \mathbf{0}$, the gradient estimator from Theorem~\ref{thm:dissipative_LEP} simplifies. The boundary terms in~\eqref{eq:dissipative_PFVP_gradient} cancel because:
\begin{itemize}[leftmargin=*]
    \item At $t = 0$: The initial conditions are fixed ($\partial_{\btheta} \balpha_0 = \mathbf{0}$, $\partial_{\btheta} \bgamma_0 = \mathbf{0}$), so the boundary term involving $(\partial_{\btheta} \balpha_0)^\top$ vanishes. The term $\left( \dtheta \partial_{\dot{\bs}} L_0 \right)^\top (\bs^\beta_0 - \balpha_0)$ also vanishes since both trajectories start from the same initial position ($\bs^\beta_0 = \bs^0_0 = \balpha_0$).
    \item At $t = T$: With the PFVP formulation, the final conditions of both free and nudged trajectories are matched, so $(\bs^\beta_T - \bs^0_T) = \mathbf{0}$ and $(\dot{\bs}^\beta_T - \dot{\bs}^0_T) = \mathbf{0}$, eliminating any final boundary contributions.
\end{itemize}
Therefore, only the integral term remains:
\begin{equation*}
    \dtheta \mathcal{C}[\bs^0(\btheta)] = \lim_{\beta \to 0} \frac{1}{\beta} \int_0^T \left[ \partial_{\btheta} L_0(\bs^\beta_t, \dot{\bs}^\beta_t, \btheta, x_t) - \partial_{\btheta} L_0(\bs^0_t, \dot{\bs}^0_t, \btheta, x_t) \right] \exp(\bzeta t) \, \mathrm{d}t\,.
\end{equation*}

The parameter gradients of $L_0$ are:
\begin{align*}
    \partial_{m_i} L_0 &= \frac{1}{2} \dot{s}_{i,t}^2 \quad \text{(for each mass $i = 1, \ldots, d$)} \\
    \partial_{\bm{K}} L_0 &= -\frac{1}{2} \bs_t \bs_t^\top \quad \text{(yields a $d \times d$ matrix)} \\
    \partial_{\bzeta} L_0 &= t \cdot L_0(\bs_t, \dot{\bs}_t, \btheta, x_t)\,.
\end{align*}
The damping coefficient gradient involves the full Lagrangian weighted by time $t$, reflecting how damping affects the exponential time-weighting factor $\exp(\bzeta t)$ in the dissipative formulation.

This gradient estimator provides an unbiased estimate of $\dtheta \mathcal{C}$ by comparing the time-weighted Lagrangian along free and nudged trajectories, without requiring any boundary term corrections.

\subsection{Energy Evolution for Harmonic Oscillators}
\label{appx:harmonic_energy_evolution}

We derive the explicit energy evolution for the dissipative harmonic oscillator system. Following Section~\ref{sec:dissipative}, we define the physical energy with respect to the isolated Lagrangian $L_0^{\mathrm{iso}}$ (obtained by setting $x_t = 0$).

\paragraph{Physical energy definition.}
For the harmonic oscillator, the isolated Lagrangian is:
\begin{equation*}
    L_0^{\mathrm{iso}}(\bs_t, \dot{\bs}_t, \btheta) = \frac{1}{2} (\bm{m} \odot \dot{\bs}_t) \cdot \dot{\bs}_t - \frac{1}{2} \bs_t^\top \bm{K} \bs_t\,.
\end{equation*}
The physical energy $E$ (as defined in~\eqref{eq:energy_definition}) is:
\begin{equation*}
    E(t) = \dot{\bs}_t^\top \partial_{\dot{\bs}} L_0^{\mathrm{iso}} - L_0^{\mathrm{iso}} = \underbrace{\frac{1}{2} (\bm{m} \odot \dot{\bs}_t) \cdot \dot{\bs}_t}_{E_{\mathrm{kin}}(t)} + \underbrace{\frac{1}{2} \bs_t^\top \bm{K} \bs_t}_{U_{\mathrm{int}}(t)}\,.
\end{equation*}
This is the standard mechanical energy: kinetic energy plus internal potential energy.

\begin{proposition}[Energy evolution for dissipative harmonic oscillators]
\label{prop:harmonic_energy_evolution}
For the harmonic oscillator system with proportional damping $\bgamma = \bzeta \bm{m}$, the physical energy $E(t) = E_{\mathrm{kin}}(t) + U_{\mathrm{int}}(t)$ evolves as:
\begin{equation}
    E(t) = E(0) + \underbrace{\left(-\int_0^t \dot{s}_{1,\tau} \, x_\tau \, \mathrm{d}\tau\right)}_{W_{\mathrm{input}}(t)} - \underbrace{\int_0^t \bgamma \cdot \dot{\bs}_\tau^2 \, \mathrm{d}\tau}_{D_{\mathrm{diss}}(t)}\,,
    \label{eq:energy_free_harmonic}
\end{equation}
where $\dot{\bs}_\tau^2 = \dot{\bs}_\tau \odot \dot{\bs}_\tau$ denotes element-wise squaring.

Equivalently, using $E_{\mathrm{kin}}(\tau) = \frac{1}{2} (\bm{m} \odot \dot{\bs}_\tau) \cdot \dot{\bs}_\tau$ and $\bgamma = \bzeta \bm{m}$:
\begin{equation*}
    E(t) = E(0) + W_{\mathrm{input}}(t) - 2\bzeta \int_0^t E_{\mathrm{kin}}(\tau) \, \mathrm{d}\tau\,.
\end{equation*}

The energy contributions are:
\begin{itemize}[leftmargin=*]
    \item \textbf{Input work} $W_{\mathrm{input}}(t) = -\int_0^t \dot{s}_{1,\tau} \, x_\tau \, \mathrm{d}\tau$: Work done by the external force $F_{\mathrm{ext}} = -x_t$ on the first oscillator. This follows the standard mechanics formula: power = force $\times$ velocity $= (-x_t) \cdot \dot{s}_{1,t}$. Can be positive (energy injection) or negative (energy extraction) depending on the correlation between velocity $\dot{s}_{1,\tau}$ and force $-x_\tau$.
    \item \textbf{Dissipation} $D_{\mathrm{diss}}(t) = \int_0^t \bgamma \cdot \dot{\bs}_\tau^2 \, \mathrm{d}\tau = 2\bzeta \int_0^t E_{\mathrm{kin}}(\tau) \, \mathrm{d}\tau \geq 0$: Energy dissipated by friction, proportional to the time-integrated kinetic energy. Always removes energy.
\end{itemize}
\end{proposition}

\begin{proof}
We derive the energy evolution for the free phase ($\beta = 0$) from first principles.

\paragraph{Step 1: Energy definition.}
Following Section~\ref{sec:dissipative}, the physical energy is defined with respect to the isolated Lagrangian:
\begin{equation*}
    E = E_{\mathrm{kin}} + U_{\mathrm{int}} = \frac{1}{2} (\bm{m} \odot \dot{\bs}_t) \cdot \dot{\bs}_t + \frac{1}{2} \bs_t^\top \bm{K} \bs_t\,.
\end{equation*}

\paragraph{Step 2: Time derivative of $E$.}
Taking the total time derivative:
\begin{align}
    d_t E &= d_t E_{\mathrm{kin}} + d_t U_{\mathrm{int}} \nonumber \\
    &= (\bm{m} \odot \ddot{\bs}_t) \cdot \dot{\bs}_t + \bs_t^\top \bm{K} \dot{\bs}_t \nonumber \\
    &= \dot{\bs}_t^\top \left( \bm{m} \odot \ddot{\bs}_t + \bm{K} \bs_t \right)\,.
    \label{eq:dE_step2}
\end{align}

\paragraph{Step 3: Using the equations of motion.}
For the dissipative harmonic oscillator (free phase with $\beta = 0$), the equation of motion is:
\begin{equation*}
    \bm{m} \odot \ddot{\bs}_t + \bgamma \odot \dot{\bs}_t + \bm{K} \bs_t = -x_t \bm{e}_1\,.
\end{equation*}
Rearranging:
\begin{equation}
    \bm{m} \odot \ddot{\bs}_t + \bm{K} \bs_t = -x_t \bm{e}_1 - \bgamma \odot \dot{\bs}_t\,.
    \label{eq:eom_rearranged}
\end{equation}

\paragraph{Step 4: Final expression for $d_t E$.}
Substituting~\eqref{eq:eom_rearranged} into~\eqref{eq:dE_step2}:
\begin{align*}
    d_t E &= \dot{\bs}_t^\top \left( -x_t \bm{e}_1 - \bgamma \odot \dot{\bs}_t \right)  \\
    &= -x_t \, \dot{s}_{1,t} - \bgamma \cdot \dot{\bs}_t^2\,,
\end{align*}
where $\dot{\bs}_t^2 = \dot{\bs}_t \odot \dot{\bs}_t$ denotes element-wise squaring.

Equivalently, using $E_{\mathrm{kin}}(t) = \frac{1}{2} (\bm{m} \odot \dot{\bs}_t) \cdot \dot{\bs}_t$ and $\bgamma = \bzeta \bm{m}$:
\begin{equation}
    d_t E = -x_t \, \dot{s}_{1,t} - 2\bzeta \, E_{\mathrm{kin}}(t)\,.
    \label{eq:harmonic_dE}
\end{equation}

\paragraph{Physical interpretation.}
The energy $E = E_{\mathrm{kin}} + U_{\mathrm{int}}$ evolves with two power contributions:
\begin{itemize}[leftmargin=*]
    \item $P_{\mathrm{input}} = -x_t \, \dot{s}_{1,t} = F_{\mathrm{ext}} \cdot \dot{s}_{1,t}$: Power delivered by the external force $F_{\mathrm{ext}} = -x_t$ acting on the first oscillator. This follows the standard mechanics formula: power = force $\times$ velocity. When the force and velocity are aligned (same sign), energy is injected; when opposed, energy is extracted.
    \item $P_{\mathrm{diss}} = \bgamma \cdot \dot{\bs}_t^2 = 2\bzeta \, E_{\mathrm{kin}}(t) \geq 0$: Power dissipated by friction (always positive, always removes energy from the system).
\end{itemize}

Integrating~\eqref{eq:harmonic_dE} from $0$ to $t$ yields the energy evolution~\eqref{eq:energy_free_harmonic}:
\begin{equation*}
    E(t) - E(0) = -\int_0^t x_\tau \, \dot{s}_{1,\tau} \, \mathrm{d}\tau - \int_0^t \bgamma \cdot \dot{\bs}_\tau^2 \, \mathrm{d}\tau\,.
\end{equation*}

\paragraph{Special case: isolated system.}
When $x_t = 0$ (no external input), the energy evolution simplifies to:
\begin{equation*}
    d_t E = - \bgamma \cdot \dot{\bs}_t^2 = -2\bzeta \, E_{\mathrm{kin}}(t) \leq 0\,.
\end{equation*}
This confirms that dissipation always removes energy from the system, as stated in Proposition~\ref{prop:energy_dissipation}.
\end{proof}

\subsection{Proof of Proposition~\ref{prop:dissipative_time_reversal}: Time-Reversal for Dissipative Systems}
\label{appx:dissipative_time_reversal}

\begin{proof}[Proof of Proposition~\ref{prop:dissipative_time_reversal}]
Consider the dissipative dynamics in the forward time direction. For simplicity, we present the proof for a single component (the multi-dimensional case follows by applying the same argument component-wise):
\begin{equation}
    m \ddot{s}_t + \zeta m \dot{s}_t + k s_t = f(t), \quad t \in [0, T]\,,
    \label{eq:forward_time_proof}
\end{equation}
where $m$, $\zeta$, and $k$ are scalar parameters, and the damping is proportional to the mass.

To solve this equation backward in time from $t = T$ to $t = 0$ as a final value problem, we introduce the backward time parameter $t' = T - t$. As $t$ runs from $T$ to $0$, $t'$ runs from $0$ to $T$.

\paragraph{Change of variables.}
Under the substitution $t = T - t'$, we have:
\begin{align*}
    s_t = s_{T-t'} &\equiv \overset{\scriptstyle\leftarrow}{s}_{t'} \\
    d_t &= -d_{t'} \\
    d_t^2 &= d_{t'}^2\,.
\end{align*}

\paragraph{First derivative transformation.}
The first time derivative transforms as:
\begin{equation*}
    \dot{s}_t = d_t s_t = d_t \overset{\scriptstyle\leftarrow}{s}_{t'} = d_{t'} \overset{\scriptstyle\leftarrow}{s}_{t'} \cdot d_t t' = -d_{t'} \overset{\scriptstyle\leftarrow}{s}_{t'} = -\dot{\overset{\scriptstyle\leftarrow}{s}}_{t'}\,,
\end{equation*}
where $\dot{\overset{\scriptstyle\leftarrow}{s}}_{t'} := d_{t'} \overset{\scriptstyle\leftarrow}{s}_{t'}$.

\paragraph{Second derivative transformation.}
The second time derivative transforms as:
\begin{align*}
    \ddot{s}_t = d_t^2 s_t &= d_t(d_t s_t) = d_t(-\dot{\overset{\scriptstyle\leftarrow}{s}}_{t'}) \\
    &= -d_t \dot{\overset{\scriptstyle\leftarrow}{s}}_{t'} = -d_{t'} \dot{\overset{\scriptstyle\leftarrow}{s}}_{t'} \cdot d_t t' \\
    &= -\ddot{\overset{\scriptstyle\leftarrow}{s}}_{t'} \cdot (-1) = \ddot{\overset{\scriptstyle\leftarrow}{s}}_{t'}\,.
\end{align*}

\paragraph{Equation transformation.}
Substituting these transformations into~\eqref{eq:forward_time_proof}:
\begin{align*}
    m \ddot{\overset{\scriptstyle\leftarrow}{s}}_{t'} + \zeta m \left(-\dot{\overset{\scriptstyle\leftarrow}{s}}_{t'}\right) + k \overset{\scriptstyle\leftarrow}{s}_{t'} &= f(T - t') \\
    m \ddot{\overset{\scriptstyle\leftarrow}{s}}_{t'} - \zeta m \dot{\overset{\scriptstyle\leftarrow}{s}}_{t'} + k \overset{\scriptstyle\leftarrow}{s}_{t'} &= f(T - t'), \quad t' \in [0, T]\,.
\end{align*}

This establishes that the dissipative term $\zeta m \dot{s}_t$ changes sign to $-\zeta m \dot{\overset{\scriptstyle\leftarrow}{s}}_{t'}$ when we transform to backward time, while the second-order term $m \ddot{s}_t$ remains unchanged (since it involves an even number of time derivatives).

\paragraph{Extension to vector case and IVP formulation.}
For the multi-dimensional case with mass vector $\bm{m}$, damping vector $\bgamma = \zeta \bm{m}$ (where $\zeta > 0$ is scalar), and stiffness matrix $\bm{K}$, the same time-reversal transformation applies component-wise. Following the same derivation as above with $\bs_{\bwda,t'} := \bs_{T-t'}$, we obtain the time-reversed equation. For the actual IVP formulation, we denote the solution trajectory simply as $\bs_{t'}$ (dropping the tilde notation), which satisfies:
\begin{equation*}
    \bm{m} \odot \ddot{\bs}_{t'} - \bgamma \odot \dot{\bs}_{t'} + \bm{K} \bs_{t'} = \bm{f}(T - t'), \quad t' \in [0, T]\,.
\end{equation*}
Note that only the dissipative term (the damping force $\bgamma \odot \dot{\bs}_t$) changes sign, while the stiffness term $\bm{K}\bs_{t'}$ (a matrix-vector product) remains unchanged.

\paragraph{Physical interpretation.}
The sign change of the dissipative term under time reversal reflects the fact that dissipation is time-irreversible: in forward time, friction removes energy from the system ($-\gamma \dot{s}_t$ opposes the velocity), while in backward time, the effective friction must add energy back into the system to reconstruct trajectories consistent with the forward dynamics.

\paragraph{Velocity-reversed initial conditions.}
To solve the PFVP with final conditions $(\bs_T, \dot{\bs}_T)$, we use the IVP in the $t'$ time coordinate with initial conditions:
\begin{equation*}
    (\bs_0, \dot{\bs}_0) = (\bs_T, -\dot{\bs}_T)\,.
\end{equation*}
Note the crucial sign flip on the initial velocity vector. This ensures that when we integrate forward in $t'$ with the sign-flipped dissipative term, we reconstruct the trajectory that would have led to the desired final conditions in the original time coordinate $t$.
\end{proof}

\section{Computational Complexity Analysis of LEP Instantiations}
\label{app:complexity}

\subsection{Motivation}

Although the ultimate goal of Lagrangian Equilibrium Propagation is to enable learning in continuous-time physical systems, understanding the computational complexity requires analyzing discrete-time implementations. This analysis serves two purposes. First, it provides concrete complexity characterizations for numerical simulations, which remain the primary means of validating these algorithms. Second, it reveals the fundamental scaling properties that carry over to continuous-time implementations, where the number of time steps $N$ corresponds to the temporal resolution or duration of the physical process.

Throughout this appendix, we discretize the continuous-time dynamics using the simplest Euler integration scheme. While higher-order integrators may be preferred in practice for numerical stability, they do not change the asymptotic complexity with respect to the key parameters: sequence length $N$, state dimension $d_s$, and parameter count $d_\theta$. The choice of Euler integration thus provides a lower bound on computational cost while maintaining clarity of exposition.

\subsection{Setup and Notation}

We analyze the computational complexity of three instantiations of Lagrangian Equilibrium Propagation: CIVP, CBPVP, and PFVP/RHEL. For concreteness, we consider the Hopfield Lagrangian from Table~\ref{tab:ml_hamiltonians}:
\begin{equation*}
L_0(\bs, \dot{\bs}, \btheta, \bu) = \frac{1}{2}\dot{\bs}^\top \mathrm{diag}(\tau) \dot{\bs} - \frac{\alpha}{2}\|\bs\|^2 - b^\top \rho(\bs) - \frac{1}{2}\rho(\bs)^\top W \rho(\bs) - \rho(\bs)^\top B \rho(\bu)\,,
\end{equation*}
which yields the second-order dynamics:
\begin{equation*}
\mathrm{diag}(\tau)\ddot{\bs} = -\rho'(\bs) \odot \left(\alpha \bs + W\rho(\bs) + b + B\rho(\bu)\right)\,,
\end{equation*}
where $\tau \in \mathbb{R}^{d_s}_{>0}$ is a vector of learnable time constants, $\rho$ denotes a pointwise nonlinearity (e.g., $\tanh$), $W \in \mathbb{R}^{d_s \times d_s}$ is a symmetric weight matrix, $B \in \mathbb{R}^{d_s \times d_u}$ is the input coupling matrix, and $\odot$ denotes elementwise multiplication.

We adopt the following notation throughout. Let $N$ denote the number of discrete time steps, corresponding to the sequence length. If the continuous-time dynamics span duration $T$ and the integration step size is $\Delta t$, then $N = T / \Delta t$. Let $d_s$ denote the state dimension, where both position $\bs$ and velocity $\dot{\bs}$ have dimension $d_s$. Let $d_\theta$ denote the number of learnable parameters; for the Hopfield model, $d_\theta = \mathcal{O}(d_s^2)$ due to the dense matrices $W$ and $B$. For CBPVP, we additionally define $K$ as the number of iterations required for the boundary value problem solver to converge. If $T_{\text{gd}}$ denotes the total optimization time needed for convergence and $\Delta\tau$ is the step size in the artificial relaxation time $\tau$, then $K = T_{\text{gd}} / \Delta\tau$. Empirically, for systems related to Equilibrium Propagation, $K$ typically scales with the number of neurons and the number of layers in hierarchical architectures~\citep{scellierEquilibriumPropagationBridging2017}. In CBPVP, time is spatialized (Section~\ref{subsec:cfvp}), so each discrete time step can be understood as a single layer. Under this analogy, $d_s$ controls within-layer relaxation while $N$ controls between-layer signal propagation, suggesting that $K$ will generally grow with both $N$ and $d_s$.

We denote by $C_f$ the cost of one dynamics evaluation. For the Hopfield Lagrangian, each evaluation of the right-hand side $f(\bs, \dot{\bs}, \btheta, \bu) = -\mathrm{diag}(\tau)^{-1}\rho'(\bs) \odot (\alpha \bs + W\rho(\bs) + b + B\rho(\bu))$ requires computing the pointwise nonlinearity $\rho(\bs)$ in $\mathcal{O}(d_s)$ operations, the dense matrix-vector product $W\rho(\bs)$ in $\mathcal{O}(d_s^2)$ operations, and the input coupling $B\rho(\bu)$ in $\mathcal{O}(d_s \cdot d_u)$ operations. The diagonal scaling by $\mathrm{diag}(\tau)^{-1}$ adds only $\mathcal{O}(d_s)$ operations. The total cost is therefore $C_f = \mathcal{O}(d_s^2)$, dominated by the dense matrix-vector multiplication. For architectures with diagonal or sparse weight matrices, this reduces to $C_f = \mathcal{O}(d_s)$.

\subsection{CIVP (Constant Initial Value Problem)}
\label{app:complexity:civp}

The CIVP formulation is defined in Section~\ref{subsec:civp}. In CIVP, all trajectories share fixed initial conditions $(\bs_0, \dot{\bs}_0) = (\balpha, \bgamma)$ that are independent of both the parameters $\btheta$ and the nudging strength $\beta$. The free and nudged trajectories are computed by forward integration from this common initial state.

\paragraph{Dynamics computation.}

Both the free phase ($\beta = 0$) and nudged phase ($\beta > 0$) constitute initial value problems that can be solved by forward integration. Using Euler discretization, the update rule takes the form:
\begin{equation*}
\bs_{t+1} = 2\bs_t - \bs_{t-1} + \Delta t^2 \cdot f(\bs_t, \dot{\bs}_t, \btheta, \bx_t)\,.
\end{equation*}
Each time step requires one evaluation of the dynamics at cost $C_f = \mathcal{O}(d_s^2)$. With $N$ time steps and two phases (free and nudged), the total dynamics computation requires $\mathcal{O}(N \cdot d_s^2)$ operations. 

Regarding memory, the Euler integrator only requires access to the current and previous states to compute the next state. The dynamics computation therefore requires only $\mathcal{O}(d_s)$ memory.

\paragraph{Gradient computation.}

The CIVP gradient estimator, given by Corollary~\ref{corollary:civp_gradient}, takes the form:
\begin{equation}
\Delta^{\text{CIVP}}(\beta) = \frac{1}{\beta}\left[\int_0^T [\partial_{\btheta} L_\beta - \partial_{\btheta} L_0]\, dt + (\partial_{\dot{\bs}} L_\beta - \partial_{\dot{\bs}} L_0)^\top \partial_{\btheta} \bs_T^0 - (d_{\btheta} \partial_{\dot{\bs}} L_0)^\top (\bs_T^\beta - \bs_T^0)\right]\,.
\label{eq:civp_estimator_appendix}
\end{equation}
The problematic term is $\partial_{\btheta} \bs_T^0 \in \mathbb{R}^{d_s \times d_\theta}$, which represents the sensitivity of the final state with respect to all parameters. This full Jacobian can be computed via backpropagation through time (BPTT), but since BPTT computes the gradient of a scalar output, one must run $d_s$ separate backward passes---one for each component of $\bs_T^0$---to obtain the complete matrix.

BPTT proceeds by first storing the entire forward trajectory $\{\bs_t : t = 0, \ldots, N\}$, then executing backward passes to accumulate gradients. Each backward pass has the same computational structure as the forward pass, requiring $\mathcal{O}(N \cdot d_s^2)$ operations, so computing the full Jacobian costs $\mathcal{O}(d_s \cdot N \cdot d_s^2) = \mathcal{O}(N \cdot d_s^3)$ operations. Moreover, BPTT necessitates storing all intermediate states to enable the backward passes, resulting in $\mathcal{O}(N \cdot d_s)$ memory consumption.

The remaining terms in Eq.~\eqref{eq:civp_estimator_appendix} are as follows. The integral term $\int_0^T [\partial_{\btheta} L_\beta - \partial_{\btheta} L_0]\, dt$ requires $\mathcal{O}(N \cdot d_\theta)$ operations and can be accumulated during the two forward passes by maintaining two running sums:
\begin{align*}
\text{acc}_{\text{free}} &\leftarrow \text{acc}_{\text{free}} + \partial_{\btheta} L_0(\bs_t^0, \dot{\bs}_t^0, \btheta) \cdot \Delta t \\
\text{acc}_{\text{nudged}} &\leftarrow \text{acc}_{\text{nudged}} + \partial_{\btheta} L_\beta(\bs_t^\beta, \dot{\bs}_t^\beta, \btheta) \cdot \Delta t\,.
\end{align*}
Each $\partial_{\btheta} L$ evaluation is performed once and immediately accumulated, requiring no trajectory storage for this term. The difference $(\partial_{\dot{\bs}} L_\beta - \partial_{\dot{\bs}} L_0)$ and the state difference $(\bs_T^\beta - \bs_T^0)$ are both $\mathcal{O}(d_s)$ to compute. However, the term $d_{\btheta} \partial_{\dot{\bs}} L_0$ is equally problematic: by the chain rule, $d_{\btheta} \partial_{\dot{\bs}} L_0 = \partial^2_{\dot{\bs}, \dot{\bs}} L_0 \cdot d_{\btheta} \dot{\bs}_T^0 + \partial^2_{\btheta, \dot{\bs}} L_0$, which involves the Jacobian $d_{\btheta} \dot{\bs}_T^0 \in \mathbb{R}^{d_s \times d_\theta}$---the sensitivity of the final velocity to all parameters. Computing this Jacobian incurs the same $\mathcal{O}(N \cdot d_s^3)$ cost as $\partial_{\btheta} \bs_T^0$.

This memory cost, which scales linearly with the sequence length $N$, constitutes the fundamental limitation of CIVP. It negates the primary advantage of Equilibrium Propagation, which aims to avoid storing trajectories for gradient computation.

\paragraph{Forward-only property.}

CIVP is not forward-only. The gradient computation requires an explicit backward pass through the stored computational graph. The system cannot compute gradients by running forward dynamics alone; it must differentiate through the ODE solver, necessitating either trajectory storage with backpropagation or forward propagation of a $d_s \times d_\theta$ Jacobian at each step (the RTRL algorithm, which incurs even greater time complexity).

\subsection{CBPVP (Constant Boundary Position Value Problem)}

The CBPVP formulation is defined in Section~\ref{subsec:cfvp}. In CBPVP, all trajectories satisfy fixed position boundary conditions at both temporal endpoints: $\bs_0 = \balpha$ and $\bs_T = \bgamma$, independent of $\btheta$ and $\beta$. The velocities $\dot{\bs}_0$ and $\dot{\bs}_T$ remain free to vary.

\paragraph{Dynamics computation.}

Unlike CIVP, the CBPVP formulation defines a two-point boundary value problem (BVP) that cannot be solved by simple forward integration. Instead, one solves it via gradient descent on the action functional, as described in Eq.~26:
\begin{equation*}
\partial_\tau \bs_t = -\delta_{\bs} \mathcal{A}_\beta = -\text{EL}(\bs_{t-1}, \bs_t, \bs_{t+1}, \btheta, \beta), \quad t = 1, \ldots, N-1\,,
\end{equation*}
with fixed boundaries $\bs_0 = \balpha$ and $\bs_T = \bs_N = \bgamma$. Here $\tau$ represents an artificial relaxation time, while the physical time $t$ becomes a spatial index. The procedure initializes a trajectory guess satisfying the boundary conditions, then iteratively updates the interior points according to the Euler-Lagrange residual until convergence.

Each relaxation iteration requires evaluating the Euler-Lagrange expression at all $N$ time points, with each evaluation costing $\mathcal{O}(C_f) = \mathcal{O}(d_s^2)$. A single iteration therefore costs $\mathcal{O}(N \cdot d_s^2)$. Convergence typically requires $K$ iterations, where $K$ depends on the problem conditioning and initialization quality. The total dynamics computation thus requires $\mathcal{O}(K \cdot N \cdot d_s^2)$ operations.

The iterative nature of the BVP solver requires storing the entire trajectory $\{\bs_t : t = 0, \ldots, N\}$ simultaneously, as all points are updated together in each iteration. The dynamics memory is therefore $\mathcal{O}(N \cdot d_s)$.

\paragraph{Gradient computation.}

The CBPVP gradient estimator, given by Corollary 2 (Eq.~24), simplifies considerably:
\begin{equation*}
\Delta^{\text{CBPVP}}(\beta) = \frac{1}{\beta}\int_0^T [\partial_{\btheta} L_\beta - \partial_{\btheta} L_0]\, dt\,.
\end{equation*}
The boundary residuals vanish entirely because both endpoint positions are fixed. This is the principal advantage of the CBPVP formulation.

Computing this estimator requires evaluating the Lagrangian parameter derivatives $\partial_{\btheta} L$ at each of the $N$ time points along both converged trajectories, after the $K$ relaxation iterations have completed. For the Hopfield model, the dominant cost arises from $\partial_W L$, which involves outer products of dimension $d_s \times d_s$. Since $d_\theta = \mathcal{O}(d_s^2)$, the gradient computation requires $\mathcal{O}(N \cdot d_\theta)$ operations.

The gradient estimator only requires accumulating a running sum of dimension $d_\theta$, resulting in $\mathcal{O}(d_\theta)$ memory for the gradient computation.

\paragraph{Forward-only property.}

CBPVP is forward-only in the sense that no backward pass through a computational graph is required. The gradient estimator does not require computing complex boundary residuals and the iterative solver only requires forward dynamics evaluations. However the iterative solver is much more expensive than the forward dynamics, requiring $K$ iterations and $\mathcal{O}(N \cdot d_s)$ memory to store all time points simultaneously. These constraints preclude online or streaming processing of temporal sequences.

\subsection{PFVP/RHEL (Parametric Final Value Problem)}

The PFVP formulation is introduced in Section~\ref{subsubsec:pfvp-definition} and its equivalence to RHEL (Section~\ref{sec:rhel}) is established in Section~\ref{sec:rhel_is_ep}. In PFVP, the nudged trajectory shares its final conditions with the free trajectory's final state, but with reversed velocity: $(\bs_{\bwda,T}^\beta, \dot{\bs}_{\bwda,T}^\beta) = (\bs_T^0, -\dot{\bs}_T^0)$. These boundary conditions depend on $\btheta$ through the free trajectory, which distinguishes PFVP from the constant boundary conditions of CIVP and CBPVP.

\paragraph{Key insight: exploiting reversibility.}

For time-reversible Lagrangians satisfying $L(\bs, \dot{\bs}) = L(\bs, -\dot{\bs})$, Proposition 2 establishes that the final value problem can be converted to an initial value problem:
\begin{equation*}
\bs_{\bwda,t}^\beta(\btheta, (\bs_T^0, \dot{\bs}_T^0)) = \bs_{T-t}^\beta(\btheta, (\bs_T^0, -\dot{\bs}_T^0))\,.
\end{equation*}
Rather than solving a difficult final value problem, one simply performs forward integration from the velocity-reversed final state while playing the input sequence backward. This transformation is the key enabler of PFVP's computational efficiency.

\paragraph{Dynamics computation.}

The free phase proceeds by standard forward integration from initial conditions $(\balpha, \bgamma)$ over the interval $t \in [0, T]$, storing only the final state $(\bs_T^0, \dot{\bs}_T^0)$ upon completion. This requires $\mathcal{O}(N \cdot d_s^2)$ operations.

The echo phase initializes from $(\bs_T^0, -\dot{\bs}_T^0)$ and integrates forward over $t \in [0, T]$, using the time-reversed input sequence $\bx_{T-t}$ and targets $\by_{T-t}$. This also requires $\mathcal{O}(N \cdot d_s^2)$ operations.

The total dynamics computation is therefore $\mathcal{O}(N \cdot d_s^2)$, identical to CIVP. Note, however, that the two phases must be executed sequentially: the echo phase requires the final state $(\bs_T^0, \dot{\bs}_T^0)$ from the free phase to initialize its dynamics. This contrasts with CIVP, where the free and nudged trajectories are independent initial value problems that can be computed in parallel. 

Regarding memory, each phase requires only the current state for the Euler integrator, consuming $\mathcal{O}(d_s)$ memory. The only additional storage is the final state of the free phase, which is needed to initialize the echo phase, also $\mathcal{O}(d_s)$. The dynamics memory is therefore $\mathcal{O}(d_s)$, independent of the sequence length $N$.

\paragraph{Gradient computation.}

The PFVP gradient estimator, given by Theorem 3 (Eq.~43), takes the form:
\begin{equation*}
\Delta^{\text{PFVP}}(\beta) = \frac{1}{\beta}\left[\int_0^T [\partial_{\btheta} L_\beta - \partial_{\btheta} L_0]\, dt + (d_{\btheta} \partial_{\dot{\bs}} L_0)^\top (\bs_{\bwda,0}^\beta - \balpha) - (\partial_{\btheta} \balpha)^\top (\partial_{\dot{\bs}} L_\beta - \partial_{\dot{\bs}} L_0)\right]\,.
\end{equation*}

Unlike CIVP, no trajectory sensitivities such as $\partial_{\btheta} \bs_T^0$ appear in this estimator, which eliminates the need for backpropagation.

As in CIVP, the integral term can be computed by maintaining two accumulators that are updated at each time step during the forward integration, requiring $\mathcal{O}(N \cdot d_\theta)$ operations. The boundary terms are computed as follows: the state difference $(\bs_{\bwda,0}^\beta - \balpha)$ and momentum difference $(\partial_{\dot{\bs}} L_\beta - \partial_{\dot{\bs}} L_0)$ are both $\mathcal{O}(d_s)$ to compute. When $\partial_{\btheta} \balpha = 0$, the second boundary term vanishes. The first boundary term $(d_{\btheta} \partial_{\dot{\bs}} L_0)^\top (\bs_{\bwda,0}^\beta - \balpha)$ involves $d_{\btheta} \partial_{\dot{\bs}} L_0 = \partial^2_{\btheta, \dot{\bs}} L_0$, which for the Hopfield model is $\mathcal{O}(d_s \times d_\theta)$; however, the contraction with the $\mathcal{O}(d_s)$ vector $(\bs_{\bwda,0}^\beta - \balpha)$ yields an $\mathcal{O}(d_\theta)$ result. For the Hopfield Lagrangian where $L = \frac{1}{2}\dot{\bs}^\top \mathrm{diag}(\tau) \dot{\bs} - V(\bs, \btheta)$, we have $\partial_{\dot{\bs}} L = \mathrm{diag}(\tau)\dot{\bs}$. The only $\btheta$-dependence is through $\tau$, so $\partial^2_{\tau, \dot{\bs}} L = \mathrm{diag}(\dot{\bs})$, which is diagonal and $\mathcal{O}(d_s)$. The contraction $(\partial^2_{\tau, \dot{\bs}} L)^\top (\bs_{\bwda,0}^\beta - \balpha) = \dot{\bs} \odot (\bs_{\bwda,0}^\beta - \balpha)$ is therefore $\mathcal{O}(d_s)$ to compute.

The total gradient computation requires $\mathcal{O}(N \cdot d_\theta)$ operations, dominated by the integral term. The memory requirement is $\mathcal{O}(d_\theta)$ for the two accumulators plus $\mathcal{O}(d_s)$ for the boundary quantities.

\paragraph{Forward-only property.}

PFVP/RHEL satisfies the forward-only property in the strongest sense. Both the free and echo phases consist of pure forward integration. No iterative solving is required, in contrast to CBPVP. No backward pass through a computational graph is needed, in contrast to CIVP. The gradients are computed from Lagrangian derivatives accumulated along the trajectories during the forward passes.

It is important to note that the echo phase is not a backward pass in the algorithmic sense. It is a forward pass with reversed initial velocity and reversed input sequence. The physical system runs forward in time during both phases. This property makes PFVP/RHEL uniquely suitable for implementation in physical hardware, where information propagates forward through the system's natural dynamics.

\subsection{Summary}

The analysis reveals a clear hierarchy among the three instantiations, as summarized in Table~\ref{tab:complexity} in the main text. CIVP achieves efficient dynamics computation but requires BPTT for gradient estimation, incurring $\mathcal{O}(N \cdot d_s)$ memory to store the trajectory and $\mathcal{O}(N \cdot d_s ^3)$ in time complexity.

CBPVP eliminates the boundary residuals entirely, yielding a clean gradient estimator, and maintains the forward-only property. However, solving the boundary value problem requires $K$ iterations and $\mathcal{O}(N \cdot d_s)$ memory to store all time points simultaneously. These constraints preclude online or streaming processing and may result in slow convergence for challenging problems.

PFVP/RHEL achieves optimal scaling across all metrics. The dynamics computation matches CIVP's efficiency through pure forward integration. The gradient computation requires only local Lagrangian derivatives accumulated during the forward passes. Both time and memory complexities are independent of the sequence length $N$ in terms of trajectory storage, with memory scaling only as $\mathcal{O}(d_s + d_\theta)$. These properties make PFVP/RHEL the only instantiation suitable for online learning in physical systems where memory and the ability to process streaming data are fundamental constraints.

\section{Experimental Details}
\label{appx:experimental_details}

\subsection{Hopfield-Inspired System (Figure~\ref{fig:empirical_validation})}

This experiment trains a $d=6$ Hopfield-inspired system over 100 epochs in a teacher-student setup. Both RHEL and LEP training runs use the Adam optimizer with learning rate $\eta = 0.005$, nudging strength $\beta = 0.01$, Euler integration with $\dt = 0.001$, total duration $T = 10$, and random seed $50$. Gradients are saved every 10 epochs.

\paragraph{Weight initialization.} The symmetric weight matrix $\bm{W} \in \mathbb{R}^{d \times d}$ is initialized via QR decomposition with controlled eigenvalues. A random matrix is drawn from $\mathcal{N}(0,1)$ and its QR factorization yields an orthogonal matrix $\bm{U}$. A diagonal matrix $\bm{S} = \mathrm{diag}(\bm{\lambda})$ is formed with eigenvalues $\lambda_i \sim \mathrm{Uniform}(0.1, 1.0)$. The weight matrix is then constructed as $\bm{W} = \bm{U} \bm{S} \bm{U}^\top$, ensuring symmetry and bounded eigenvalues for stable dynamics.

\paragraph{Time constants.} Each $\tau_i$ is sampled independently from $\mathrm{Uniform}(0.5, 1.0)$.

\paragraph{Initial conditions.} The Hamiltonian initial conditions are fixed:
\begin{align*}
    \bs_0 &= (0.10,\; 0.15,\; 0.08,\; 0.12,\; 0.10,\; 0.11)^\top\,, \\
    \bp_0 &= (0.0,\; 0.4,\; -0.6,\; 0.45,\; 0.5,\; 0.0)^\top\,.
\end{align*}
In the LEP training run, the Lagrangian initial velocity is $\dot{\bs}_0 = \mathrm{diag}(\bm{\tau})^{-1} \bp_0$, which changes across training epochs as $\bm{\tau}$ evolves.

\paragraph{Input signal.} The input $x_t$ is a superposition of $n_{\text{waves}} = 10$ random sine waves injected into neuron 0 (with input scaling $1.0$). The activation function is $\rho = \tanh$.

\subsection{Dissipative Harmonic Oscillators (Figure~\ref{fig:dissipative_oscillators})}

This experiment validates the dissipative LEP gradient estimator on a $d=6$ system of coupled damped harmonic oscillators. No training is performed: the gradient comparison is computed at a single epoch from the randomly initialized parameters, comparing the dissipative LEP gradient estimate against the autodiff/BPTT ground truth. Integration uses $\dt = 0.001$, total duration $T = 10$, nudging strength $\beta = 0.01$, and random seed $50$.

\paragraph{Mass and stiffness initialization.} Masses $m_i$ are sampled from $\mathrm{Uniform}(0.8, 1.2)$. The stiffness matrix $\bm{K}$ is constructed to be symmetric positive semi-definite: diagonal self-coupling terms are scaled by $0.5$, off-diagonal coupling terms by $1.0$, and the matrix is symmetrized via $\bm{K} = \frac{1}{2}(\bm{K} + \bm{K}^\top)$, with diagonal entries adjusted to include the row sums of the coupling matrix.

\paragraph{Dissipation.} The damping coefficient is $\bzeta = 0.2$, giving per-dimension damping forces $\gamma_i = \bzeta \cdot m_i$.

\paragraph{Initial conditions.} All positions are initialized to $s_{i,0} = 1.0$ and all velocities to $\dot{s}_{i,0} = 0$, ensuring that boundary terms in the gradient estimator vanish (Remark~\ref{remark:zero_initial_momentum}).

\paragraph{Input signal.} The external drive is injected into oscillator 1 with input scaling $5.0$. The output is measured from oscillator $d$ (the last one). The cost function is $c(\bs_t, y_t) = \frac{1}{2}(s_{d,t} - y_t)^2$.

\section{Generalization to Anisotropic Damping}
\label{app:anisotropic}

In Section~\ref{sec:dissipative}, we introduced the dissipative Lagrangian $L^{\mathrm{diss}}_\beta = L_0 \cdot \exp(\zeta t)$ with a \emph{scalar} damping coefficient $\zeta > 0$, which produces uniform proportional damping $\bm{m} \odot \ddot{\bs}_t + \zeta \, \bm{m} \odot \dot{\bs}_t + \bm{K} \bs_t = \bm{f}(t)$ where all oscillators share the same damping ratio. Here we present a generalization that allows \emph{anisotropic} (per-dimension) damping rates while maintaining a variational structure.

\subsection{Anisotropic Exponential Integrating-Factor Lagrangian}

Let $\bs(t) \in \mathbb{R}^d$, $\bm{m} = (m_1, \ldots, m_d)^\top \in \mathbb{R}^d$ be the mass vector, and define the per-dimension damping coefficients $\bm{\gamma} = (\gamma_1, \ldots, \gamma_d)^\top \in \mathbb{R}^d$ with $\gamma_i > 0$ (not necessarily equal). Define the elementwise exponential:
\begin{equation*}
    \bm{e}(t) := \exp(\bm{\gamma} \odot t) = (e^{\gamma_1 t}, \ldots, e^{\gamma_d t})^\top \in \mathbb{R}^d\,,
\end{equation*}
where $\odot$ denotes elementwise multiplication.

Pick \emph{any symmetric matrix function} $B(t) = B(t)^\top \in \mathbb{R}^{d \times d}$. The time-dependent Lagrangian is:
\begin{equation*}
    L(t, \bs, \dot{\bs}) = \frac{1}{2} \sum_{i=1}^d e^{\gamma_i t} m_i \dot{s}_i^2 - \frac{1}{2} \bs^\top B(t) \bs\,.
\end{equation*}

\paragraph{Euler-Lagrange equations.}
For each dimension $i$, we have $\partial_{\dot{s}_i} L = e^{\gamma_i t} m_i \dot{s}_i$ and $\frac{d}{dt}(e^{\gamma_i t}) = \gamma_i e^{\gamma_i t}$. The Euler-Lagrange equation gives:
\begin{equation*}
    \frac{d}{dt}\big( e^{\gamma_i t} m_i \dot{s}_i \big) + [B(t) \bs]_i = 0
    \quad\Longleftrightarrow\quad
    e^{\gamma_i t} m_i \ddot{s}_i + \gamma_i e^{\gamma_i t} m_i \dot{s}_i + [B(t) \bs]_i = 0\,.
\end{equation*}
Dividing by $e^{\gamma_i t}$ and writing in vector form:
\begin{equation*}
    \bm{m} \odot \ddot{\bs}_t + \bm{\gamma} \odot \bm{m} \odot \dot{\bs}_t + \bm{k}_{\mathrm{eff}}(t) = \bm{0}, \qquad \bm{k}_{\mathrm{eff}}(t) := \exp(-\bm{\gamma} \odot t) \odot (B(t) \bs_t)\,.
\end{equation*}

Thus, \emph{anisotropic damping} with per-dimension damping rates $\gamma_i$ is generated exactly. The price is an induced, generally \emph{time-varying}, effective force $\bm{k}_{\mathrm{eff}}(t)$ determined by the choice of $B(t)$.

\subsection{Physical Interpretation: Time-Varying Coupling}

The effective force $\bm{k}_{\mathrm{eff}}(t) = \exp(-\bm{\gamma} \odot t) \odot (B(t) \bs_t)$ has a natural interpretation: \emph{the coupling between oscillators switches off with time}. Each oscillator $i$ has its own exponential decay coefficient $e^{-\gamma_i t}$, so the coupling from oscillator $i$ to the rest of the network decays according to \emph{its own damping rate} $\gamma_i$. Different oscillators can "disconnect" from the network at different rates, leading to time-dependent coupling encoded in $\bm{k}_{\mathrm{eff}}(t)$.

If one wishes the physical coupling to remain \emph{time-independent} in the sense that $\bm{k}_{\mathrm{eff}}(t) = \bm{K} \bs_t$ for some constant matrix $\bm{K}$, one must choose $B(t)$ such that $\exp(-\bm{\gamma} \odot t) \odot (B(t) \bs_t) = \bm{K} \bs_t$ for all $\bs_t$. This requires $B(t)_{ij} = e^{(\gamma_i + \gamma_j)t/2} K_{ij}$ (assuming a symmetric construction). However, for $B(t)$ to be symmetric (as required for a proper mechanical potential), we need additional structure.

The simplest cases where time-independent coupling is achievable are:
\begin{itemize}[leftmargin=*]
    \item All $\gamma_i$ equal (scalar damping) — this is the case in Section~\ref{sec:dissipative};
    \item $\bm{K}$ is diagonal (uncoupled oscillators);
    \item Special damping structures where the per-dimension rates align with the coupling structure.
\end{itemize}

When these conditions fail (generic coupling with different $\gamma_i$), maintaining time-independent physical coupling within the variational framework is not possible—one must either restrict the damping structure or accept time-varying $\bm{k}_{\mathrm{eff}}(t)$ in the learning dynamics.

\subsection{Comparison with Alternative Approaches}

One might consider using Rayleigh dissipation functions $\mathcal{R}(\dot{\bs}) = \frac{1}{2} \sum_{ij} C_{ij} \dot{s}_i \dot{s}_j$ (separate from the Lagrangian $L$), which handle arbitrary damping matrices elegantly in classical mechanics via the modified Euler-Lagrange equation $d_t \partial_{\dot{\bs}} L - \partial_{\bs} L + \partial_{\dot{\bs}} \mathcal{R} = 0$. However, this approach is \emph{incompatible} with the variational gradient estimator framework presented in this work (Theorem~\ref{thm:dissipative_LEP}), which requires all system dynamics to be encoded within the Lagrangian $L^{\mathrm{diss}}_\beta$ itself. The gradient estimator depends on $\partial_{\btheta} L_0$, not on a separate dissipation function.

More broadly, one could perform gradient descent directly on the action functional. However, as discussed in the CBPVP instantiation (Section~\ref{sec:rhel_is_ep}), the \emph{converging phase} of such optimization is dissipative (evolving in the artificial relaxation time $\tau$), while the \emph{fixed Hamiltonian system} implemented after convergence corresponds to a non-dissipative system on the physical time axis. The value of maintaining a \emph{variational principle within the Lagrangian itself} is that it guides the construction of learning algorithms systematically, enabling principled extensions like the dissipative LEP framework, rather than relying on ad-hoc inspired guesses as was done in prior work (e.g., RHEL before its variational foundation was established in Theorem~\ref{thm:lep-rhel-equivalence}).

\section{Unconstrained Action Minimization}
\label{app:unconstrained-action}

In the main text (Section~\ref{subsec:instantiation}), we noted that if one is willing to accept iterative optimization rather than forward Euler-Lagrange integration, boundary conditions need not be imposed at all. We elaborate on this observation here.

Consider minimizing the action functional without any boundary constraints:
\begin{align}
    \bs^\beta = \arg\min_{\bs} A_\beta[\bs]\,.
    \label{eq:unconstrained-action}
\end{align}
Since the initial and final values $\bs_0^\beta$ and $\bs_T^\beta$ are determined implicitly as part of the optimization, the variational principle is no longer partial: the boundary terms in the first variation of the action vanish by the natural boundary conditions (which require $\partial_{\dot{\bs}} L_\beta = 0$ at both endpoints). Consequently, boundary residuals vanish entirely in Theorem~\ref{thm:general-ep}, and the gradient estimator reduces to the simple form of CBPVP (Eq.~\eqref{eq:CBPVP_gradient}).

However, this formulation inherits the same non-causal drawbacks as CBPVP---and is in fact more expensive. In CBPVP, the $2d_s$ boundary values $(\boldsymbol{\alpha}_0, \boldsymbol{\alpha}_T)$ are fixed, so the optimization runs over the interior of the trajectory: a space of dimension $(N-2) \times d_s$. In the unconstrained formulation, the full trajectory including its endpoints becomes part of the optimization, increasing the search space to $N \times d_s$. The iterative solver cost remains $\mathcal{O}(K N d_s^2)$ with a potentially larger $K$ due to the additional degrees of freedom.

In summary, unconstrained action minimization yields a ``perfect'' variational formulation---analogous to standard EP---where the gradient estimator is free of boundary residuals. Yet this comes at the price of a non-causal, iterative computation that is at least as expensive as CBPVP, making it equally impractical for forward-only hardware implementations.

\end{document}